\documentclass[12pt]{article}
\usepackage{amsmath}
\usepackage{graphicx,psfrag,epsf}
\usepackage{enumerate}

\usepackage{natbib}
\usepackage{url} 

\usepackage{amsmath}
\usepackage{amsfonts}
\usepackage{amssymb}
\usepackage{amsthm}
\usepackage{dsfont}
\usepackage[T1]{fontenc}
\usepackage[latin1]{inputenc}
\usepackage{epsfig}
\usepackage{graphicx}
\usepackage{multirow}

\usepackage{tikz}
\usetikzlibrary{matrix,chains,positioning,decorations.pathreplacing,arrows}
\usetikzlibrary{shapes,snakes}
\usetikzlibrary{shapes,arrows}

\usepackage[boxed, lined]{algorithm2e}
\usepackage{float}
\usepackage{caption}
\usepackage{comment}

\newcommand{\blind}{1}

\addtolength{\oddsidemargin}{-.5in}%
\addtolength{\evensidemargin}{-.5in}%
\addtolength{\textwidth}{1in}%
\addtolength{\textheight}{-.3in}%
\addtolength{\topmargin}{-.8in}%

\newcommand{\D}{{\mathcal{D}}}

\newcommand{\B}{{\mathcal{B}}}
\newcommand{\Nu}{{\mathcal{N}}}

\newcommand{\N}{\mathbb{N}}

\newcommand{\R}{\mathbb{R}}
\newcommand{\Z}{\mathbb{Z}}
\newcommand{\Rd}{\mathbb{R}^d}

\newcommand{\beq}{\begin{eqnarray*}}

\newcommand{\eeq}{\end{eqnarray*}}

\newcommand{\beqm}{\begin{eqnarray}}

\newcommand{\eeqm}{\end{eqnarray}}

\newtheorem{theorem}{Theorem}
\newtheorem{corollary}{Corollary}
\newtheorem{lemma}{Lemma}
\newtheorem{definition}{Definition}
\newtheorem{remark}{Remark}

\newcommand{\EXP}{{\mathbf E}}
\newcommand{\PROB}{{\mathbf P}}

\renewcommand{\P}{{\cal P}}
\renewcommand{\bf}{\normalfont \bfseries}
\renewcommand{\it}{\normalfont \itshape}

\newcommand{\F}{{\cal F}}

\allowdisplaybreaks

\begin{document}

\def\spacingset#1{\renewcommand{\baselinestretch}%
{#1}\small\normalsize} \spacingset{1}


\if1\blind
{
  \title{\bf Estimation of a function of low local dimensionality by deep neural networks}
  \author{Michael Kohler\thanks{
  Funded by the Deutsche Forschungsgemeinschaft (DFG, German
Research Foundation)  - Projektnummer 57157498 - SFB 805.}\hspace{.2cm}\\
    Fachbereich Mathematik, Technische Universit\"at Darmstadt,\\
    Adam Krzy\.zak\thanks{
    The author gratefully acknowledge the support from the Natural Sciences and Engineering Research Council of Canada under grant RGPIW-2015-06412.}\hspace{.2cm}\\
    Department of Computer Science and Software Engineering, \\
    Concordia University
    \\
    and\\
    Sophie Langer$^{*}$\\
    Fachbereich Mathematik, Technische Universit\"at Darmstadt
    }
  \maketitle
} \fi

\if0\blind
{
  \bigskip
  \bigskip
  \bigskip
  \begin{center}
    {\LARGE\bf  Estimation of a function of low local dimensionality by deep neural networks}
\end{center}
  \medskip
} \fi

\bigskip
\begin{abstract}
Deep neural networks (DNNs) achieve impressive results for complicated tasks like object detection on images and speech recognition. Motivated by this practical success, there is now a strong interest in showing good theoretical properties of DNNs. To describe for which tasks DNNs perform well and when they fail, it is a key challenge to understand their performance. The aim of this paper is to contribute to the current statistical theory of DNNs.\\
We apply DNNs on high dimensional data and we show that the least squares regression estimates using DNNs are able to achieve dimensionality reduction in case that the regression function has locally low dimensionality. Consequently, the rate of convergence of the estimate does not depend on its input dimension $d$, but on its local dimension $d^*$ and the DNNs are able to circumvent the curse of dimensionality in case that $d^*$ is much smaller than $d$. In our simulation study we provide numerical experiments to support our theoretical result and we compare our estimate with other conventional nonparametric regression estimates. The performance of our estimates is also validated in experiments with real data. 

\end{abstract}
\noindent%
{\it Keywords:}  curse of dimensionality,
deep neural networks,
nonparametric regression,
piecewise partitioning,
rate of convergence.
\vfill

\newpage
\spacingset{1.45} 
\section{Introduction}
\label{sec:intro}
 \subsection{Nonparametric regression}
 Motivated by the huge success of deep neural networks
in applications (cf., e.g.,
\cite{Sch15} and the literature cited therein)
there is now keen interest in investigating theoretical
properties of deep neural networks. In 
statistical research this is usually done in context
of nonparametric regression (cf., \cite{KoKr17}, \cite{BK17}, \cite{Sch17}, \cite{KL20}, \cite{FM18} and \cite{NI19}).  
Here,
$(X,Y)$ is an $\Rd \times \R$--valued random vector
satisfying $\EXP \{Y^2\}<\infty$, and given a sample
of $(X,Y)$ of size $n$, i.e., given a data set
\begin{equation}
  \label{se1eq1}
\D_n = \left\{
(X_1,Y_1), \dots, (X_n,Y_n)
\right\},
\end{equation}
where
$(X,Y)$, $(X_1,Y_1)$, \dots, $(X_n,Y_n)$ are i.i.d. random variables,
the aim is to construct an estimate
\[
m_n(\cdot)=m_n(\cdot, \D_n):\Rd \rightarrow \R
\]
of the regression function $m:\Rd \rightarrow \R$,
$m(x)=\EXP\{Y|X=x\}$ such that the $L_2$ error
\[
\int |m_n(x)-m(x)|^2 \PROB_X (dx)
\]
is ``small'' (see, e.g., \cite{GKKW02}
for a comprehensive study to nonparametric regression and
 motivation for the $L_2$ error).

 \subsection{Rate of convergence}
It is well--known that one
needs smoothness assumptions on the regression function in
order to derive non--trivial rates of convergence
(cf., e.g., Theorem 7.2 and Problem 7.2 in
\cite{DGL96} and
Section 3 in \cite{DW80}).
Thus we introduce the following definition.
\begin{definition}
\label{intde2}
  Let $p=q+s$ for some $q \in \N_0$ and $0< s \leq 1$,
where $\N_0$ is the set of nonnegative integers.
A function $f:\R^d \rightarrow \R$ is called
\textbf{$(p,C)$-smooth}, if for every $\alpha=(\alpha_1, \dots, \alpha_d) \in
\N_0^d$
with $\sum_{j=1}^d \alpha_j = q$ the partial derivative
$\frac{
\partial^q f
}{
\partial x_1^{\alpha_1}
\dots
\partial x_d^{\alpha_d}
}$
exists and satisfies
\[
\left|
\frac{
\partial^q f
}{
\partial x_1^{\alpha_1}
\dots
\partial x_d^{\alpha_d}
}
(x)
-
\frac{
\partial^q f
}{
\partial x_1^{\alpha_1}
\dots
\partial x_d^{\alpha_d}
}
(z)
\right|
\leq
C
\cdot
\| x-z \|^s
\]
for all $x,z \in \R^d$, where $\Vert\cdot\Vert$ denotes the Euclidean norm.
\end{definition}
\cite{Sto82} showed that the optimal minimax rate of convergence in nonparametric
regression for $(p,C)$-smooth functions is $n^{-2p/(2p+d)}$.

 \subsection{Curse of dimensionality}
In case
that $d$ is large compared to $p$ the above rate of convergence is
rather slow which is a symptom of so-called curse of dimensionality. One way to circumvent it is to
impose additional constraints on the structure of the regression
function. Recently it was shown, that deep neural networks are able to circumvent the curse of dimensionality whenever suitable hierarchical composition assumptions on the regression function hold. Here the regression function is contained in the following function class:

\begin{definition}
\label{de2}
Let $d \in \N$ and $m: \Rd \to \R$ and let
$\P$ be a subset
of $(0,\infty) \times \N$

\noindent
\textbf{a)}
We say that $m$ satisfies a hierarchical composition model of level $0$
with order and smoothness constraint $\mathcal{P}$, if there exists a $K \in \{1, \dots, d\}$ such that
\[
m(x) = x^{(K)} \quad \mbox{for all } x = (x^{(1)}, \dots, x^{(d)})^{\top} \in \Rd.
\]
\noindent
\textbf{b)}
We say that $m$ satisfies a hierarchical composition model
of level $\ell+1$ with order and smoothness constraint $\mathcal{P}$, if there exist $(p,K)  \in \P$, $C>0$, $g: \R^{K} \to \R$ and $f_{1}, \dots, f_{K}: \Rd \to \R$, such that
$g$ is $(p,C)$--smooth,
$f_{1}, \dots, f_{K}$ satisfy a hierarchical composition model of level $\ell$
with order and smoothness constraint $\mathcal{P}$
and 
\[m(x)=g(f_{1}(x), \dots, f_{K}(x)) \quad \mbox{for all } x \in \Rd.\]
\end{definition}

In case that the order and smoothness constraint of $g$ alternates between $(p,d)$ and $(\infty, K)$ and $g$ is a sum in every second level, this definition equals the definition of the so--called \textit{$(p,C)$-smooth generalized hierarchical interaction models} of order $d^*$, which were introduced by \cite{KoKr17}. They showed that for such models suitably defined multilayer
neural networks (in which the number of hidden layers depends
on the level of the generalized interaction model) achieve the rate of convergence  $n^{-2p/(2p+d^*)}$
(up to some logarithmic factor) in case $p \leq 1$.
\cite{BK17} generalized this result for $p>1$ provided the squashing function is suitably
chosen. For the hierarchical composition model of Definition \ref{de2}, where the smoothness and dimension is fixed within one level, \cite{Sch17} showed (up to some logarithmic factor) a rate of convergence 
\begin{align*}
\max_{(p,K) \in \P} n^{-2p/(2p+K)}
\end{align*}
for sparse neural networks with ReLU activation function. \cite{KL20} showed that this rate holds even for simple fully connected neural networks and arbitrary hierarchical composition model of Definition \ref{de2}. All the above mentioned results are optimal up to some logarithmic factor. \cite{Liu19} showed that some of these results hold even without the logarithmic factor. For regression functions with a form of common statistical models, i.e. multivariate adaptive regression splines (MARS), \cite{EcSch18} showed that convergence rate by DNNs can also be improved. 
In case that the regression function is defined on a manifold, \cite{Sch19} showed, that the convergence rate by DNNs depends on the dimension of the manifold. \cite{NI19} analyzed the performance of DNNs in case that the high-dimensional data have an intrinsic low dimensionality and showed that the convergence rate by DNNs depends only on the intrinsic dimension and not on the input dimension.

\subsection{Low local dimensionality}
In this article we consider regression functions with low local dimensionality. There exist several examples in the literature, where high dimensional problems can be treated locally in much lower dimension. \cite{Bell_1997} showed that the probability distribution of a natural scene is highly structured, since, for instance, the neighboring pixel of a natural scene have redundant informations. \cite{ViSch97} and \cite{HSV09} analyzed in their research on human motor control some regularities in full-body movement of humans within and across individuals. These regularities also lead to locally low-dimensional data distributions. For instance, they showed that for estimating the inverse dynamics of an arm, a globally 21- dimensional space reduces, on average to 4-6 dimensions locally. And also in our own research it can be reasonably assumed, that the analyzed data set is of a locally low dimensional structure. The data set under study (which is part of the Machine Learning Repository: \url{https://archive.ics.uci.edu/ml/machine-learning-databases/00275/}) is related to 2--year usage log of a bike sharing system namely Captial Bike Sharing (CBS) at Washington, D.C., USA (\cite{FG13}). The data show the hourly aggregated count of rental bikes and 12 attributes, namely the season (1: spring, 2: summer, 3: fall, 4: winter), the year (0: 2011, 1:2012), the month (1 to 12), the hour (0 to 23), holiday (whether the day is holiday (1) or not (0)), the day of the week (1 to 7), workingday (if day is neither weekend nor holiday is 1, otherwise is 0), the weather situation (1: Clear, Few clouds, Partly cloudy, 2: Mist + Cloudy, Mist + Broken clouds, Mist + Few clouds, Mist, 3: Light Snow, Light Rain + Thunderstorm + Scattered clouds, Light Rain + Scattered clouds, 4: Heavy Rain + Ice Pallets + Thunderstorm + Mist, Snow + Fog), the normalized temperature in Celsius, the normalized feeling temperature in Celsius, the normalized humidity and the normalized windspeed. For this data set we conjecture that depending on the season,
the hour and the attribute working day the count of rental
bikes depends on different subsets of the other attributes.
E.g., in spring and fall during the rush hour on working
day the weather is not important at all. But on days which
are not working days, it depends mainly on the hour and the
weather, where for different seasons different weather
attributes are important (like temperature and humidity
in summer or weather situation in the Spring and in the Fall).
%
%
%
%
This leads to the assumption, that the underlying regression function performs differently on different subsets and depends locally only on a few of its input components. 
%
  In summary, that means that one can reduce dimension locally without losing much information for many high dimensional problems thus avoiding the curse of dimensionality. This finding motivates us to analyze regression functions with low local dimensionality. 
%
%
\\
\\
We say 
a function $f:\Rd \rightarrow \R$ has a low local dimensionality, if it depends locally only on a very few of its component, where in different areas these subsets of variables can be different. 
The simplest
way to define this formally is to assume that there exist
$d^* \in \{1, \dots, d\}$,
$K \in \N$, disjoint sets $A_1$, \dots, $A_K \subset \Rd$, functions
$f_1$, \dots, $f_{K}:\R^{d^*} \rightarrow \R$ and subsets of indices
$J_1$, \dots, $J_K \subset \{1, \dots, d\}$ of cardinality at most
$d^*$ such that
\begin{equation}
\label{se3eq1}
f(x)
=
\sum_{k=1}^K
f_k(x_{J_k}) \cdot \mathds{1}_{
A_k
}(x)
\end{equation}
holds for all $x \in \Rd$, where 
\begin{align*}
x_{\{j_{k,1}, \dots, j_{k,d^*}\}} = (x^{(j_{k,1})}, \dots, x^{(j_{k,d^*})}) \ \mbox{for} \ 1 \leq j_{k,1} < \dots < j_{k,d^*} \leq d.
\end{align*}
As a consequence of using the indicator function, assumption (\ref{se3eq1})
implies that $f$ in general is not globally smooth, in particular it
is not even continuous. In view of many applications where it is
intuitively
expected that the dependent variable depends smoothly on the
independent variables, this does not seem to be realistic.

To avoid this problem, we will allow in the sequel smooth transitions 
between the different areas $A_1$, \dots, $A_K$
in (\ref{se3eq1}). To achieve this, we assume that the function
$f$ is squeezed between two functions of the form (\ref{se3eq1}). In order to simplify the presentation, we use in the sequel $d$-dimensional polytopes for the sets $A_1, \dots, A_K$. Since polytopes can be described as the intersection of a finite number of half spaces, we define the local dimensionality as follows:
\begin{definition}
\label{se3de2}
A function $f: \R^d \to \R$ has \textit{local dimensionality} $d^{\ast} \in \{1, \dots, d\}$ on $[-A,A]^d$ for $A > 0$ with order $(K_1, K_2)$, $\PROB_X$-border $\epsilon >0$ and borders $\delta_{i,k} > 0$ for $i =1, \dots, K_1$, $k=1, \dots, K_2$ if there exists $a_{i,k} \in \R^d$ with $\Vert a_{i,k} \Vert \leq 1$, $b_{i,k} \in \R$, $J_k \subseteq \{1, \dots, d\}$ with $|J_k| \leq d^{\ast}$ for  $i =1, \dots, K_1$, $k=1, \dots, K_2$ and 
\begin{equation*}
f_k: \R^{d^{\ast}} \to \R
\end{equation*}
such that for 
\begin{equation*}
(P_k)_{\delta_k} = \{x \in \R^d: a_{i,k}^T x \leq b_{i,k} - \delta_{i,k} \ \mbox{for $i = 1, \dots, K_1$}\}
\end{equation*}
and 
\begin{equation*}
(P_k)^{\delta_k}= \{x \in \R^d: a_{i,k}^T x \leq b_{i,k} + \delta_{i,k} \ \mbox{for $i = 1, \dots, K_1$}\}
\end{equation*}
with $\delta_k = (\delta_{1,k}, \dots, \delta_{K_1,k})$
we have
\begin{equation*}
\sum_{k=1}^{K_2} f_k(x_{J_k}) \cdot 1_{(P_k)_{\delta_k}}(x) \leq f(x) \leq \sum_{k=1}^{K_2} f_k(x_{J_k}) \cdot 1_{(P_k)^{\delta_k}}(x) \ \ \ (x \in A)
\end{equation*}
and 
\begin{equation*}
\PROB_X\left(\left( \bigcup_{k=1}^{K_2}(P_k)^{\delta_k} \textbackslash (P_k)_{\delta_k}\right)\cap [-A,A]^d\right) \leq \epsilon.
\end{equation*}
\end{definition}
Figure \ref{fig2} shows a function $f(x)$ with $K_2=4$, polytopes $P_1 = [-2,0] \times [-2,0]$, $P_2=[-2,0] \times [0,2]$, $P_3 = [0,2] \times [0,2]$ and $P_4 = [0,2] \times [-2,0]$ and functions $f_1(x_1)=\sin(4 \cdot x_1)$, $f_2(x_2)=\exp(x_2)$, $f_3(x_2) = \cos(4 \cdot x_2)$ and $f_4(x_2) = \exp(x_1)$ with smooth transitions between the polytopes. 
\begin{figure}[h]
\centering
\includegraphics[scale=0.5]{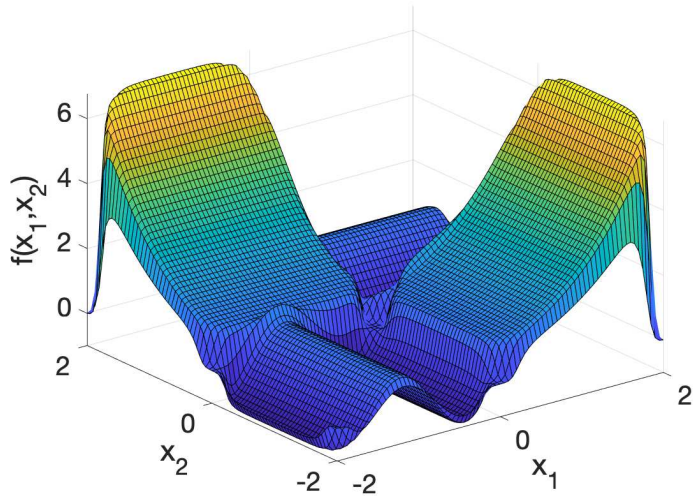}
\caption{An example of a function with low local dimensionality with a 2-dimensional support. The support is divided into four pieces, the function depends locally only on one variable of the input and is globally smooth.}
\label{fig2}
\end{figure}

\subsection{Main results}
In this paper we show that sparse neural network regression estimates are able
to achieve a dimension reduction in case that the regression
function has a low local dimensionality. We derive the rate of convergence which depends 
only on the local dimension $d^*$ and not on the input dimension $d$ (cf., Theorem \ref{th2}). 
Thus our neural network regression
estimates are able to circumvent the curse of dimensionality in case that $d^*$ is much smaller than $d$. Finally, we verify the theoretical results using simulation studies and experiments on real data.
\\
\\
 We point out another advantage of DNNs, namely that that neural networks are able to detect a locally low dimensional structure and therefore achieve a faster rate of convergence. As a technical contribution of this paper, we present a result concerning the connection between neural networks and so-called multivariate adapative regression splines (MARS). For instance, we show that the sparse neural network
regression estimates, where the weights are chosen by the least squares,
satisfy the expected error bound of MARS in case that this procedure works in an optimal way (cf., Theorem 2 in the Supplement).
\\
Our results are based
on a set of sparse neural networks instead of fully connected neural networks. On the one hand 
this network architecture leads to a better bound of the covering number, which is essential to
show the convergence result. On the other hand they perform better with regard to the simulated and real data 
as shown in our simulation studies. In applying our estimates to a real-world data experiment we 
emphasize the practical relevance of our assumption on the regression function and show
that our sparse neural network estimates outperform other nonparametric regression estimates, especially MARS, on this data set. 
\subsection{Discussion of related results}
  It is frequently observed by various researchers, that the true intrinsic dimensionality of high dimensional data is often very low (e.g. \cite{BN03}, \cite{T00}, \cite{HSV09}, \cite{NI19}). Several estimators like the kernel methods and the Gaussian process regression are able to exploit the intrinsic low dimensionality of covariates and achieve a fast rate of convergence depending only on the intrinsic dimensionality of the data set (e.g. \cite{B07}, \cite{S11}, \cite{K13}, \cite{Yang:1997aa}). Recently, \cite{NI19} also derived convergence rates by DNNs, which only depend on the intrinsic dimension and are free from the nominal dimension. \cite{Sch19} achieved approximation rates of DNNs that only depend on $d^*$, in case that the input lies on on a $d^*$-dimension manifold.
%
  To describe the intrinsic dimensionality, both articles used the notion of \textit{Minkowski dimension}. 
  \\
  \\
All the above mentioned results use the observation, that many high dimensional problems are contained in a low dimensional space. At this point we would like to highlight the difference with our assumption. While these studies focus on the behavior of the measure $\PROB_X$ of covariates, we analyze regression functions with some specific structure, i.e. regression functions with low local dimensionality. In our case the dimension of the regression function is locally of size $d^* \leq d$ and the regression function performs differently on different subsets. This does not imply, that there is some intrinsic low dimensionality in the measure $\PROB_X$ of covariates. 
  \\
  \\
A similar structure of regression functions has been studied by \cite{FM18}. They analyzed the performance of DNNs for a certain class of piecewise smooth functions. Here piecewise smooth regression functions where the partitions have smooth borders were considered. For instance, their partition consists of a finite number of pieces, where each piece is an intersection of so-called \textit{basis pieces}. Each basis piece is defined with the help of a horizon function and is regarded as one side of surfaces by a H\"older-smooth function. Thus the pieces of the partition in this paper have smooth borders, which is a more flexible way to define piecewise smooth functions, but which does not contain the case of globally smooth functions. Since we also want to take into consideration smooth functions with low local dimensionality, i.e. functions which perform differently on different pieces (depending only on a few components of the input on each piece), but are nevertheless globally smooth, we define our pieces as d--dimensional polytopes and allow smooth transition between them.
\\
\\
As mentioned earlier, the proof of our main result is based on a result that analyzes the connection between DNNs and MARS. \cite{EcSch18} already showed a similar result for the ReLU activation function. In particular, they showed that every function expressed as a function in MARS can also be approximated by a multilayer neural network (up to a sup-norm error $\epsilon$). Using this result they derived a risk comparison inequality, that bound the statistical risk of fitting a neural network by the statistical risk of spline-based methods.
Due to the fact that the ReLU activation function and consequently 
the corresponding neural network are piecewise linear functions 
it is not that suprisingly to find connection to spline methods. This paper extends this result by showing connection between neural networks with smooth 
activation functions and MARS, which was not covered by the results in \cite{EcSch18}. Additionally, we 
show our result for a more general basis of smooth piecewiese polynomials, i.e. a product of a
truncated power basis of degree 1 and a B-spline basis. This leads to better approximation properties 
in case of very smooth regression function. \\
\\
The approximation of B-Splines by DNNs has also been studied by \cite{Liu19}. They showed that a DNN with $const \cdot m$ hidden layers and a fixed number of neurons per layer achieves an approximation rate of size $4^{-2m}$ for a tensor product B-spline basis. In the Supplement we derive a related result for DNN with squashing activation function. 
%
%

 \subsection{Notation}
Throughout the paper, the following notation is used:
The sets of natural numbers, natural numbers including $0$, integers, non-negative real numbers and real numbers
are denoted by $\N$, $\N_0$, $\Z$, $\R_+$ and $\R$, respectively. For $z \in \R$, we denote
the smallest integer greater than or equal to $z$ by
$\lceil z \rceil$, and
$\lfloor z \rfloor$
denotes the largest integer that is less than or equal to $z$.
Furthermore we set $z_+=\max\{z,0\}$.
The Euclidean and the supremum norms of $x \in \Rd$
are denoted by $\|x\|_2$ and $\|x\|_\infty$, respectively.
For $f:\R^d \rightarrow \R$
\[
\|f\|_\infty = \sup_{x \in \R^d} |f(x)|
\]
is its supremum norm, and the supremum norm of $f$
on a set $A \subseteq \R^d$ is denoted by
\[
\|f\|_{\infty,A} = \sup_{x \in A} |f(x)|.
\]
A finite collection $f_1, \dots, f_N:\Rd \rightarrow \R$
  is called an $\varepsilon$--$\|\cdot\|_{\infty,A}$-- cover of $\F$
  if for any $f \in \F$ there exists  $i \in \{1, \dots, N\}$
  such that
  \[
  \|f-f_i\|_{\infty,A}
  =
  \sup_{x \in A} |f(x)-f_i(x)| < \varepsilon.
  \]
  The $\varepsilon$--$\|\cdot\|_{\infty,A}$- covering number of $\F$
  is the  size $N$ of the smallest $\varepsilon$--$\|\cdot\|_{\infty,A}$-- cover
  of $\F$ and is denoted by $\Nu(\varepsilon,\F,\| \cdot \|_{\infty,A})$.
We write $x = \arg \min_{z \in D} f(z)$ if
$\min_{z \in \D} f(z)$ exists and if
$x$ satisfies
\[
x \in D \quad \mbox{and} \quad f(x) = \min_{z \in \D} f(z).
\]
  If not otherwise stated, any $c_i$ with $i\in\N$
  symbolizes a real nonnegative constant,
  which is independent of the sample size $n$.

\subsection{Outline}
The outline of this paper is as follows: In Section \ref{se2}
the sparse neural network regression estimates analyzed in this
paper are introduced. The main result is presented in Section
\ref{se3}. The finite sample size behavior of our estimate is analyzed by applying it to simulated and real data in Section \ref{se4}. 

\section{Sparse neural network regression estimates}
\label{se2}
The starting point in defining a neural network is the
choice of an activation function \linebreak $\sigma: \mathbb{R} \to \mathbb{R}$.
We use in the sequel
so--called squashing functions, which are nondecreasing
and satisfy $\lim_{x \rightarrow - \infty} \sigma(x)=0$
and
$\lim_{x \rightarrow  \infty} \sigma(x)=1$.
An example of a squashing function is
the so-called sigmoidal or logistic squasher
\begin{equation}
  \label{se2eq1a}
\sigma(x)=\frac{1}{1+\exp(-x)} \quad (x \in \R).
\end{equation}
A multilayer feedforward neural network with
$L$ hidden layers and $k_1$, $k_2$, \dots, $k_L$
number of neurons in the first,
second, $\dots$, $L$-th hidden layer
and sigmoidal function $\sigma$ is a real-valued function defined on $\mathbb{R}^d$ of the form
\begin{equation}\label{se2eq1}
f(x) = \sum_{i=1}^{k_L} c_{1,i}^{(L)} \cdot f_i^{(L)}(x) + c_{1,0}^{(L)},
\end{equation}
for some $c_{1,0}^{(L)}, \dots, c_{1,k_L}^{(L)} \in \mathbb{R}$ and for $f_i^{(L)}$'s recursively defined by
\begin{equation}
  \label{se2eq2}
f_i^{(r)}(x) = \sigma\left(\sum_{j=1}^{k_{r-1}} c_{i,j}^{(r-1)} \cdot f_j^{(r-1)}(x) + c_{i,0}^{(r-1)} \right)
\end{equation}
for some $c_{i,0}^{(r-1)}, \dots, c_{i, k_{r-1}}^{(r-1)} \in \mathbb{R}$
$(r=2, \dots, L)$
and
\begin{equation}
  \label{se2eq3}
f_i^{(1)}(x) = \sigma \left(\sum_{j=1}^d c_{i,j}^{(0)} \cdot x^{(j)} + c_{i,0}^{(0)} \right)
\end{equation}
for some $c_{i,0}^{(0)}, \dots, c_{i,d}^{(0)} \in \mathbb{R}$.
We denote
by $\F(L,r,\alpha)$
the set of all fully connected neural networks with $L$ hidden layers, $r$
neurons
in each hidden layer and weights bounded in absolute value by $\alpha$. 
\newline
In the sequel we propose sparse neural networks architectures, where the consecutive layers of neurons are not fully connected. The structure of our sparse neural networks depends on smaller neural networks that are fully connected. For $M^{*} \in \N$, $L \in \N$, $r \in \N$ and $\alpha > 0$, we denote the set of all functions $f: \R^d \to \R$ that satisfy
\begin{equation*}
f(x) = \sum_{i=1}^{M^{*}} \mu_i \cdot f_i(x) \ (x \in \R^d)
\end{equation*}
for some $f_i \in \mathcal{F}(L, r, \alpha)$ and for some $\mu_i \in \R$, where $|\mu_i| \leq \alpha$, by $\mathcal{F}^{(sparse)}_{M^{*}, L, r, \alpha}$. An example of a network in class $\mathcal{F}^{(sparse)}_{3, L, r, \alpha}$ is shown in Figure \ref{fig1} which gives a good idea of how the network structure looks like. 
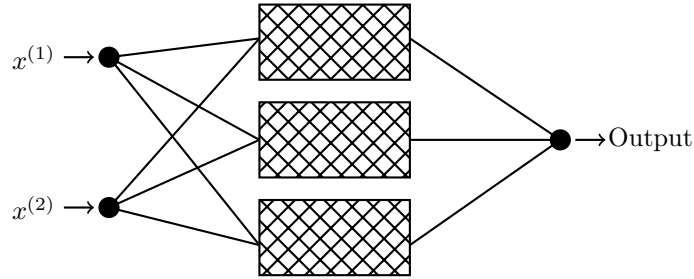
\begin{figure}[h]
\centering
 \begin{tikzpicture}[thick] 
     \tikzstyle{annot} = [
    text centered
    ]
 \tikzstyle{rectangle_style}=[rectangle, draw]
 		\node[annot] at (1, -3+2) {\footnotesize{$x^{(2)}$}};
 		\node[annot] at (1, -3+4) {\footnotesize{$x^{(1)}$}};
 		\draw[->] (1.4,-3+2) -- (1.8,-3+2);
 		\draw[->] (1.4,-3+4) -- (1.8,-3+4);
 		\draw[->] (8.2,2-2.1) -- (8.6,2-2.1);
 		\node[annot] at (9.2, 2-2.1) {\footnotesize{Output}};
 \foreach \x in {1,2}
		\fill (2, -3 + 2*\x) circle (4pt);

  \begin{scope}
    \clip[draw] (4,2-1.3*1) rectangle (6,3-1.3*1);
      \draw[rotate=45,step=2mm] (-5,-5) grid (7,7); 
    \end{scope}
     \begin{scope}
    \clip[draw] (4,2-1.3*2) rectangle (6,3-1.3*2);
      \draw[rotate=45,step=2mm] (-5,-5) grid (7,7); 
    \end{scope}
     \begin{scope}
    \clip[draw] (4,2-1.3*3) rectangle (6,3-1.3*3);
      \draw[rotate=45,step=2mm] (-6,-6) grid (7,7); 
    \end{scope}
    \draw (2,-3+4) -- (4, 2-0.75);
    \draw (2,-3+4) --(4,  2-2.1);
    \draw (2,-3+4) --(4,  2-3.5);
    
      \draw (2,-3+2) -- (4, 2-0.75);
    \draw (2,-3+2) --(4,  2-2.1);
    \draw (2,-3+2) --(4,  2-3.5);
    
    \fill (8, 2-2.1) circle (4pt);
    
    \draw (6, 2-0.75) --(8, 2-2.1);
    \draw (6, 2-2.1) -- (8, 2-2.1);
    \draw (6, 2-3.5) -- (8, 2-2.1);


  \end{tikzpicture}
  \caption{A neural network with $M^*=3$ boxes of fully connected neural networks}
  \label{fig1}
  \end{figure}
In the sequel we want to use data (\ref{se1eq1}) in order
to choose a function from $\mathcal{F}^{(sparse)}_{M^{*}, L, r, \alpha}$ such that this function is
a good regression estimate. In order to do this, we use the principle
of
least squares and define our regression estimate $\tilde{m}_n$ as a
function
\begin{equation}
\label{se2eq4}
\tilde{m}_n(\cdot)= \tilde{m}_n(\cdot,\D_{n}) \in \F^{(sparse)}_{M^{*}, L, r, \alpha_n}
\end{equation}
 from $\F^{(sparse)}_{M^{*}, L, r, \alpha_n}$, which minimizes the so--called
empirical $L_2$ risk over $\F^{(sparse)}_{M^{*}, L, r, \alpha_n}$, i.e., which satisfies
\begin{equation}
\label{se2eq5}
\frac{1}{n} \sum_{i=1}^n
| Y_i - \tilde{m}_n(X_i)|^2
=
\min_{ f \in \F^{(sparse)}_{M^{*}, L, r, \alpha_n}}
\frac{1}{n} \sum_{i=1}^n
| Y_i - f(X_i)|^2.
\end{equation}
 Here we assume for notational simplicity that the minimum above does
indeed exists. In case that it does not exist our results also hold for
any function chosen from $\F^{(sparse)}_{M^{*}, L, r, \alpha_n}$ which minimizes
the empirical $L_2$ risk in (\ref{se2eq5}) up to some small additive
term, e.g., up to $1/n$.
 For technical reasons in the analysis of our estimate we need to
 truncate
it at some data--independent level $\beta_n$ satisfying $\beta_n
\rightarrow \infty$ for $n \rightarrow\infty$, i.e., we set
\begin{equation}
\label{se2eq6}
m_n(x)=T_{\beta_n} \tilde{m}_n(x) \quad (x \in \Rd),
\end{equation}
 where $T_{\beta_n} z = \max \{
\min\{ z, \beta_n \}, - \beta_n
\}$ for $z \in \R$.

The number $L$ of  layers and the number $r$ of parameters of each fully connected neural network $f_i$ will be chosen as a large enough constant.  For the bound
$\alpha_n$ on the absolute value of the weights we will use a
data--independent
bound of the form $\alpha_n=c_1 \cdot n^{c_2}$ for some $c_1,c_2>0$. The main
parameter
left which controls the flexibility of the networks is then the number
$M^{\ast}$ of fully connected neural networks $f_i \in \F(L,r,\alpha_n)$ $(i=1, \dots, M^{*})$. To choose it, we will use the principle of
splitting of the sample (cf., e.g., Chapter 7 in Gy\"orfi et al. (2002)).
Here we split the sample into a learning sample of size $n_l$ and a
testing sample of size $n_t$, where $n_l,n_t \geq 1$ satisfy
$n=n_l+n_t$,
e.g., $n_l= \lceil n/2 \rceil$ and $n_t=n-n_l$. We use the learning
sample
\[
\D_{n_l}= \left\{
(X_1,Y_1), \dots, (X_{n_l},Y_{n_l})
\right\}
\]
to define for each $M^{*}$ in $\P_n=\{ 2^l \, : \, l=1, \dots, \lceil \log
n \rceil \}$
an estimate $\tilde{m}_{n_l,M^{*}}$ by
\begin{equation}
\label{se2eq7}
\tilde{m}_{n_l,M^{*}}(\cdot)=\tilde{m}_{n_l,M^{*}}(\cdot,\D_{n_l}) \in \F^{(sparse)}_{M^{*}, L, r, \alpha_n}
\end{equation}
and
\begin{equation}
\label{se2eq7b}
\frac{1}{n_l} \sum_{i=1}^{n_l}
| Y_i - \tilde{m}_{n_l,M^{*}}(X_i)|^2
=
\min_{ f \in \F^{(sparse)}_{M^{*}, L, k, \alpha_n}}
\frac{1}{n_l} \sum_{i=1}^{n_l}
| Y_i - f(X_i)|^2,
\end{equation}
and set
\begin{equation}
\label{se2eq8}
m_{n_l,M^{*}}(x)=T_{\beta_n} \tilde{m}_{n_l,M^*}(x) \quad (x \in \Rd).
\end{equation}
Then we choose
$M^{*} \in \P_n$ such that the empirical $L_2$ error of the estimate
on the testing data is minimal, i.e., we define
\begin{equation}
\label{se2eq9}
m_n(x,\D_n)=m_{n_l,\hat{M}^{*}}(x,\D_{n_l}),
\end{equation}
 where
\begin{equation}
\label{se2eq10}
\hat{M^{*}} \in \P_n
\quad \mbox{and} \quad
\frac{1}{n_t} \sum_{i=n_l+1}^n
| Y_i - m_{n_l, \hat{M^{*}}}(X_i)|^2
=
\min_{M^* \in \P_n}
\frac{1}{n_t} \sum_{i=n_l+1}^n
| Y_i - m_{n_l,M^*}(X_i)|^2.
\end{equation}

\section{Main result}
\label{se3}

Our theoretical result will be valid for sigmoidal functions which
are
$2$--admissible according to the following definition.
\begin{definition}
  \label{se3de1}
  Let $N \in \N_0$.
  A function $\sigma : \mathbb{R} \to [0, 1]$ is called \textbf{N-admissible},
  if it is nondecreasing and Lipschitz continuous and if, in addition,
  the following three conditions are satisfied:
\begin{itemize}
\item[(i)] The function $\sigma$ is $N+1$ times continuously differentiable with bounded derivatives.
\item[(ii)] A point $t_{\sigma} \in \mathbb{R}$ exists, where all
  derivatives up to  order $N$ of $\sigma$ are
 nonzero.
\item[(iii)] If $y > 0$, the relation $|\sigma(y) - 1| \leq \frac{1}{y} $ holds. If $y < 0$, the relation $|\sigma(y)| \leq \frac{1}{|y|}$ holds. \label{3}
\end{itemize}
\end{definition}
\noindent
It is easy to see that the logistic squasher (\ref{se2eq1a})
is $N$--admissible for any $N \in \N$ (cf., e.g. \cite{BK17}).

Our main result shows, that the sparse neural networks can achieve the
$d^*$--dimensional rate of convergence in case that the regression function has local
dimensionality $d^*$.
\begin{theorem}
  \label{th2}
  Let $\beta_n = c_{3} \cdot \log(n)$ for some constant $c_{3}>0$. Assume that the distribution of $(X,Y)$ satisfies
\begin{align}\label{subgaus}
\mathbf E \left(\exp({c_{4}\cdot |Y |^2})\right) <\infty
\end{align}
for some constant $c_{4}>0$ and that the distribution of $X$ has bounded
support $supp(X) \subseteq [-A,A]^d$ for some $A \geq 1$. Let $M, K_1,K_2 \in \N$.
 Assume furthermore that $m$
has local dimensionality $d^*$ on $supp(X)$ with order $(K_1, K_2)$, $\PROB_X$-border $c_{5}/n$ and borders $\delta_{i,k}$, where $\delta_{i,k} \geq c_{6}/n^{c_{7}}$ holds for some constants $c_{5}, c_{6}, c_{7} > 0$ ($i =1, \dots, K_1$, $k=1, \dots, K_2$) and where all functions
$f_k$ in Definition \ref{se3de2} are bounded and $(p,C)$--smooth for some $p = q+s$ with $0< s \leq 1$ and $q \leq M$.

Let the least squares neural network
regression estimate $m_n$ be defined as in Section \ref{se2}
with parameters $L=3K_1+d \cdot (M+2)-1$, $r=2^{M-1} \cdot 16+ \sum_{k=2}^M 2^{M-k+1} +d +5$, $\alpha_n=c_1 \cdot n^{c_2}$  and $n_l=\lceil  n / 2 \rceil$.
 Assume
 that the sigmoidal function $\sigma$ is $2$--admissible, and
 that $c_1,c_2>0$ are suitably large.
Then we have for any $n > 7$:
\begin{eqnarray*}
  &&
  \EXP \int |m_n(x)-m(x)|^2 \PROB_X(dx)
 \leq
c_8 \cdot
  (\log n)^3 \cdot 2^{K_1} \cdot K_2 \cdot
n^{
- \frac{2p}{2p+d^*}.
}
\end{eqnarray*}
  \end{theorem}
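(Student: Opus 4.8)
The plan is to follow the classical decomposition of the $L_2$ risk of a truncated least squares estimate into an estimation (complexity) part and an approximation part, preceded by a reduction that removes the adaptive choice of $\hat{M}^*$. Since $\hat{M}^*$ is selected by minimizing the empirical $L_2$ risk on the independent testing sample of size $n_t=n-n_l$, I would first invoke a splitting-of-the-sample oracle inequality (cf.\ Chapter~7 of Gy\"orfi et al.\ (2002)), combined with the sub-Gaussian tail assumption (\ref{subgaus}) and the truncation at $\beta_n=c_3\log n$, to obtain a bound of the form
\begin{equation*}
\EXP \int |m_n-m|^2 \, \PROB_X(dx)
\le
c\cdot \min_{M^*\in\P_n}\EXP \int |m_{n_l,M^*}-m|^2\,\PROB_X(dx)
+ c\cdot\frac{(\log n)^3\,\log|\P_n|}{n}.
\end{equation*}
Because $|\P_n|=\lceil\log n\rceil$, the selection penalty is negligible against the target rate, so it suffices to exhibit one admissible $M^*\in\P_n$ for which $m_{n_l,M^*}$ attains the claimed rate.

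For a fixed $M^*$ I would use the standard risk bound for truncated least squares over $\F^{(sparse)}_{M^*,L,r,\alpha_n}$ (in the spirit of Bauer and Kohler (2019)): up to logarithmic factors coming from the sub-Gaussian errors and the truncation,
\begin{equation*}
\EXP\int|m_{n_l,M^*}-m|^2\,\PROB_X(dx)
\le
c\cdot\frac{(\log n)^2\cdot\log\Nu\big(\tfrac{1}{n},\F^{(sparse)}_{M^*,L,r,\alpha_n},\|\cdot\|_{\infty,[-A,A]^d}\big)}{n}
+ 2\inf_{f\in\F^{(sparse)}_{M^*,L,r,\alpha_n}}\int|f-m|^2\,\PROB_X(dx).
\end{equation*}
Since $L,r$ are fixed constants, each block $f_i\in\F(L,r,\alpha_n)$ carries a constant number of weights, so the sparse sum has $O(M^*)$ parameters, each bounded by $\alpha_n=c_1n^{c_2}$; as the output is Lipschitz in its bounded weights, $\log\Nu(1/n,\cdot,\|\cdot\|_{\infty,[-A,A]^d})\le c\cdot M^*\cdot\log n$, making the estimation part of order $(\log n)^3\cdot M^*/n$.

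The core of the proof is the approximation term. Fixing a resolution $N\in\N$, I would construct a network in $\F^{(sparse)}_{M^*,L,r,\alpha_n}$ with $M^*\sim K_2\cdot N^{d^*}$ blocks: for each piece $k$, partition the relevant $d^*$-dimensional coordinate subspace $x_{J_k}$ into $\sim N^{d^*}$ cells, approximate the $(p,C)$-smooth $f_k$ on each cell by its degree-$q$ Taylor polynomial (error of order $N^{-p}$), and realize this local polynomial times an approximate indicator of the cell intersected with the polytope by one block $f_i$. Two ingredients are essential here. First, the product of the monomials of degree $\le q\le M$ and of the $K_1$ half-space indicators must be computed by a block of constant depth and width; this is exactly where $2$-admissibility of $\sigma$ enters, since the nonvanishing derivatives at $t_\sigma$ let affine combinations of $\sigma$ reproduce the identity and squaring, hence (via a binary product tree) arbitrary monomials and products of indicators — the prescribed $L=3K_1+d(M+2)-1$ and $r=2^{M-1}\cdot16+\sum_{k=2}^M2^{M-k+1}+d+5$ are precisely the depth and width of such product blocks. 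Second, $1_{(P_k)}$ is replaced by $\prod_{i=1}^{K_1}\sigma\big(t_N\,(b_{i,k}-a_{i,k}^Tx)\big)$ for a large scaling $t_N$: off the border strips this matches the true indicator to any prescribed accuracy, while the strips $\big(\bigcup_k (P_k)^{\delta_k}\setminus(P_k)_{\delta_k}\big)$ carry $\PROB_X$-mass at most $\epsilon=c_5/n$, on which the squeezing bound of Definition \ref{se3de2}, boundedness of the $f_k$, and the truncation contribute only $O(1/n)$; the lower bound $\delta_{i,k}\ge c_6/n^{c_7}$ ensures $t_N$ and hence all weights stay polynomial in $n$, consistent with $\alpha_n=c_1n^{c_2}$.

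Balancing $N\sim n^{1/(2p+d^*)}$ equates the squared approximation error (of order $2^{K_1}K_2\,N^{-2p}$) with the estimation error (of order $(\log n)^3 K_2\,N^{d^*}/n$), yielding $M^*\sim K_2\,n^{d^*/(2p+d^*)}\in\P_n$ up to rounding to a power of two (absorbed by the oracle inequality) and the overall rate $c_8(\log n)^3 2^{K_1}K_2\,n^{-2p/(2p+d^*)}$. I expect the main obstacle to be the simultaneous control of the architecture: fitting each local polynomial-times-indicator approximation into a block of fixed constant depth and width independent of $n$ and $N$, while keeping weights polynomially bounded despite the large internal scaling $t_N$ (which forces the $\delta_{i,k}\ge c_6 n^{-c_7}$ and $\PROB_X$-border $c_5/n$ assumptions), and ensuring the errors of the approximate multiplications propagate multiplicatively without degrading the constant. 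Verifying that these product-error bounds hold uniformly, and that the $2^{K_1}$ factor arising from expanding the product of $K_1$ sigmoidal indicators is the correct dependence, is the technical heart of the argument.
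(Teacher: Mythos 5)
Your two reduction steps are sound and coincide with the paper's: the sample-splitting oracle inequality and the truncated-least-squares bound with the covering-number estimate $\log\Nu\bigl(\tfrac{1}{n},\F^{(sparse)}_{M^*,L,r,\alpha_n},\|\cdot\|_{\infty,[-A,A]^d}\bigr)\le c\cdot M^*\cdot\log n$ are exactly Lemmas 12--14 of the supplement, and your final balancing $N\sim n^{1/(2p+d^*)}$ is also the paper's. Where you diverge is the approximation step: the paper routes it through Theorem 2 of the supplement (an oracle inequality relative to the basis $\B^*_{n,M,K_1}$ of products of truncated powers and B-splines), sandwiching each polytope indicator by an \emph{exact} piecewise-linear combination of $2^{K_1}$ truncated-power products and approximating each $f_k$ \emph{uniformly on all of} $[-A,A]^{d^*}$ by tensor-product B-splines; you instead build the network directly from local Taylor polynomials on a grid of $\sim N^{d^*}$ cells multiplied by sigmoid-smoothed cell indicators.

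This is where there is a genuine gap. Your piecewise-Taylor construction is accurate only away from the boundaries of the grid cells that you yourself introduce: in the transition zones of the smoothed cell indicators the approximant is, in general, off by a constant. Concretely, a general $2$-admissible $\sigma$ satisfies no symmetry relation $\sigma(u)+\sigma(-u)=1$, so on a face between two adjacent cells the two smoothed indicators sum to $\approx 2\sigma(0)\neq 1$ and your approximant there is $\approx 2\sigma(0)\cdot f_k(x)$, an error of order $\|f_k\|_\infty$. The assumptions of the theorem give no control of $\PROB_X$ on these artificial grid faces: the $\PROB_X$-border condition bounds only the mass of the polytope strips $\bigl(\bigcup_{k}(P_k)^{\delta_k}\setminus (P_k)_{\delta_k}\bigr)\cap[-A,A]^d$, while $\PROB_X$ may perfectly well put constant mass on a grid hyperplane, in which case the integrated squared error of your approximant is bounded below by a constant and the rate is destroyed; shrinking the transition width $1/t_N$ does not help, since mass sitting on the face itself does not shrink with the width. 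This is precisely what the paper's B-spline route avoids: B-splines are continuous, form an exact partition of unity, and give the sup-norm bound $|f_k(x)-\sum_{\mathbf{j}}b_{\mathbf{j}}B_{\mathbf{j},M,\mathbf{t}}(x)|\le c\,(2A/K)^p$ for \emph{all} $x\in[-A,A]^{d^*}$, so the only excepted set is the polytope strips --- exactly the set the assumption controls. Your construction could be repaired by making the smoothed cell weights an approximate partition of unity (e.g., one-dimensional cell weights of the telescoping form $\sigma(t_N(x^{(j)}-a_i))-\sigma(t_N(x^{(j)}-a_{i+1}))$, so every point receives a near-convex combination of Taylor polynomials of adjacent cells, each accurate to $O(N^{-p})$ there), but this idea is absent from your sketch and is essential, not a routine detail; the issues you flag as the technical heart (error propagation of the approximate multiplications, the $2^{K_1}$ factor) are comparatively benign. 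A secondary point: the prescribed $L=3K_1+d\cdot(M+2)-1$ and $r=2^{M-1}\cdot 16+\sum_{k=2}^M 2^{M-k+1}+d+5$ are the depth and width of the paper's B-spline/truncated-power blocks (Lemmas 5 and 8 of the supplement), not of a Taylor/product-tree block; since the estimator is defined over exactly this architecture, you would additionally have to show that your blocks embed into it, which with sigmoidal (rather than ReLU) activations requires approximate-identity padding and is again not automatic.
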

  \noindent
  The proof is available in the Supplement.

\noindent

\begin{remark}
The class of regression functions with low local dimensionality $d^*$ satisfying the assumptions of Theorem \ref{th2} contains all $(p,C)$-smooth functions, which depend at the most on $d^*$ of its input components. This is because the polytopes in the definition of low local dimensionality (see Definition \ref{se3de2}) can be chosen as one single hyperplane $a^T x \leq b$ ($K_1=K_2=1$) with $\Vert a \Vert \leq 1$, $a \in \R^{d^*}$ and $b= \sqrt{d \cdot A} +1$, in which case the single hyperplane contains all $x \in [-A,A]^d$. Consequently, the rate of convergence in Theorem \ref{th2} is optimal up to some logarithmic factor according to \cite{Sto82}.
\end{remark}

\begin{remark}
The deep neural network estimate in the above theorem achieves a rate
of convergence which is independent of the dimension $d$ of $X$,
hence it is able to circumvent the curse of dimensionality in case
that the regression function has low local dimensionality.
\end{remark}

\begin{proof}[Outline of the proof of Theorem \ref{th2}]
In the proof of Theorem \ref{th2} the following bound on the expected $L_2$ error of our sparse neural network regression estimate is essential:
\begin{eqnarray}
\label{th2eq1}
  &&
  \EXP \int |m_n(x)-m(x)|^2 \PROB_X(dx)
 \leq
  (\log n)^3 \cdot
\inf_{\substack{I \in \N, \ B_1, \dots, B_I \in \mathcal{B}^{*}}}
\Bigg(
c_9 \cdot \frac{I}{n}
\notag  \\
  &&
\hspace*{3cm}
+
\min_{(a_i)_{i=1, \dots, I}\in [-c_{10} \cdot n, c_{10} \cdot n]^I}
\int
|\sum_{i=1}^I a_{i} \cdot B_{i}(x)-m(x)|^2 \PROB_X(dx)
\Bigg).
\end{eqnarray}
Here $\mathcal{B}^*$ is a basis consisting of functions representable as a product of a truncated power basis of degress 1, i.e. the MARS function class, and a tensor product B-spline basis (see the Supplement for a detailed definition). A complete proof of this bound can be found in Theorem 2 in the Supplement. In the proof we derive some approximation-theoretical properties of sparse DNNs. For instance we show that our sparse DNNs approximate functions of the form 
\begin{align*}
B(x) = \sum_{i=1}^I a_{i} \cdot B_{i}(x), \quad B_i \in \mathcal{B}^*.
\end{align*}
Since every function with low local dimensionality $d^*$ (according to Definition \ref{se3de2}) can be expressed as a linear combination of functions of $\mathcal{B}^*$ in case that $x$ is not contained in $\left(\left(\bigcup_{k=1}^{K_2}(P_k)^{\delta_k} \textbackslash (P_k)_{\delta_k}\right)\cap [-A,A]^d\right)$, we can use the bound \eqref{th2eq1} to show our main result. Here we proceed as follows: First we show that an indicator function of a polytope can be approximated by a linear truncated power basis. In the second step we prove that every $(p,C)$-smooth function can be approximated by a linear combination of a tensor product B-Spline basis. In the last step we show that every function of the form
\begin{align*}
\sum_{k=1}^{K_2} f_k(x_{J_k}) \cdot \mathds{1}_{(P_k)^{\delta_k}}(x)
\end{align*}
with notations according to Definition \ref{se3de2} can be expressed as a linear combination of functions of $\mathcal{B}^*$. Together with the assumption
\begin{align*}
\PROB_X\left(\left(\bigcup_{k=1}^{K_2}(P_k)^{\delta_k} \textbackslash (P_k)_{\delta_k}\right)\cap [-A,A]^d\right) \leq \frac{c_{5}}{n}
\end{align*}
we conclude the assertion of the Theorem. 
\end{proof}

\section{Simulation study}
\label{se4}
To illustrate how the introduced nonparametric regression estimate based on our sparsely connected neural networks behaves in case of finite sample sizes, we apply it to simulated data using the MATLAB software. Due to the fact that our estimate contains some parameters that may influence their behavior, we will choose these parameters in a data-dependent way by splitting of the sample. Here we use $n_{train}=\lceil \frac{4}{5} \cdot n \rceil$ realizations to train the estimate several times with different choices for the parameters and $n_{test}=n-n_{train}$ realizations to test the estimate by comparing the empirical $L_2$ risk of different parameter settings and choosing the best estimate according to this criterion. The parameters $L$, $r$ and $M^{*}$ of the estimates in Section \ref{se2} are chosen in a data-dependent way. Here we choose $L=\{1, 3, 6\}$, $r \in \{3, 6, 10\}$ and $M^{*} \in \{1, 2, \dots, 10\}$. To solve the least squares problem in \eqref{se2eq5}, we use the quasi-Newton method of the function \textit{fminunc} in MATLAB to approximate the solution. \newline
The results of our estimate are compared to other conventional estimates. In particular we compare the sparsely connected neural network estimate (abbr. \textit{neural-sc}) to a fully connected neural network (abbr. \textit{neural-fc}) with adaptively chosen number of hidden layers and number of neurons per layer. The selected values of these two parameters to be tested were $\{1, 2, 4, 6, 8, 10, 12\}$ for $L$ and $\{1, 2, \dots, 6, 8, 10\}$ for $r$. Beside this, we compare our neural network estimate to another sparsely connected neural network estimate, namely the network \textit{neural-x} defined in \cite{BK17}. The parameters $l,K,d^*, M^*$ of this estimate are chosen in a data-dependent way as described in \cite{BK17}. For instance, we select these parameters out of the set $\{0,1,2\}$ for $l$, out of $\{1, \dots, 5\}$ for $K$, out of $\{1, \dots, d^*\}$ for $d^*$, and out of $\{1, \dots, 5, 6, 11, 16, 21, \dots, 46\}$ for $M^*$.\\
Furthermore, we consider a nearest neighbor estimate (abbr. \textit{neighbor}). This means that the function value at a given point $x$ is approximated by the average of the values $Y_1, \dots, Y_{k_n}$ observed for the data points $X_1, \dots, X_{k_n}$, which are closest to $x$ with respect to the Euclidean norm (breaking the ties by indices). Here the parameter $k_n \in \N$ denoting the involved neighbors is chosen adaptively from the set $\{1, 2, 3\} \cup \{4, 8, 12, 16, \dots, 4 \cdot \lceil \frac{n_{train}}{4}\rceil\}$. Another competitive approach is the interpolation with radial basis function (abbr. \textit{RBF}). Here we use Wendland's compactly supported radial basis function $\phi(r) = (1-r)^6_+ \cdot (35r^2+18r+3)$, which can be found in \cite{LM02}. The radius $r$ that scales the basis functions is also selected adaptively from the set $\{0.1, 0.5, 1, 5, 30, 60, 100\}$. The last competitive approach is of course MARS. Here we used the ARESLab MATLAB toolbox provided by \cite{J16}. 
\newline
The $n$ observations (for $n \in \{100, 200\}$) $(X,Y), (X_1, Y_1), (X_2, Y_2), \dots, (X_n, Y_n)$ are chosen as independent and identically distributed random vectors with $X$ uniformly distributed on $[0,1]^{10}$ (in particular, the dimension of $X$ is $d=10$) and $Y$ generated by 
\begin{equation*}
Y = m_i(X) + \sigma_j \cdot \lambda_i \cdot \epsilon \ (i \in \{1,2,3\}, j \in \{1,2\})
\end{equation*}
for $\sigma_j \geq 0$, $\lambda_i \geq 0$ and $\epsilon$ standard normally distributed and independent of $X$. The $\lambda_i$ is chosen in way that respects the range covered by $m_i$ on the distribution of $X$. Since our regression functions perform differently on different polytopes we determine the interquartile range of $10^5$ realizations of $m_i(X)$ (additionally stabilized by taking the median of hundred repetitions of this procedure) not for the whole regression function, but on each set seperately and use the average of those values. For the regression functions below we got
$\lambda_1=2.72$, $\lambda_2=6.28$ and $\lambda_3=12.2$. The parameters scaling the noise are chosen as $\sigma_1 = 5\%$ and $\sigma_2 = 20\%$. 
\newline
\newline
The regression functions which were used to compare the different approaches are listed below.
\begin{align*}
m_1(x) = & \left(\frac{10}{1+x_1^2}+5 \cdot \sin(x_3 \cdot x_4)+2\cdot x_5\right) \cdot 1_{H_1}(x)\\ 
& \quad + \left(\exp(x_1)+x_2^2+\sin(x_3 \cdot x_4) -3\right) \cdot 1_{\R^{10}\textbackslash H_1}(x),
\end{align*}
\begin{align*}
m_2(x) = &\left(\cot\left(\frac{\pi}{1+\exp(x_1^2+2 \cdot x_2+ \sin(6 \cdot x_4^3) -3)}\right)\right) \cdot 1_{H_1}(x)\\
\quad & + \left(\cot\left(\frac{\pi}{1+\exp(x_1^2+2 \cdot x_2+ \sin(6 \cdot x_4^3) -3)}\right)\right.\\
& \quad  + \exp\left(3 \cdot x_3+2 \cdot x_4- 5 \cdot x_1+ \sqrt{x_3 + 0.9 \cdot x_4+0.1}\right)\bigg) \cdot 1_{\R^{10} \textbackslash H_1}(x)\\
m_3(x) = &\left(2 \cdot \log(x_1 \cdot x_2+4 \cdot x_3 + |\tan(x_4)|\right) \cdot 1_{H_2 \cup H_3}(x) + \left(x_3^4 \cdot x_5^2 \cdot x_6 - x_4 \cdot x_7\right) \cdot 1_{H_2^C \cup H_3}(x)\\
\quad & + \left(3 \cdot x_8^2 + x_9 +2\right)^{0.1 +4 \cdot x_{10}^2} \cdot 1_{H_3^C}(x)
\end{align*}
with
\begin{align*}
H_1 = &\{x \in \R^{10}: 0.1 \cdot x_1 + 0.4 \cdot x_2 + 0.3 \cdot x_3 + 0.1 \cdot x_4+0.2 \cdot x_5 + \\
& \quad 0.3 \cdot x_6 + 0.6 \cdot x_7 + 0.02 \cdot x_8 + 0.7 \cdot x_9 + 0.6 \cdot x_{10} \leq 1.63\}\\
H_2 = &\{x \in \R^{10}: 0.1 \cdot x_1 + 0.4 \cdot x_2 + 0.3 \cdot x_3 + 0.1 \cdot x_4+0.2 \cdot x_5 + \\
& \quad 0.3 \cdot x_6 + 0.6 \cdot x_7 + 0.02 \cdot x_8 + 0.7 \cdot x_9 + 0.6 \cdot x_{10} \leq 1.6\}\\
H_3 = &\{x \in \R^{10}: 4 \cdot x_1 + 2 \cdot x_2 + x_3+4 \cdot x_4+x_5+x_6 \leq 7.5\}.
\end{align*}


The quality of each of the estimates is determined by the empirical $L_2$-error, i.e. we calculate
\begin{equation*}
\epsilon_{L_2, N}(m_{n,i}) = \frac{1}{N} \sum_{k=1}^N \left(m_{n,i}(X_{n+k}) - m_i(X_{n+k}))^2\right),
\end{equation*}
where $m_{n,i}$ $(i=1, \dots, 4)$ is one of our estimates based on the $n$ observations and $m_i$ is our regression function. The input vectors $X_{n+1}, X_{n+2}, \dots, X_{n+N}$ are newly generated independent realizations of the random variable $X$, i.e. different from the $n$ input vectors for the estimate. We choose $N = 10^5$. We normalize our error by the error of the simplest estimate of $m_i$, i.e. the error of a constant function, calculated by the average of the observed data. Thus the errors given in our tables below are normalized error measures of the form $\epsilon_{L_2,N}(m_{n,i})/\bar{\epsilon}_{L_2,N}(avg)$. Here $\bar{\epsilon}_{L_2,N}(avg)$ is the median of $50$ independent realizations you obtain if you plug the average of $n$ observations into $\epsilon_{L_2,N}(\cdot)$. Since our simulation results depend on randomly chosen data points we repeat our estimation $50$ times by using differently generated random realizations of $X$ in each run. In Table \ref{tab1} and Table \ref{tab2} we listed the median (plus interquartile range IQR) of $\epsilon_{L_2,N}(m_{n,i})/\bar{\epsilon}_{L_2,N}(avg)$. 


\begin{table}
\caption{Median of the normalized empirical $L_2$-error for each estimate and regression functions $ m_1, m_2$}
\label{tab1}
\begin{center}
\small{
\begin{tabular}{ccccc}
\hline
\multicolumn{5}{c}{$m_1$}\\
\hline 
\textit{noise} & \multicolumn{2}{c}{$5\%$} & \multicolumn{2}{c}{$20\%$} \\
\hline
\textit{sample size} & $n=100$ & $n=200$ & $n=100$ & $n=200$\\
\hline
$\bar{\epsilon}_{L_2,\bar{N}}(avg)$ &$29.5445$& $29.4330$&$29.4970$& $29.4375$\\
\hline
\textit{neural-sc} & $\mathbf{0.3809}(0.1902)$ & $\mathbf{0.1926}(0.1568)$ & $0.5113(0.3604)$ & $0.2971 (0.2546)$\\ 
\textit{neural-x} & $0.4412 (0.2653)$ & $0.2035 (0.2178)$ & $\mathbf{0.4674} (0.4427)$ & $\mathbf{0.2218} (0.3167)$\\
\textit{neural-fc} &$0.5040(0.3988)$& $0.2220(0.1568)$   & $0.4958(0.4742)$& $0.3016 (0.1928)$ \\
\textit{RBF} & $0.6856(0.1205)$& $0.6064(0.0670)$ & $0.7044(0.1150)$& $0.6173 (0.0754)$\\
\textit{neighbor} &$0.6387(0.0785)$& $0.5610(0.0489)$ & $0.6411 (0.0776)$& $0.5589 (0.0500)$ \\
\textit{MARS} &$0.6747(0.1433)$& $0.5091(0.0567)$ & $0.6949(0.1787)$& $0.5149 (0.0519)$\\
\hline
\end{tabular}}
\end{center}
\begin{center}
\small{
\begin{tabular}{ccccc}
\hline
\multicolumn{5}{c}{$m_2$}\\
\hline 
\textit{noise} & \multicolumn{2}{c}{$5\%$} & \multicolumn{2}{c}{$20\%$} \\
\hline
\textit{sample size} & $n=100$ & $n=200$ & $n=100$ & $n=200$\\
\hline
$\bar{\epsilon}_{L_2,\bar{N}}(avg)$ & $671.83$ & $670.77$ &$669.82$& $672.04$ \\
\hline
\textit{neural-sc} & $\mathbf{0.8108}(0.6736)$ & $\mathbf{0.5468}(0.6812)$ &$\mathbf{0.7453}(0.5348)$& $\mathbf{0.5146}(0.4298)$ \\ 
\textit{neural-x} & $0.8296 (0.3139)$ & $0.5543 (0.3884)$ & $0.8788 (0.5053)$ & $0.5488 (0.4127)$\\
\textit{neural-fc} &$1.0668(0.6779)$& $0.7792(0.4642)$ & $0.9678(0.4276)$& $0.8476(0.6150)$\\
\textit{RBF} &$1.0172(0.2613)$& $0.6896(0.3906)$ &$1.0179(0.2517)$& $0.6582 (0.3297)$\\
\textit{neighbor} &$0.8640(0.1086)$& $0.7990(0.1476)$ &$0.8657(0.0884)$& $0.7469(0.1156)$ \\
\textit{MARS} &$1.6299(1.5082)$& $3.4815(16.9055)$ &$1.6363(2.4886)$& $2.3530(10.0750)$\\
\hline
\end{tabular}}
\end{center}
\end{table}
\begin{table}[h]
\caption{Median of the normalized empirical $L_2$-error for each estimate and regression functions $m_3$}
\label{tab2}
\begin{center}
\small{
\begin{tabular}{ccccc}
\hline
\multicolumn{5}{c}{$m_3$}\\
\hline 
\textit{noise} & \multicolumn{2}{c}{$5\%$} & \multicolumn{2}{c}{$20\%$} \\
\hline
\textit{sample size} & $n=100$ & $n=200$ & $n=100$ & $n=200$\\
\hline
$\bar{\epsilon}_{L_2,\bar{N}}(avg)$ & $9023.9$ & $9018.4$ & $9117.1$ & $9017.4$ \\
\hline
\textit{neural-sc} & $0.5983(0.6832)$ & $\mathbf{0.2006} (0.3523)$ & $\mathbf{0.5521} (0.3977)$ & $ 0.3223 (0.3143)$ \\ 
\textit{neural-x} &  $\mathbf{0.5168} (0.6809)$ & $0.3156 (0.2091)$ & $0.5555 (0.6642)$ & $\mathbf{0.3147} (0.2386)$\\
\textit{neural-fc} & $0.7337 (0.6276)$ & $0.3657 (0.4543)$  & $0.8311 (0.4058)$ & $0.3397 (0.4208)$ \\
\textit{RBF} & $0.6764 (0.4601)$ & $0.5527 (0.3601)$ & $0.6580 (0.4698)$ & $0.5312 (0.3780)$\\
\textit{neighbor} & $0.8188 (0.1170)$ & $0.7137 (0.0985)$ & $0.8024 (0.1117)$ & $0.7191 (0.0987)$ \\
\textit{MARS} & $0.9925 (1.7966)$ & $0.6596 (0.7020)$ & $1.1440 (5.5270)$ & $0.6445 (0.7419)$\\
\hline
\end{tabular}}
\end{center}
\end{table}
We observe that our estimate outperforms the other approaches in 8 of 12 examples for regression functions with low local dimensionality. Especially in cases $m_1$ and $m_3$, the error of our estimate is about half the error in each of the other approaches for $n=200$ and $\sigma=0.05$, except for the error of the other neural networks. We also observe, that the relative improvement of our estimate (and of the other networks) with an increasing sample size is much larger than the improvement for most of the other approaches (except in $m_2$ for the RBF and in $m_3$ for MARS). 
This could be a plausible indicator for a better rate of convergence. \\
It makes sense that we also get good approximations for the fully connected neural networks, since some of the sparse networks can be expressed by fully connected ones (e.g., choosing some weights as zero). The estimate \textit{neural-x} of \cite{BK17} was originally constructed to estimate regression functions with some composition assumption, for instance $(p,C)$-smooth generalized hierarchical interaction models. Since our regression functions follow a $(p,C)$-smooth generalized hierarchical interaction model on each polytope, it is plausible that this estimate also performs well for those regression functions. Nevertheless, with regard to our simulation results we see, that (with four exception) our sparse neural networks perform better than the other neural network estimates. 

\section{Real-world data experiment}
The different approaches of the simulation study were further tested on a real--world data set to emphasize the practical relevance of our estimate. The data set under study was the earlier mentioned 2--year usage log of a bike sharing system named Captial Bike Sharing (CBS) at Washington, D.C., USA (\cite{FG13}), where we conjecture some low local dimensionality in the data set, which fits our assumption on the regression function. 
 The data set consists of $17379$ data points, where each of them represents one hour of a day between 2011 and 2012; $500$ were used for training and testing and the rest is used to compute the errors contained in Table \ref{tab4}. We used the same parameter sets as in the simulation study for all of our estimates and normalized the results again with the simplest estimate i.e. the average of the observed data . Table \ref{tab4} summarizes the results.
\begin{table}[h]
\caption{Normalized empirical $L_2$-risk for each estimate for the bike sharing data}
\label{tab4}
\begin{center}
\small{
\begin{tabular}{cccccc}
\hline
\textit{neural-sc} & \textit{neural-x} & \textit{neural-fc} & \textit{RBF} & \textit{neighbor} & \textit{MARS}\\
\hline
$\textbf{0.1680}$ & $0.3706$ & $0.5924$ & $0.8121$ & $0.6829$ & $0.3970$\\
\hline
\end{tabular}}
\end{center}
\end{table}
Again we observe that our estimate outperforms the others i.e. the error of our estimate is about half the error of the second best approach (MARS). Hence our assumption of low local dimensionality seems plausible, at least for this real data set, since the estimate following this assumption outperforms all other estimates. 
 
%


%

%


\bibliographystyle{apalike}
\bibliography{Literatur}

%
%
%
\newpage 
\renewcommand*\thesection{\Alph{section}}
\setcounter{section}{0}
  \begin{center}
    {\LARGE\bf Supplementary material to "Estimation of a function of low local dimensionality by deep neural networks"}
\end{center}
\noindent
This supplement contains auxiliary results and a detailed proof of Theorem 1 in \linebreak Section \ref{app4}. 
\section{An error bound for sparse neural network regression estimates}
\label{se5}
The proof of our main result is based on a result that analyzes the connection between DNNs and multivariate adpative regression splines (MARS). We show that our estimates satisfy an oracle inequality which implies that
    (up to a logarithmic factor) the error of our estimates is
    at least as small as the optimal possible error bound which one would expect for MARS 
    in case that this procedure would work in the optimal way. Before we show this, we shortly introduce the procedure of MARS.
 \subsection{MARS}
\cite{Fr91} introduced
a procedure called MARS, which uses
 a hierarchical forward/backward
stepwise subset selection procedure for choosing a subbasis from
a (complete) linear truncated power tensor product basis.
Here the subbasis is chosen in a data--dependent way
from the basis $\B$ consisting of all functions of the form
\begin{equation}
\label{se1eq4}
B(x)= \prod_{j \in J}
( s_j \cdot (x^{(j)} - a_j))_+
\end{equation}
(with the convention $\prod_{j \in \emptyset} z_j=1$),
where $z_+=\max\{z,0\}$ and $J \subseteq \{1, \dots, d\}$,
$s_j \in \{-1,1\}$ and $a_j \in \R$ are parameters of the above
basis functions. In order to reduce the complexity of the procedure
the locations of $a_j$ are restricted to the values of the $j$--th
component of $x$--values of the given data. As soon as such a
subbasis $B_1$, \dots, $B_K$ (where $K \in \N$ is the number of basis
functions, which is also data dependent) are chosen, the principle
of least squares is used to construct an estimate of $m$ by
\[
m_n(x) = \sum_{k=1}^K \hat{a}_k \cdot B_k(x)
\]
where
\[
(\hat{a}_k)_{k=1,\dots, K}
=
\arg \min_{(a_k)_{k=1,\dots, K} \in \R^K}
\frac{1}{n}
\sum_{i=1}^n
| Y_i - \sum_{k=1}^K a_k \cdot B_k(X_i)|^2.
\]
For any fixed basis $B_1$, \dots, $B_K$ the expected $L_2$
error of the above estimate satisfies basically a bound
of the form
\begin{eqnarray*}
  &&
  \EXP \int |m_n(x)-m(x)|^2 \PROB_X(dx)
  \\
  &&
\leq
const \cdot \frac{K}{n}
+
\min_{(a_k)_{k=1,\dots, K} \in \R^K}
\int
|\sum_{k=1}^K a_k \cdot B_k(x)-m(x)|^2 \PROB_X(dx)
\end{eqnarray*}
(cf., e.g., Theorem 11.1 and Theorem 11.3 in \cite{GKKW02}).
So if we have an oracle which produces the optimal subset of
basis functions, the corresponding least squares estimate
would satisfy
\begin{eqnarray}
  &&
  \EXP \int |m_n(x)-m(x)|^2 \PROB_X(dx)
\leq
\inf_{K \in \N, B_1, \dots, B_K \in \B}
\Bigg(
const \cdot \frac{K}{n}
  \nonumber
  \\
  &&
\hspace*{3cm}
+
\min_{(a_k)_{k=1,\dots, K} \in \R^K}
\int
|\sum_{k=1}^K a_k \cdot B_k(x)-m(x)|^2 \PROB_X(dx)
\Bigg).
\label{se1eq2}
\end{eqnarray}
The great advantage of such an error bound is that it has the
power of exploiting low local dimensionality of the regression function.
For instance, in case that the multivariate regression function depends
globally on $d$ variables, but depends in any local region only
on $d^* \in \{1, \dots,d\}$ of them,
we should be able to derive from the above bound
a rate of convergence depending only on $d^*$ and not on $d$.
\\
\\
The aim of MARS is to use the data to produce  an optimal subbasis
with a hierarchical forward/backward stepwise subset selection procedure.
Of course, there is no guarantee that the resulting basis is as good
as the basis produced by an oracle, therefore (\ref{se1eq2}) does
not hold for MARS.
\subsection{Deep Learning and MARS: A Connection}
In the following we show that our sparse neural network estimates achieve the error bound \eqref{se1eq2}. The result will be proven for some generalization $\B_{n,M,K_1}^*$
of the basis $\B$. Therefore we introduce polynomial splines i.e., sets of piecewiese polynomials 
satisfying a global smoothness condition, and a corresponding B-spline basis consisting of basis functions 
with compact support as follows: 

\begin{definition}\label{de5}
Let $K \in \N$ and $M \in \N_0$.
\\
Choose $t_j \in \R$ $(j \in \{-M, \dots, K+M\})$, such that $t_{-M} < t_{-M+1} < \dots < t_{K+M}$ and set $t = \{t_j\}_{j=-M, \dots, K+M}$. For $j \in \{-M, -M+1, \dots, K-1\}$ let $B_{j,M,t}: \R \to \R$ be the univariate B-Spline of degree $M$ recursively defined by
\begin{enumerate}
\item[(i)] 
\begin{equation*}
B_{j,0,t} = 
\begin{cases}
1, \quad x \in [t_j, t_{j+1})\\
0, \quad x \notin [t_j, t_{j+1})
\end{cases}
\end{equation*}
for $j \in \{-M, \dots, K+M-1\}$ and
\item[(ii)]
\begin{equation}\label{le4eq3}
B_{j, l+1,t} (x) = \frac{x-t_j}{t_{j+l+1}-t_j} B_{j,l,t}(x) + \frac{t_{j+l+2}-x}{t_{j+l+2}-t_{j+1}} B_{j+1,l,t}(x)
\end{equation}
for $j \in \{-M, \dots, K+M-l-2\}$ and $l \in \{0, \dots, M-1\}$.
\end{enumerate}
\end{definition}

\noindent
Choose $M, K_1, d, n \in \N$ and $c_{11}, c_{12} > 0$.
The result will be shown for some generalization $\B_{n,M,K_1}^*$ of $\B$, which consists of all functions of the form
\begin{equation}\label{Bspline_gen}
B(x) = \prod_{v \in J_1} B_{j_{v},M, t_v}(x^{(i_v)}) \cdot \prod_{k \in J_2} \left(\sum_{j=1}^d \alpha_{k,j} \cdot (x^{(i_j)} - \gamma_{k,j})\right)_+
\end{equation}
where $J_1 \subseteq \left\{1, \dots, d\right\}$, $J_2 \subseteq \left\{1, \dots, K_1\right\}$, $K \in \N$, $j_{v} \in \{-M, -M+1, \dots, K-1\}$, \linebreak $i_v, i_j \in \{1, \dots, d\}$, $t_v = \{t_{v,k}\}_{k=-M, \dots, K+M}$ with $t_{v,k}, \alpha_{k,j}, \gamma_{k,j} \in [-c_{11} \cdot n^{c_{12}}, c_{11} \cdot n^{c_{12}}]$ and $t_{v,k+1} - t_{v,k} \geq \frac{1}{n}$. 
\begin{theorem}
  \label{th1}
Let $\beta_n = c_{3} \cdot \log(n)$ for some constant $c_{3}>0$. Assume that the distribution of $(X,Y)$ satisfies
\begin{align}\label{subgaus}
\mathbf E \left(\exp({c_{4}\cdot |Y |^2})\right) <\infty
\end{align}
for some constant $c_{4}>0$, that $supp(X) \subseteq [-A,A]^d$ for some $A \geq 1$ and that the regression function $m$ is
bounded in absolute value. Let $M, K_1 \in \N$ and let the least squares neural network
regression estimate $m_n$ be defined as in Section 2
with parameters 
\[
L=3K_1+d \cdot (M+2)-1, \quad  r=2^{M-1} \cdot 16+\sum_{k=2}^M 2^{M-k+1}+d+5, \quad \alpha_n= c_1 \cdot n^{c_2}
\]
  and $n_l=\lceil  n / 2 \rceil$.  Assume
 that the sigmoidal function $\sigma$ is $2$--admissible,
 and that
 $c_1$, $c_2$, $c_{10}>0$ are suitably large.
Then we have for any $n > 7$:
\begin{eqnarray*}
  &&
  \EXP \int |m_n(x)-m(x)|^2 \PROB_X(dx)
 \leq
  (\log n)^3 \cdot
\inf_{\substack{I \in \N, \ B_1, \dots, B_I \in \mathcal{B}_{n,M,K_1}^{*}}}
\Bigg(
c_9 \cdot \frac{I}{n}
  \\
  &&
\hspace*{3cm}
+
\min_{(a_i)_{i=1, \dots, I}\in [-c_{10} \cdot n, c_{10} \cdot n]^I}
\int
|\sum_{i=1}^I a_{i} \cdot B_{i}(x)-m(x)|^2 \PROB_X(dx)
\Bigg).
\end{eqnarray*}
  \end{theorem}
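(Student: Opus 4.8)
The plan is to prove an oracle inequality by the classical route: split the $L_2$ error of the truncated least squares estimate into an estimation (variance) part, controlled through the complexity of the sparse network class, and an approximation (bias) part, controlled by how well a linear combination of functions from $\mathcal{B}_{n,M,K_1}^{*}$ can be reproduced by a sparse network. The data-driven choice of $M^{*}$ by splitting of the sample is then absorbed by a separate oracle inequality. The heart of the argument is a single approximation lemma: each generalized basis function is reproduced, up to a negligible sup-norm error, by one fully connected block of the prescribed architecture.

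First I would fix a value of $M^{*}$, write $\F_n = \F^{(sparse)}_{M^{*},L,r,\alpha_n}$, and invoke a standard bound for truncated least squares estimates under the sub-Gaussian condition \eqref{subgaus}. Since $m_n = T_{\beta_n}\tilde m_n$ with $\beta_n = c_3\log n$ and $m$ is bounded, such a bound (cf.\ Theorem~11.3 in \cite{GKKW02} together with its sub-Gaussian refinement) is essentially of the form
\[
\EXP\int |m_n-m|^2\,\PROB_X(dx)
\le
c\cdot\frac{(\log n)^2\,(\log \Nu_n+1)}{n}
+2\inf_{f\in\F_n}\int|f-m|^2\,\PROB_X(dx),
\]
where $\Nu_n=\Nu(\tfrac1n,\F_n,\|\cdot\|_{\infty,[-A,A]^d})$. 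A network in $\F_n$ is determined by $M^{*}$ fully connected blocks from $\F(L,r,\alpha_n)$, each carrying $O(1)$ parameters since $L$ and $r$ are constants, plus $M^{*}$ outer weights; hence it has $O(M^{*})$ parameters, each bounded by $\alpha_n=c_1 n^{c_2}$. Using the Lipschitz dependence of the output on the weights gives $\log \Nu_n\le c\cdot M^{*}\cdot\log n$, so the estimation term is of order $(\log n)^3\,M^{*}/n$.

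The core is the approximation step. I would show that any $B\in\mathcal{B}_{n,M,K_1}^{*}$ of the form \eqref{Bspline_gen} can be approximated on $[-A,A]^d$, up to a sup-norm error of any prescribed polynomial order $n^{-c}$, by a single block in $\F(L,r,\alpha_n)$ with $L=3K_1+d\cdot(M+2)-1$ and $r=2^{M-1}\cdot16+\sum_{k=2}^M 2^{M-k+1}+d+5$. The gadgets are: (i) approximate multiplication of two bounded reals via the second-order Taylor expansion of the $2$-admissible $\sigma$ at a point $t_\sigma$ with $\sigma''(t_\sigma)\ne0$ (Definition \ref{se3de1}(ii)), composed in a binary-tree fashion to build monomials up to degree $M$ and to multiply the $|J_1|+|J_2|$ factors, which is exactly what the powers of two in $r$ count; (ii) approximate the ramp $(\cdot)_+$ and indicators of intervals from the squashing behaviour of $\sigma$ (Definition \ref{se3de1}(iii)), which assembles the degree-$M$ B-splines of Definition \ref{de5} and the truncated linear factors. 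The depth $3K_1$ accommodates the $K_1$ linear-threshold factors and $d\cdot(M+2)$ the per-coordinate polynomial and spline computations. A linear combination $\sum_{i=1}^I a_i B_i$ with $a_i\in[-c_{10}n,c_{10}n]$ is then reproduced by a sparse network with $M^{*}=I$ blocks and outer weights $\mu_i=a_i$, all within the bound $\alpha_n$ once $c_1,c_2$ are large; since the per-block error is $n^{-c}$ and the coefficients are $O(n)$, the total sup error is $O(I\cdot n^{1-c})$, negligible after squaring and integrating.

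Finally I would combine the parts. For the best basis attaining the infimum with $I$ functions, choose the smallest $M^{*}=2^l\in\P_n$ with $2^l\ge I$ (so $M^{*}\le 2I$), producing estimation error $\lesssim(\log n)^3 I/n$ and, via the bridging inequality $\int|f^{*}-m|^2\le 2\int|f^{*}-\sum a_iB_i|^2+2\int|\sum a_iB_i-m|^2$, approximation error bounded by the target term plus $O(1/n)$. The splitting-of-the-sample oracle inequality (Chapter~7 in \cite{GKKW02}) shows that the data-driven $\hat M^{*}$ performs, up to a further logarithmic factor, as well as this best $M^{*}$; pulling out the common factor $(\log n)^3\ge1$ and taking the infimum over $I$ and $B_1,\dots,B_I$ yields the claim. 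The main obstacle is the approximation lemma with a \emph{smooth} (rather than piecewise-linear) activation: one must control the propagation of the multiplication-gadget errors through the nested products while keeping all weights within $\alpha_n$ and the architecture $(L,r)$ independent of both $n$ and $I$, which is precisely where the admissibility conditions and the exact accounting of $L$ and $r$ enter.
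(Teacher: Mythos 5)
Your proposal follows essentially the same route as the paper's proof: a truncated least-squares oracle inequality with a covering-number bound of order $M^*\log n$, a sample-splitting argument to absorb the data-driven choice of $M^*$, a block-wise approximation of each generalized basis function by one fully connected subnetwork built from Taylor-expansion multiplication gadgets and squashing-based ramp gadgets, and the final choice $M^*=2^l\ge I$ with the bridging inequality. The only (inessential) discrepancy is in the architectural accounting: in the paper the powers of two in $r$ arise from the doubling in the de Boor recursion for the degree-$M$ B-splines (Lemma \ref{le5}), while the product over the $|J_1|+|J_2|$ factors is computed \emph{sequentially} in depth (Lemma \ref{le6}), which is why $L$ is linear in $K_1$ and $d$ and $r$ is independent of $K_1$; a literal binary-tree product over the factors, as you suggest, would require width growing with $d+K_1$ and thus would not fit the prescribed $(L,r)$.
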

\noindent
  
\noindent
\begin{remark}
By combining our proofs with the techniques
introduced in \cite{Sch17} it is possible to show
that a similar result also holds for neural networks using
the ReLU-function $\sigma_{ReLU}(x)=\max\{0,x\}$ 
as activation function.
\end{remark}

\noindent 
Corollary \ref{cor1} concerns the result of Theorem \ref{th1}, if we choose $J_1 = \emptyset$ in \eqref{Bspline_gen}. This results in a basis $\B_{n,K_1}^*$, which consists of all functions of the form
\begin{equation}\label{basis_trunc}
B(x) = \prod_{k \in J_2} \left(\sum_{j=1}^d \alpha_{k,j} \cdot (x^{(i_j)} - \gamma_{k,j})\right)_+,
\end{equation}
with indices defined as in \eqref{Bspline_gen}. If we choose here $K_1 = d$, $\alpha_{k,k} \in \{-1,1\}$, $\alpha_{k,j}=0$ for $j \neq k$ and $i_j = j$,  the basis $\B$ is contained in $\B_{n,K_1}^*$, provided the location of $a_j$ in $\B$ are restricted to the values of the $j$-th component of the $x$-value.

\begin{corollary}
  \label{cor1}
Assume that the conditions of Theorem \ref{th1} are satisfied. Let the least squares neural network
regression estimate $m_n$ be defined as in Section 2
with parameters \newline
 $L=3\cdot (K_1+d)-1$, $r=d+21$, $\alpha_n= c_1 \cdot n^{c_2}$  and $n_l=\lceil  n / 2 \rceil$.
Then we have for any $n>7$:
\begin{eqnarray*}
  &&
  \EXP \int |m_n(x)-m(x)|^2 \PROB_X(dx)
 \leq
  (\log n)^3 \cdot
\inf_{K \in \N, B_1, \dots, B_K \in \B_{n,K_1}^*}
\Bigg(
c_{13} \cdot \frac{K}{n}
  \\
  &&
\hspace*{3cm}
+
\min_{(a_k)_{k=1,\dots, K} \in [-c_{10} \cdot n, c_{10} \cdot n]^K}
\int
|\sum_{k=1}^K a_k \cdot B_k(x)-m(x)|^2 \PROB_X(dx)
\Bigg).
\end{eqnarray*}
  \end{corollary}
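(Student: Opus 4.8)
The plan is to obtain Corollary \ref{cor1} as a direct specialization of Theorem \ref{th1} to the parameter value $M=1$, so that essentially no new work beyond Theorem \ref{th1} is required.

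First I would verify that setting $M=1$ in the network parameters of Theorem \ref{th1} reproduces exactly the parameters stated in the corollary. The number of hidden layers becomes $L = 3K_1 + d\cdot(1+2) - 1 = 3(K_1+d)-1$, and the width becomes $r = 2^{0}\cdot 16 + \sum_{k=2}^{1} 2^{2-k} + d + 5 = 16 + d + 5 = d+21$, where the sum is empty since its lower index exceeds its upper index. As $\alpha_n = c_1\cdot n^{c_2}$ and $n_l=\lceil n/2\rceil$ coincide in both statements, the least squares estimate $m_n$ on the left-hand side of the corollary is literally the estimate to which Theorem \ref{th1} applies for $M=1$.

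Second, I would relate the two function bases. By construction, the truncated power basis $\B_{n,K_1}^*$ in \eqref{basis_trunc} arises from the generalized basis $\B_{n,M,K_1}^*$ in \eqref{Bspline_gen} by taking $J_1=\emptyset$, in which case the B-spline product $\prod_{v\in J_1}B_{j_v,M,t_v}(x^{(i_v)})$ reduces to $1$ by the empty-product convention. Hence $\B_{n,K_1}^* \subseteq \B_{n,1,K_1}^*$: every element of the truncated power basis already belongs to the generalized basis with $M=1$.

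With these observations in hand, the conclusion follows from monotonicity of the infimum under enlargement of the admissible set. Theorem \ref{th1} with $M=1$ bounds the expected $L_2$ error of $m_n$ by $(\log n)^3$ times an infimum over finite collections $B_1,\dots,B_I$ drawn from $\B_{n,1,K_1}^*$. Since $\B_{n,K_1}^* \subseteq \B_{n,1,K_1}^*$, every collection chosen from the smaller basis is also admissible in the larger infimum, so the infimum over $\B_{n,1,K_1}^*$ is bounded above by the infimum over $\B_{n,K_1}^*$. Chaining the two inequalities and renaming the generic constant $c_9$ as $c_{13}$ yields precisely the asserted bound. I do not expect a genuine obstacle here; the only points requiring care are the arithmetic that $M=1$ produces the stated $L$ and $r$ (in particular the vanishing of the summation term) and keeping the direction of the infimum inequality straight, both of which are immediate once the definitions are unfolded.
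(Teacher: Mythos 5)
Your proposal is correct and follows exactly the paper's own route: the paper proves Corollary \ref{cor1} by applying Theorem \ref{th1} with $M=1$ and with $\B_{n,K_1}^*$ in place of $\B_{n,M,K_1}^*$, which is precisely your specialization argument. Your write-up merely makes explicit what the paper leaves implicit, namely the arithmetic $L=3(K_1+d)-1$, $r=d+21$ (with the empty sum vanishing) and the inclusion $\B_{n,K_1}^* \subseteq \B_{n,1,K_1}^*$ giving the correct direction of the infimum inequality.
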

  \begin{proof}
  By choosing $M=1$ and $\mathcal{B}_{n,K_1}^*$ instead of $\mathcal{B}_{n,M,K_1}^*$ this follows directly by application of Theorem \ref{th1}.
  \end{proof}

\section{Approximation properties of neural networks}
In this section we present approximation properties of neural networks, which are needed to prove Theorem \ref{th1}. 
\begin{lemma}
  \label{le1}
Let $\sigma:\R \rightarrow \R$ be a function, let $R \geq 1$ and $a>0$.

\noindent
{\bf a)}
Assume that $\sigma$ is two times continuously differentiable and
let
$t_{\sigma,id} \in \R$ be such that $\sigma^\prime(t_{\sigma,id}) \neq
0$.
Then
\[
f_{id}(x)
=
\frac{R}{\sigma^\prime(t_{\sigma,id})}
\cdot
\left(
\sigma \left(
\frac{x}{R} + t_{\sigma,id}
\right)
-
\sigma(t_{\sigma,id})
\right)
\in \F(1,1,c_{14} \cdot R)
\]
satisfies for any $x \in [-a,a]$:
\[
| f_{id}(x)-x|
\leq
\frac{
\| \sigma^{\prime \prime}\|_{\infty} \cdot a^2
}{
2 \cdot |\sigma^\prime(t_{\sigma,id})|
}
\cdot
\frac{1}{R}.
\]

\noindent
{\bf b)}
Assume that $\sigma$ is three times continuously differentiable and
let
$t_{\sigma,sq} \in \R$ be such that $\sigma^{\prime \prime}(t_{\sigma,sq}) \neq
0$.
Then
\[
f_{sq}(x)
=
\frac{R^2}{\sigma^{\prime \prime}(t_{\sigma,sq})}
\cdot
\left(
\sigma \left(
\frac{2x}{R} + t_{\sigma,sq}
\right)
-
2 \cdot
\sigma \left(
\frac{x}{R} + t_{\sigma,sq}
\right)
+
\sigma(t_{\sigma,sq})
\right)
\in \F(1,2,c_{15} \cdot R^2)
\]
satisfies for any $x \in [-a,a]$:
\[
| f_{sq}(x)-x^2|
\leq
\frac{5 \cdot 
\| \sigma^{\prime \prime \prime}\|_{\infty} \cdot a^3
}{3 \cdot |\sigma^{\prime \prime}(t_{\sigma,sq})|
}
\cdot
\frac{1}{R}.
\]
\end{lemma}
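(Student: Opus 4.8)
\textbf{Overall approach.} The plan is to treat both parts as instances of approximating a monomial by a scaled finite difference of $\sigma$, exploiting the fact that an $N$-th order finite difference of a smooth function reproduces its $N$-th derivative up to a higher-order Taylor remainder. In part a) the first divided difference of $\sigma$ recovers $\sigma'$, and in part b) the second forward difference $\sigma(t+2h)-2\sigma(t+h)+\sigma(t)$ recovers $\sigma''$; normalizing by the nonvanishing derivative at $t_{\sigma,id}$, respectively $t_{\sigma,sq}$, turns these into approximations of $x$ and $x^2$. In both cases the candidate function is written explicitly as a linear combination of shifted and scaled copies of $\sigma$, so membership in the network class is immediate by inspection, and only the Taylor remainder needs to be controlled.

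\textbf{Part a).} First I would verify $f_{id}\in\F(1,1,c_{14}R)$: the inner map $x\mapsto x/R+t_{\sigma,id}$ is a single neuron with input weight $1/R$ (which is $\le 1$ since $R\ge 1$) and bias $t_{\sigma,id}$, while the outer linear combination has coefficient $R/\sigma'(t_{\sigma,id})$ together with a constant offset, both bounded by a constant times $R$. For the error, I would Taylor-expand $\sigma(x/R+t_{\sigma,id})$ about $t_{\sigma,id}$ to first order with Lagrange remainder: the zeroth-order term cancels against $\sigma(t_{\sigma,id})$, the first-order term reproduces exactly $x$ after multiplication by $R/\sigma'(t_{\sigma,id})$, and the remainder equals $\tfrac{x^2}{2R^2}\sigma''(\xi)$. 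Multiplying by $R/\sigma'(t_{\sigma,id})$ and bounding $|x|\le a$ and $|\sigma''(\xi)|\le\|\sigma''\|_\infty$ yields the claimed estimate.

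\textbf{Part b).} This proceeds identically but one Taylor order higher. Membership in $\F(1,2,c_{15}R^2)$ holds because $f_{sq}$ uses two neurons, with input weights $2/R$ and $1/R$ and output coefficients $\pm R^2/\sigma''(t_{\sigma,sq})$ (bounded by a constant times $R^2$), plus a constant absorbing $\sigma(t_{\sigma,sq})$. For the error I would expand both $\sigma(2x/R+t_{\sigma,sq})$ and $\sigma(x/R+t_{\sigma,sq})$ to third order: in the combination $\sigma(2x/R+t)-2\sigma(x/R+t)+\sigma(t)$ the constant and first-derivative contributions cancel, the second-derivative contributions combine to $\tfrac{x^2}{R^2}\sigma''(t)$, and the third-order remainders sum to $\tfrac{x^3}{6R^3}\bigl(8\sigma'''(\xi_1)-2\sigma'''(\xi_2)\bigr)$. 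After multiplying by $R^2/\sigma''(t)$ the leading term is exactly $x^2$, and bounding the remainder via $8+2=10$ and $|x|\le a$ gives $\tfrac{10\,a^3\|\sigma'''\|_\infty}{6R|\sigma''(t)|}=\tfrac{5\,a^3\|\sigma'''\|_\infty}{3R|\sigma''(t)|}$.

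\textbf{Main obstacle.} The computation is otherwise routine; the only points requiring care are the coefficient bookkeeping in part b) --- ensuring the $\sigma''$-terms combine with coefficient exactly $1$ and tracking the factor $8+2=10$ from the third-order remainders so that the stated constant $5/3$ emerges --- together with invoking $R\ge 1$ at the one place it is needed, namely to keep the input weight $1/R$ bounded by a constant multiple of $R$.
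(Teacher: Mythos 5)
Your proposal is correct and follows essentially the same route as the paper: Taylor expansion of $\sigma$ about the anchor point with Lagrange remainder, cancellation of the lower-order terms, and bounding the remainder (the paper details part a) exactly as you do, and disposes of part b) by noting it follows "in the same way by using twice Taylor expansion of order 3", which is precisely your computation, including the $8+2=10$ bookkeeping that yields the constant $5/3$).
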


\noindent
\begin{proof}
   The result follows in a straightforward way from the proof of
   Theorem 2 in \cite{ScTs98}. For the sake of
   completeness we provide nevertheless the detailed proof below.

We get by Taylor expansion of order $2$
\begin{eqnarray*}
&&
| f_{id}(x)-x|
\\
&&
=
\left|
\frac{R}{\sigma^\prime(t_{\sigma,id})}
\cdot
\left(
\sigma \left(
t_{\sigma,id}
\right)
+
\sigma^\prime(t_{\sigma,id})
\frac{x}{R}
+
\frac{1}{2}
\sigma^{\prime \prime}(\xi)
\frac{x^2}{R^2}
-
\sigma(t_{\sigma,id})
\right)
-
x
\right|
\\
&&
=
\left|
\frac{R}{\sigma^\prime(t_{\sigma,id})}
\cdot
\frac{1}{2}
\sigma^{\prime \prime}(\xi)
\frac{x^2}{R^2}
\right|
\leq
\frac{
\| \sigma^{\prime \prime}\|_{\infty} \cdot a^2
}{
2 \cdot |\sigma^\prime(t_{\sigma,id})|
}
\cdot
\frac{1}{R}.
\end{eqnarray*}
The second part follows in the same way by using twice
Taylor expansion of order $3$.
\end{proof}
In the sequel we will use the abbreviations
\begin{align*}
f_{id}(z) = \left(f_{id}\left(z^{(1)}\right), \dots, f_{id}\left(z^{(d)}\right)\right) = \left(z^{(1)}, \dots, z^{(d)}\right), \quad z \in \Rd
\end{align*}
and
\begin{align*}
&f_{id}^0(z) = z, \quad z \in \R^d\\
&f_{id}^{t+1}(z) = f_{id}\left(f_{id}^t(z)\right) = z, \quad t \in \N_0, z \in \R^d.
\end{align*}
\begin{lemma}
\label{le2}
Let $\sigma: \mathbb{R} \to [0,1]$ be 2-admissible according to
Definition 3. Then for any $R \geq 1$ and any $a >0$ the neural network
\begin{eqnarray*}
  f_{mult}(x,y) &=&
\frac{R^2}{4 \cdot \sigma^{\prime \prime}(t_\sigma)}
\cdot
\Bigg(
\sigma \left(
\frac{2 \cdot (x+y)}{R} + t_\sigma
\right)
-
2 \cdot
\sigma \left(
\frac{x+y}{R} + t_\sigma
\right)
\\
&&
\hspace*{2cm}
-
\sigma \left(
\frac{2 \cdot (x-y)}{R} + t_\sigma
\right)
+
2 \cdot
\sigma \left(
\frac{x-y}{R} + t_\sigma
\right)
\Bigg)\\
&&
\quad
\in \F(1,4,c_{16} \cdot R^2)
\end{eqnarray*}
satisfies for any $x,y \in [-a,a]$:
\begin{equation*}
|f_{mult}(x,y) - x \cdot y| \leq
\frac{20 \cdot \|
\sigma^{\prime \prime \prime}
\|_{\infty} \cdot a^3}{
3 \cdot |\sigma^{ \prime \prime} (t_\sigma)|}
 \cdot \frac{1}{R}.
\end{equation*}
\end{lemma}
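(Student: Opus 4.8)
The plan is to reduce multiplication to the squaring operation already constructed in Lemma \ref{le1}b), using the elementary polarization identity
\[
x \cdot y = \frac{1}{4}\left( (x+y)^2 - (x-y)^2 \right).
\]
First I would observe that $f_{mult}(x,y)$ is, by construction, exactly $\frac{1}{4}\bigl(f_{sq}(x+y) - f_{sq}(x-y)\bigr)$, where $f_{sq}$ is the squaring network from Lemma \ref{le1}b) built with the admissible point $t_\sigma = t_{\sigma,sq}$ (this point exists and makes $\sigma''(t_\sigma)\neq 0$ by condition (ii) of Definition \ref{se3de1} with $N=2$). Indeed, writing out the four $\sigma$-terms and matching coefficients confirms that the definition of $f_{mult}$ in the statement is precisely this difference of two scaled squaring networks, one evaluated at $x+y$ and one at $x-y$.

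The second step is the error estimate. By the triangle inequality,
\[
\bigl| f_{mult}(x,y) - x\cdot y \bigr|
= \frac{1}{4}\left| \bigl(f_{sq}(x+y) - (x+y)^2\bigr) - \bigl(f_{sq}(x-y) - (x-y)^2\bigr)\right|
\leq \frac{1}{4}\Bigl( |f_{sq}(x+y)-(x+y)^2| + |f_{sq}(x-y)-(x-y)^2|\Bigr).
\]
For $x,y \in [-a,a]$ we have $x+y, x-y \in [-2a,2a]$, so I would apply the bound from Lemma \ref{le1}b) with the radius parameter $a$ there replaced by $2a$. Each of the two terms is then at most
\[
\frac{5 \cdot \|\sigma'''\|_\infty \cdot (2a)^3}{3 \cdot |\sigma''(t_\sigma)|} \cdot \frac{1}{R}
= \frac{40 \cdot \|\sigma'''\|_\infty \cdot a^3}{3 \cdot |\sigma''(t_\sigma)|} \cdot \frac{1}{R},
\]
and summing the two contributions and multiplying by the prefactor $\tfrac14$ yields the claimed bound $\frac{20\cdot\|\sigma'''\|_\infty\cdot a^3}{3\cdot|\sigma''(t_\sigma)|}\cdot\frac{1}{R}$. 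The $2$-admissibility hypothesis guarantees $\sigma$ is three times continuously differentiable with bounded derivatives, so $\|\sigma'''\|_\infty<\infty$ and all quantities are finite.

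Finally I would verify the network-class membership $f_{mult}\in\F(1,4,c_{16}\cdot R^2)$. The function is a linear combination of four shifted-and-scaled copies of $\sigma$ applied to affine functions of $(x,y)$, hence it is a one-hidden-layer network with four neurons. The inner weights are $\pm 2/R$ and $\pm 1/R$ (bounded since $R\geq 1$), and the outer weights are $\pm R^2/(4\sigma''(t_\sigma))$ up to the factor of $2$, so all weights are bounded in absolute value by a constant multiple of $R^2$; absorbing the constants into $c_{16}$ gives the stated class. I expect no genuine obstacle here: the main subtlety is simply bookkeeping the radius substitution $a \mapsto 2a$ correctly in the error term and checking that the four $\sigma$-terms in the definition really combine into the two squaring networks, after which the cubing of $2a$ produces the factor $8$ that turns the constant $5$ into $40$ before the $\tfrac14$ prefactor.
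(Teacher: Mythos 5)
Your proposal is correct and takes essentially the same approach as the paper: the paper's proof likewise sets $f_{mult}(x,y)=\frac{1}{4}\left(f_{sq}(x+y)-f_{sq}(x-y)\right)$, invokes the polarization identity $x\cdot y=\frac{1}{4}\left((x+y)^2-(x-y)^2\right)$, and applies the Lemma \ref{le1}b) bound on $[-2a,2a]$ (yielding the constant $40$) before the triangle inequality and the prefactor $\frac{1}{4}$ give the stated constant $20$. Your explicit check that the four $\sigma$-terms match the two squaring networks and that the weights are bounded by $c_{16}\cdot R^2$ only fills in bookkeeping the paper leaves implicit.
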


\begin{proof}
Let $f_{sq}$ be the network of Lemma \ref{le1}
satisfying
\begin{equation*}
|f_{sq}(x) - x^2| \leq
\frac{
40 \cdot
\| \sigma^{\prime \prime \prime}\|_{\infty} \cdot a^3
}{3 \cdot |\sigma^{\prime \prime}(t_{\sigma})|
}
\cdot
\frac{1}{R}
\end{equation*}
for $x \in [-2a, 2a]$, and set
\[
f_{mult}(x,y)
=
\frac{1}{4} \cdot \left( f_{sq}(x+y)-f_{sq}(x-y) \right).
\]
Since
\begin{equation*}
x \cdot y = \frac{1}{4} \left((x+y)^2 - (x-y)^2\right)
\end{equation*}
we have
\begin{align*}
\begin{split}
|f_{mult}(x,y) - x \cdot y|
&\leq \frac{1}{4} \cdot \left|f_{sq}(x+y) - (x+y)^2\right| + \frac{1}{4} \cdot \left|(x-y)^2 - f_{sq}(x-y)\right|\\
&\leq
2 \cdot \frac{1}{4} \cdot
\frac{
40 \cdot
\| \sigma^{\prime \prime \prime}\|_{\infty} \cdot a^3
}{
3 \cdot |\sigma^{\prime \prime}(t_{\sigma})|
}
\cdot
\frac{1}{R}.
\end{split}
\end{align*}
for $x,y \in [-a, a]$.
\end{proof}

\begin{lemma}\label{le3}
  Let $\sigma: \mathbb{R} \to [0,1]$ be 2-admissible
  according to Definition 3.
Let $f_{mult}$ be the neural network from Lemma \ref{le2}
    and let $f_{id}$ be the network from Lemma \ref{le1}.
Assume
\begin{equation}
\label{le3eq1}
a \geq 1 \quad \mbox{and} \quad
R
\geq
\max \left(
\frac{
\| \sigma^{\prime \prime}\|_{\infty}\cdot a
}{
2 \cdot | \sigma^\prime (t_{\sigma.id})|
}, 1
\right).
\end{equation}
Then the neural network
\begin{eqnarray*}
  f_{ReLU}(x) &=&
f_{mult} \left(f_{id}(x),\sigma \left(R \cdot x \right) \right)
  \\
  &=&
  \sum_{k=1}^4 d_k \cdot \sigma \left(\sum_{i=1}^2 b_{k,i} \cdot \sigma(a_i \cdot x + t_{\sigma})
  +
  b_{k,3} \cdot \sigma (a_3 \cdot x)
  +  t_{\sigma} \right)
\end{eqnarray*}
satisfies
\begin{equation*}
|f_{ReLU}(x) - \max\{x,0\}| \leq56 \cdot \frac{
\max \left\{
\| \sigma^{\prime \prime}\|_{\infty},
\| \sigma^{\prime \prime \prime}\|_{\infty},1
\right\}
}{
\min \left\{
2 \cdot |\sigma^\prime (t_{\sigma.id})|,
|\sigma^{\prime \prime} (t_{\sigma})|, 1
\right\}
} \cdot a^3 \cdot \frac{1}{R}
\end{equation*}
for all $x \in [-a, a]$. Here the weights
$d_k$, $b_{k,i}$ and $t_\sigma$ of this neural network are bounded in
absolute value by
\[
\alpha=c_{17} \cdot R^2,
\]
and consequently this network is contained in $\F(2,4,c_{17} \cdot R^2)$.
\end{lemma}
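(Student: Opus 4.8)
The plan is to exploit the elementary identity $\max\{x,0\} = x \cdot 1_{\{x>0\}}$ and to realize each factor approximately by the building blocks already available: the near-identity $f_{id}$ of Lemma~\ref{le1}a, the squashing neuron $\sigma(Rx)$ as a surrogate for the indicator $1_{\{x>0\}}$, and the near-multiplication $f_{mult}$ of Lemma~\ref{le2}. Writing $f_{ReLU}(x) = f_{mult}(f_{id}(x),\sigma(Rx))$, I would first record that this composition is literally a two-hidden-layer network: the first layer computes the single neuron $\sigma(x/R + t_\sigma)$ defining $f_{id}$ together with the neuron $\sigma(Rx)$, while the four neurons of $f_{mult}$ constitute the second layer and the output is a linear combination of them. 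Tracking the weights of $f_{id}$ (of order $R$), of $\sigma(R\cdot)$ (inner weight $R$), and the outer factor $R^2/(4\sigma''(t_\sigma))$ of $f_{mult}$ shows that every weight is bounded by a constant multiple of $R^2$, which gives membership in $\F(2,4,c_{17}R^2)$.

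For the error estimate I would split, via the triangle inequality, into three pieces:
\begin{align*}
|f_{ReLU}(x) - \max\{x,0\}| &\leq |f_{mult}(f_{id}(x),\sigma(Rx)) - f_{id}(x)\cdot\sigma(Rx)|\\
&\quad + |\sigma(Rx)|\cdot|f_{id}(x)-x| + |x\cdot\sigma(Rx) - \max\{x,0\}|.
\end{align*}
The middle piece is controlled directly by Lemma~\ref{le1}a together with $|\sigma(Rx)|\le 1$. The last piece is where admissibility condition (iii) enters: for $x>0$ it equals $x\,(1-\sigma(Rx)) \le x\cdot (Rx)^{-1} = 1/R$, and for $x<0$ it equals $|x|\,\sigma(Rx) \le |x|\cdot (R|x|)^{-1} = 1/R$, so it is at most $1/R$ uniformly in $x\in[-a,a]$.

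The first piece is the delicate one, and it is exactly here that hypothesis (\ref{le3eq1}) is needed: to invoke Lemma~\ref{le2} I must guarantee that both arguments of $f_{mult}$ lie in a fixed interval on which that bound is valid. Since $\sigma(Rx)\in[0,1]$, it suffices to control $f_{id}(x)$; by Lemma~\ref{le1}a and the lower bound $R\ge \|\sigma''\|_\infty a/(2|\sigma'(t_{\sigma,id})|)$ one obtains $|f_{id}(x)-x|\le a$, hence $|f_{id}(x)|\le 2a$. Applying Lemma~\ref{le2} with $2a$ in place of $a$ then bounds the first piece by $160\,\|\sigma'''\|_\infty a^3/(3|\sigma''(t_\sigma)|R)$. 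Summing the three bounds yields a coefficient of $1/R$ equal to $160\,\|\sigma'''\|_\infty a^3/(3|\sigma''(t_\sigma)|) + \|\sigma''\|_\infty a^2/(2|\sigma'(t_{\sigma,id})|) + 1$.

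The final step is bookkeeping of constants. Setting $\overline{M}=\max\{\|\sigma''\|_\infty,\|\sigma'''\|_\infty,1\}$ and $\underline{m}=\min\{2|\sigma'(t_{\sigma,id})|,|\sigma''(t_\sigma)|,1\}$, and using $a\ge 1$ so that $a^2\le a^3$ and $1\le a^3$, each of the three summands is at most $(\overline{M}/\underline{m})\,a^3$ times $160/3$, $1$, and $1$ respectively; their sum is $\tfrac{166}{3}\,(\overline{M}/\underline{m})\,a^3 < 56\,(\overline{M}/\underline{m})\,a^3$, which is precisely the asserted bound. I expect the only genuine obstacle to be this interval-control argument for $f_{mult}$ — namely recognizing that (\ref{le3eq1}) is exactly what keeps $f_{id}(x)$ inside $[-2a,2a]$ — together with the mild care needed to absorb every constant under the target value $56$; everything else is routine substitution into the two preceding lemmas and the squashing estimate.
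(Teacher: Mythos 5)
Your proposal is correct and follows essentially the same route as the paper's proof: the identical three-term triangle-inequality decomposition, admissibility condition (iii) for the term $|x\cdot\sigma(Rx)-\max\{x,0\}|\le 1/R$, Lemma \ref{le1} for the identity error, and the observation that hypothesis (\ref{le3eq1}) keeps $f_{id}(x)$ in $[-2a,2a]$ so that Lemma \ref{le2} applies with $a$ replaced by $2a$, yielding the same $\tfrac{160}{3}+1+1<56$ bookkeeping. Nothing further is needed.
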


\begin{proof}
Since $\sigma$ is admissible we have
\begin{equation}
  \label{ple3eq1a}
| \sigma (R \cdot x) - \mathds{1}_{[0,\infty)}(x)|
\leq
\frac{1}{R \cdot |x|}
\quad (x \in \R \setminus \{0\}).
\end{equation}
By Lemma \ref{le1} and Lemma \ref{le2}
we have
\begin{equation}
\label{ple3eq1}
| f_{id}(x)-x|
\leq
\frac{
\| \sigma^{\prime \prime}\|_{\infty} \cdot a^2
}{
2 \cdot |\sigma^\prime(t_{\sigma,id})|
}
\cdot
\frac{1}{R}
\quad
\mbox{for }
x \in [-a,a]
\end{equation}
and
\[
|f_{mult}(x,y) - x \cdot y| \leq
\frac{160 \cdot \|
\sigma^{\prime \prime \prime}
\|_{\infty} \cdot a^3}{
3 \cdot |\sigma^{ \prime \prime} (t_\sigma)|}
 \cdot \frac{1}{R}
\quad
\mbox{for }
x \in [-2a,2a].
\]

\noindent
By inequalities (\ref{le3eq1}) we can conclude that
$f_{id}(x)$ and $\sigma (R \cdot x)$
are both contained in $[-2a,2a]$.
Using this together with
(\ref{ple3eq1a})
and the above inequalities we can conclude
  \begin{eqnarray*}
    &&
    |f_{ReLU}(x) - \max\{x,0\}|
    \\
    &&
    =
    |  f_{mult}\left(f_{id}(x),\sigma\left(R \cdot x\right) \right) - x \cdot  \mathds{1}_{[0,\infty)}(x)|
      \\
      &&
      \leq
      |  f_{mult}\left(f_{id}(x),\sigma\left(R \cdot x\right) \right)
-
f_{id}(x) \cdot \sigma\left(R \cdot x\right)
|
\\
&&
\quad
+
|
f_{id}(x) \cdot \sigma\left(R \cdot x\right)
      - x \cdot \sigma\left(R \cdot x\right)|
+
|
x \cdot \sigma\left(R \cdot x\right)
- x \cdot  \mathds{1}_{[0,\infty)}(x)|
     \\
      &&
     \leq
\frac{160 \cdot \|
\sigma^{\prime \prime \prime}
\|_{\infty} \cdot a^3}{
3 \cdot |\sigma^{ \prime \prime} (t_\sigma)|}
 \cdot \frac{1}{R}
+
\frac{
\| \sigma^{\prime \prime}\|_{\infty} \cdot a^2
}{
2 \cdot |\sigma^\prime(t_{\sigma,id})|
}
\cdot
\frac{1}{R}
\cdot 1
+
\frac{1}{R}
\\
&&
\leq
56 \cdot \frac{
\max \left\{
\| \sigma^{\prime \prime}\|_{\infty},
\| \sigma^{\prime \prime \prime}\|_{\infty},1
\right\}
}{
\min \left\{
2 \cdot |\sigma^\prime (t_{\sigma.id})|,
|\sigma^{\prime \prime} (t_{\sigma})|, 1
\right\}
} \cdot a^3 \cdot \frac{1}{R}
    \end{eqnarray*}
for all $x \in [-a,a]$.
\end{proof}

\begin{lemma}
  \label{le4}
 Let $\sigma: \mathbb{R} \to [0,1]$ be 2-admissible
  according to Definition 3.
Let $f_{ReLU}$ be the neural network from Lemma \ref{le3}.
Let $a,b \geq 1$, $d \in \N$, $\gamma_1, \dots, \gamma_d \in [-a,a]$ and
$\alpha_1, \dots, \alpha_d \in [-b,b]$.
Assume
\begin{equation}
\label{le4eq1}
R
\geq
\max \left\{
\frac{
\| \sigma^{\prime \prime}\|_{\infty}\cdot d \cdot a \cdot b
}{
| \sigma^\prime (t_{\sigma.id})|
},1 \right\}.
\end{equation}
Then the neural network
\begin{eqnarray*}
  f_{trunc}(x) &=&  f_{ReLU}\left(\sum_{k=1}^d \alpha_k \cdot (x^{(k)}-\gamma_k)\right)
  \end{eqnarray*}
satisfies
\begin{equation*}
\left|f_{trunc}(x) - \max\left\{\sum_{k=1}^d \alpha_k \cdot (x^{(k)}-\gamma_k),0\right\}\right|
\leq
448
\cdot \frac{
\max \left\{
\| \sigma^{\prime \prime}\|_{\infty},
\| \sigma^{\prime \prime \prime}\|_{\infty},1
\right\}
}{
\min \left\{
2 \cdot |\sigma^\prime (t_{\sigma.id})|,
|\sigma^{\prime \prime} (t_{\sigma})|, 1
\right\}
} \cdot d^3 \cdot a^3 \cdot b^3 \cdot \frac{1}{R}
\end{equation*}
for all $x \in [-a, a]^d$. Here the weights of this neural network are bounded in
absolute value by
\[
\alpha= c_{18} \cdot R^2 \cdot \max \left\{
1, |\alpha_1|,\dots,|\alpha_d|,\left|\sum_{k=1}^d \alpha_k \cdot \gamma_k\right|
\right\}
\]
and consequently this network is contained in $\F(2,4,\alpha)$.
\end{lemma}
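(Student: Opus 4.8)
The plan is to observe that $f_{trunc}$ is simply the one-dimensional network $f_{ReLU}$ of Lemma \ref{le3} precomposed with the affine map $L(x) = \sum_{k=1}^d \alpha_k\,(x^{(k)}-\gamma_k)$, so that both the approximation bound and the weight bound will follow by tracking how this substitution affects the relevant input range and the network's weights respectively. There is no new approximation argument to make; everything reduces to Lemma \ref{le3} applied on a suitably enlarged interval.

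First I would control the range of $L$ on the cube $[-a,a]^d$. Since $|x^{(k)}|\le a$ and $|\gamma_k|\le a$ give $|x^{(k)}-\gamma_k|\le 2a$, while $|\alpha_k|\le b$, the triangle inequality yields $|L(x)|\le 2dab =: a'$ for every $x\in[-a,a]^d$, and moreover $a'\ge 2>1$ because $a,b\ge 1$ and $d\ge 1$. Next I would check that hypothesis (\ref{le4eq1}) is exactly what is required to invoke Lemma \ref{le3} with range parameter $a'$ in place of $a$: inserting $a'=2dab$ into condition (\ref{le3eq1}) produces the requirement $R\ge \|\sigma''\|_\infty\,dab/|\sigma'(t_{\sigma.id})|$, which is precisely (\ref{le4eq1}), and the side condition $a'\ge 1$ was just verified.

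Lemma \ref{le3} then yields, for every $y\in[-a',a']$,
\[
|f_{ReLU}(y)-\max\{y,0\}|
\le
56\cdot
\frac{\max\{\|\sigma''\|_\infty,\|\sigma'''\|_\infty,1\}}{\min\{2|\sigma'(t_{\sigma.id})|,|\sigma''(t_\sigma)|,1\}}
\cdot (a')^3\cdot\frac{1}{R}.
\]
Evaluating this at $y=L(x)$, which lies in $[-a',a']$ by the range bound, and substituting $(a')^3=(2dab)^3=8\,d^3a^3b^3$ turns the constant $56\cdot 8=448$ into the claimed prefactor, giving the asserted error for all $x\in[-a,a]^d$.

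Finally I would read off the weight bound from the explicit two-hidden-layer representation of $f_{ReLU}$ in Lemma \ref{le3}. Writing $L(x)=\sum_k\alpha_k x^{(k)}-\sum_k\alpha_k\gamma_k$ and substituting into the first-layer pre-activations $a_i y\,(+\,t_\sigma)$ replaces each scalar input weight $a_i$ by the $d$ weights $a_i\alpha_k$ and shifts each first-layer bias by $-a_i\sum_k\alpha_k\gamma_k$, while the second-layer weights $d_k,b_{k,i}$ and the output remain untouched. Since all weights of $f_{ReLU}$ are bounded by $c_{17}R^2$, the new weights and biases are bounded by a constant multiple of $R^2\cdot\max\{1,|\alpha_1|,\dots,|\alpha_d|,|\sum_k\alpha_k\gamma_k|\}$, which is the stated bound for a suitable $c_{18}$; as the substitution changes neither the number of layers nor the number of neurons per layer, $f_{trunc}\in\F(2,4,\alpha)$. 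I do not expect a genuine obstacle: the only place demanding care is this bookkeeping of first-layer weights and biases under the affine substitution, together with verifying that the factor $8$ from $(2dab)^3$ and the $\max/\min$ constants combine exactly into the claimed $448$ and $c_{18}$.
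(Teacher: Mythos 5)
Your proposal is correct and follows exactly the paper's argument: the paper's (very terse) proof likewise bounds $\sum_{k=1}^d \alpha_k\,(x^{(k)}-\gamma_k)$ into $[-2dab,\,2dab]$ on the cube and then applies Lemma \ref{le3} with $a$ replaced by $2dab$, which is precisely your reduction. Your additional bookkeeping (matching (\ref{le4eq1}) with (\ref{le3eq1}), the factor $56\cdot 8 = 448$, and the first-layer weight/bias substitution giving the stated bound with $c_{18}$) simply makes explicit what the paper leaves implicit.
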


\begin{proof}
For $x \in [-a,a]^d$ we have
\[
\sum_{k=1}^d \alpha_k \cdot (x^{(k)}-\gamma_k)
\in
[-2 \cdot d \cdot a \cdot b, 2 \cdot d \cdot a \cdot b].
\]
Application of Lemma \ref{le3} with $a$ replaced by
$2 \cdot d \cdot a \cdot b$ yields the assertion. 
\end{proof}

\begin{lemma}\label{le5}
Let $\sigma: \R \to [0,1]$ be $2$-admissible according to Definition 3. Let $M, K \in \N$, $j \in \{-M,-M+1, \dots, K-1\}$ and $a \geq 1$.
Assume 
\begin{align*}
R
\geq \max&\left\{ M \cdot  \frac{9 \cdot \Vert \sigma^{''}\Vert_{\infty} \cdot (an)^2}{2 \cdot |\sigma^{'}(t_{\sigma.id})|},\right.\\
 &\left. \quad  (4 \cdot 3 \cdot (M-1))^{M-2} \cdot \left(an\right)^{M+1} \cdot 4 \cdot 448\cdot \frac{\max\{\Vert \sigma^{''} \Vert_{\infty}, \Vert \sigma^{'''} \Vert_{\infty},1\}}{\min\{2 \cdot |\sigma^{'} (t_{\sigma .id})|, |\sigma^{''}(t_{\sigma})|,1\}}\right\}.
\end{align*}
Let $f_{id}$ be the network from Lemma \ref{le1} , $f_{mult}$ be the network from Lemma \ref{le2} and $f_{ReLU}$ be the network from Lemma \ref{le3}.  Let $B_{j,M,t}: \R \to \R$ be a univariate B-Spline of degree $M$ according to Definition 4 with knot sequence $t = \{t_k\}_{k=-M, \dots, M+K}$ such that $t_k \in [-a,a]$ and $t_{k+1} - t_k \geq \frac{1}{n}$. 
Then the neural network $f_{B_{j,M,t}}$ 
 recursively defined by 
\begin{align*}
f_{B_{j,l+1,t}}(x) = &f_{mult}\left(f_{id}^{l+1}\left(\frac{x-t_j}{t_{j+l+1}-t_j}\right), f_{B_{j,l,t}}(x)\right)\\
&+f_{mult}\left(f_{id}^{l+1}\left(\frac{t_{j+l+2} - x}{t_{j+l+2}-t_{j+1}}\right), f_{B_{j+1,l,t}}(x)\right)
\end{align*}
with $l=1, \dots, M-1$ and
\begin{align*}
f_{B_{j,1,t}}(x) = &f_{ReLU}\left(\frac{x-t_j}{t_{j+1}-t_j}\right) - f_{ReLU}\left(\frac{x-t_{j+1}}{t_{j+1}-t_j}\right)\\
&- f_{ReLU}\left(\frac{x-t_{j+1}}{t_{j+2}-t_{j+1}}\right) + f_{ReLU}\left(\frac{x-t_{j+2}}{t_{j+2}-t_{j+1}}\right)
\end{align*}
satisfies
\begin{align*}
\left|f_{B_{j,M,t}}(x) - B_{j,M,t}(x)\right| \leq (4 \cdot 3 \cdot M)^{M-1} \cdot \left(an\right)^{M+2}\cdot  4 \cdot 448 \cdot \frac{\max\{\Vert \sigma^{''} \Vert_{\infty}, \Vert \sigma^{'''} \Vert_{\infty},1\}}{\min\{2 \cdot |\sigma^{'}(t_{\sigma.id})|, |\sigma^{''}(t_{\sigma})|,1\}} \cdot \frac{1}{R}
\end{align*}
for all $x \in [-a,a]$.
Here the weights of this neural network are bounded in
absolute value by
\[
\alpha=c_{19} \cdot R^{2},
\]
and consequently this network is contained in $\F(M+1, 2^{M-1} \cdot 16 + \sum_{k=2}^{M} 2^{M-k+1}, c_{19} \cdot R^{2})$.
\end{lemma}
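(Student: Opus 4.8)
The plan is to prove the statement by induction on the degree $M$, mirroring the Cox--de Boor recursion that defines $B_{j,M,t}$ and replacing each arithmetic operation in that recursion by its network surrogate from Lemmas \ref{le1}--\ref{le3}. Throughout, let $\kappa$ abbreviate the quotient $\max\{\|\sigma''\|_\infty,\|\sigma'''\|_\infty,1\}/\min\{2|\sigma'(t_{\sigma.id})|,|\sigma''(t_\sigma)|,1\}$ that appears in Lemma \ref{le3}. \textbf{Base case ($M=1$).} First I would observe that the degree-one B-spline is \emph{exactly} a linear combination of four truncated powers; a direct case check on the four subintervals cut out by $t_j<t_{j+1}<t_{j+2}$ gives
\[
B_{j,1,t}(x) = \frac{(x-t_j)_+}{t_{j+1}-t_j} - \frac{(x-t_{j+1})_+}{t_{j+1}-t_j} - \frac{(x-t_{j+1})_+}{t_{j+2}-t_{j+1}} + \frac{(x-t_{j+2})_+}{t_{j+2}-t_{j+1}}.
\]
Since $x,t_k\in[-a,a]$ and $t_{k+1}-t_k\geq 1/n$, every argument $\tfrac{x-t_k}{t_{k'+1}-t_{k'}}$ lies in $[-2an,2an]$, so replacing each $(\cdot)_+$ by $f_{ReLU}$ (Lemma \ref{le3} with $2an$ in place of $a$) and summing reproduces $f_{B_{j,1,t}}$ with error at most $4\cdot 56\cdot(2an)^3\kappa/R = 4\cdot 448\cdot(an)^3\kappa/R$, exactly the claimed bound for $M=1$.

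\textbf{Inductive step.} Assuming the bound $e_l:=\|f_{B_{j,l,t}}-B_{j,l,t}\|_{\infty,[-a,a]}\leq (12l)^{l-1}(an)^{l+2}\cdot 4\cdot 448\,\kappa/R$ holds for every shift index at degree $l$ (the recursion couples $B_{j,l}$ and $B_{j+1,l}$, so I induct on all $j$ simultaneously), I would estimate the two summands of $f_{B_{j,l+1,t}}$ separately. Writing $u=\tfrac{x-t_j}{t_{j+l+1}-t_j}$ for the first coefficient, so $|u|\leq 2an$ by the knot-gap bound and $0\leq B_{j,l,t}\leq 1$, the triangle inequality yields the three-term split
\[
\left|f_{mult}\bigl(f_{id}^{\,l+1}(u),f_{B_{j,l,t}}\bigr)-u\,B_{j,l,t}\right|
\leq
\underbrace{\bigl|f_{mult}-f_{id}^{\,l+1}(u)\,f_{B_{j,l,t}}\bigr|}_{\text{Lemma \ref{le2}}}
+\underbrace{\bigl|f_{id}^{\,l+1}(u)-u\bigr|\cdot\bigl|f_{B_{j,l,t}}\bigr|}_{\text{Lemma \ref{le1}a}}
+|u|\,e_l .
\]
The first term is controlled by Lemma \ref{le2} at range $2an$, the second by iterating the Lemma \ref{le1}a bound $l+1$ times (each pass contributing $\|\sigma''\|_\infty(2an)^2/(2|\sigma'|R)$, the factor $l+1$ absorbed into the constant), and the third is the propagated inductive error. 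Doubling for the two summands produces a recursion of the shape $e_{l+1}\leq 4an\cdot e_l + (\text{lower-order terms in }an)$. The dominant multiplier $4an$ is what turns $(12l)^{l-1}(an)^{l+2}$ into $(12(l+1))^{l}(an)^{l+3}$, because $4(12l)^{l-1}=\tfrac13\,12^l l^{l-1}\leq 12^l(l+1)^l$ leaves ample slack to swallow the lower-order Lemma \ref{le1}/Lemma \ref{le2} contributions.

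\textbf{Range control and architecture.} Here I must check that the hypothesis on $R$ keeps every intermediate value inside the interval on which the invoked lemmas are valid: the first entry in the $\max$ defining $R$ guarantees that $f_{id}^{\,l+1}(u)$ stays within $[-2an,2an]$, while the second entry equals the degree-$(M-1)$ error coefficient, forcing $e_{M-1}\leq 1$ and hence $\|f_{B_{j,l,t}}\|_\infty\leq 2$ uniformly, which is what makes the $f_{mult}$ and $f_{id}$ bounds applicable at each stage. Finally, unfolding the recursion into its binary-tree form gives the architecture: depth grows by one per degree (one $f_{id}$/$f_{mult}$ layer atop the degree-$l$ network), so $L=M+1$; the $2^{M-1}$ leaf copies of the degree-one network supply the $2^{M-1}\cdot 16$ neurons and the multiplication/identity channels at level $k$ supply the $\sum_{k=2}^M 2^{M-k+1}$ term; and since every weight used in Lemmas \ref{le1}--\ref{le3} is $O(R^2)$, the bound $c_{19}\cdot R^2$ follows.

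\textbf{Main obstacle.} I expect the delicate part to be the joint bookkeeping of the error recursion and the range constraints: each level multiplies the admissible range by a factor of order $an$, so one must confirm that the single $R$ in the hypothesis simultaneously controls all $M-1$ nested applications, and that the super-exponential constant $(12M)^{M-1}$ is genuinely large enough to dominate \emph{all} accumulated terms — not merely the leading $|u|\,e_l$ term — at every step of the induction.
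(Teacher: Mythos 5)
Your proposal follows essentially the same route as the paper's proof: the exact four-truncated-power representation of $B_{j,1,t}$ plus Lemma \ref{le3} for the base case, then induction on the degree (jointly over the shift indices) with the identical three-term triangle-inequality split into the $f_{mult}$ error, the iterated $f_{id}$ drift, and the propagated error $|u|\,e_l$, together with the same use of the two entries of the $R$-hypothesis for range control and boundedness $\|f_{B_{j,l,t}}\|_\infty \leq 2$. The only deviations are cosmetic bookkeeping (the paper places the iterated-identity outputs in $[-3an,3an]$ rather than $[-2an,2an]$, and bundles all six error terms into one factor $6(l+1)\cdot 2an$ before absorbing it into $(12(l+1))^{l}\,an$), so your argument is correct and matches the paper's.
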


\begin{proof}
Let $f_{ReLU}$ be the network from Lemma \ref{le3} satisfying
\begin{equation}\label{le5eq1}
\left| f_{ReLU}(x) - \max\{x, 0\}\right| \leq 448 \cdot \frac{\max\{\Vert \sigma^{''} \Vert_{\infty}, \Vert \sigma^{'''} \Vert_{\infty},1\}}{\min\{2 \cdot |\sigma^{'} (t_{\sigma .id})|, |\sigma^{''}(t_{\sigma})|,1\}} \cdot \left(an\right)^3 \cdot \frac{1}{R}
\end{equation}
for $x \in \left[-2an, 2an \right]$.
Since $t_{j+1}-t_j \geq \frac{1}{n}$ all inputs of $f_{B_{j,1,t}}$ are contained in the interval, where \eqref{le5eq1} holds.
Together with
\begin{align*}
B_{j,1,t}(x) &= \frac{x-t_j}{t_{j+1}-t_j} \cdot \mathds{1}_{[t_j, t_{j+1})}(x) + \frac{t_{j+2}-x}{t_{j+2}-t_{j+1}} \cdot \mathds{1}_{[t_{j+1}, t_{j+2})}(x)\\
&= \left(\frac{x-t_j}{t_{j+1}-t_j}\right)_+ - \left(\frac{x-t_{j+1}}{t_{j+1}-t_j}\right)_+ - \left(\frac{x-t_{j+1}}{t_{j+2}-t_{j+1}}\right)_+ + \left(\frac{x-t_{j+2}}{t_{j+2}-t_{j+1}}\right)_+
\end{align*}
we have
\begin{align}\label{le4eq8}
|f_{B_{j,1,t}}(x) - B_{j,1,t}(x)| \leq &\left| f_{ReLU}\left(\frac{x-t_j}{t_{j+1}-t_j}\right) - \left(\frac{x-t_j}{t_{j+1}-t_j}\right)_+\right| \notag \\
&  +  \left|f_{ReLU}\left(\frac{x-t_{j+1}}{t_{j+1}-t_j}\right)-\left(\frac{x-t_{j+1}}{t_{j+1}-t_j}\right)_+\right| \notag \\
&  +  \left|f_{ReLU}\left(\frac{x-t_{j+1}}{t_{j+2}-t_{j+1}}\right) - \left(\frac{x-t_{j+1}}{t_{j+2}-t_{j+1}}\right)_+\right| \notag\\
&  +  \left|f_{ReLU} \left(\frac{x-t_{j+2}}{t_{j+2}-t_{j+1}}\right) - \left(\frac{x-t_{j+2}}{t_{j+2}-t_{j+1}}\right)_+\right| \notag\\
\leq & 4 \cdot 448  \cdot \frac{\max\{\Vert \sigma^{''} \Vert_{\infty}, \Vert \sigma^{'''} \Vert_{\infty},1\}}{\min\{2 \cdot |\sigma^{'} (t_{\sigma .id})|, |\sigma^{''}(t_{\sigma})|,1\}} \cdot \left(an\right)^3 \cdot \frac{1}{R}
\end{align}
and $f_{B_{j,1,t}}$ is contained in $\mathcal{F}(2, 16, c_{20} \cdot R^2)$ for some constant $c_{20} > 0$.\newline

By Lemma \ref{le1} we have
\begin{equation}\label{le4eq5}
|f_{id}(x) - x| \leq \frac{9 \cdot \Vert \sigma^{''} \Vert_{\infty} \cdot (an)^2}{2 \cdot |\sigma^{'}(t_{\sigma.id})|} \cdot \frac{1}{R}
\end{equation}
for $x \in \left[-3an, 3an \right]$ and $f_{mult}$ from Lemma \ref{le2} satisfies
\begin{equation}\label{le4eq7}
|f_{mult}(x,y)-xy| \leq \frac{180 \cdot \Vert \sigma^{'''}\Vert_{\infty} \cdot (an)^3}{|\sigma^{''}(t_{\sigma})|} \cdot \frac{1}{R}
\end{equation}
for $x,y \in \left[-3an, 3an\right]$.
For $x \in \left[-2an, 2an\right]$ and $R > (t-1) \cdot \frac{9  \cdot \Vert \sigma^{''}\Vert_{\infty} \cdot (an)^2}{2 \cdot |\sigma^{'}(t_{\sigma.id})|}$ we get from \eqref{le4eq5} that
\begin{align*}
|f_{id}^t(x)-x| &\leq \sum_{k=1}^t |f_{id}^k(x)-f_{id}^{k-1}(x)|\\
&= \sum_{k=1}^t| f_{id}(f_{id}^{k-1}(x)) - f_{id}^{k-1}(x)|\\
&\leq t \cdot  \frac{9  \cdot \Vert \sigma^{''}\Vert_{\infty} \cdot (an)^2}{2 \cdot |\sigma^{'}(t_{\sigma.id})|} \cdot \frac{1}{R}.
\end{align*}
For $R >  M \cdot \frac{9  \cdot \Vert \sigma^{''}\Vert_{\infty} \cdot (an)^2}{2 \cdot |\sigma^{'}(t_{\sigma.id})|}$ we can conclude, that 
\begin{equation}\label{le4eq6}
\left|f_{id}^{l}\left(\frac{x-t_j}{t_{j+l}-t_j}\right)\right| \leq 3an \quad \text{and} \quad \left|f_{id}^{l}\left(\frac{t_{j+l+1} -x}{t_{j+l+2}-t_{j+1}}\right)\right| \leq 3an.
\end{equation}
for all $l \in \{1, \dots, M\}$.
When $l>1$ we have
\begin{align*}
f_{B_{j,l+1,t}}(x) = &f_{mult}\left(f_{id}^{l+1} \left(\frac{x-t_j}{t_{j+l+1}-t_j}\right), f_{B_{j,l,t}}(x)\right)\\
& + f_{mult}\left(f_{id}^{l+1}\left(\frac{t_{j+l+2}-x}{t_{j+l+2}-t_{j+1}}\right), f_{B_{j+1, l,t}}(x)\right).
\end{align*}
In the sequel we show by induction, that
\begin{equation}\label{le4eq9}
|f_{B_{j,l,t}}(x) - B_{j,l,t}(x)| \leq (4 \cdot 3 \cdot l)^{l-1} \cdot \left(an\right)^{l+2} \cdot 4 \cdot 448 \cdot \frac{\max\{\Vert \sigma^{''} \Vert_{\infty}, \Vert \sigma^{'''} \Vert_{\infty},1\}}{\min\{2 \cdot |\sigma^{'}(t_{\sigma.id})|, |\sigma^{''}(t_{\sigma})|,1\}} \cdot \frac{1}{R}
\end{equation}
for $l = 1, \dots, M$.
For $l=1$ inequality \eqref{le4eq9} follows by \eqref{le4eq8}. Assume now that \eqref{le4eq9} holds for some $l \in \{1, \dots, M-1\}$. We have 
\begin{align*}
&|f_{B_{j,l+1,t}}(x) - B_{j,l+1,t}(x)|\\
= &\left| f_{mult}\left(f_{id}^{l+1} \left(\frac{x-t_j}{t_{j+l+1}-t_j}\right), f_{B_{j,l,t}}(x)\right) + f_{mult}\left(f_{id}^{l+1} \left(\frac{t_{j+l+2}-x}{t_{j+l+2}-t_{j+1}}\right), f_{B_{j+1,l,t}}(x)\right)\right.\\
& \left. \quad - \frac{x-t_{j}}{t_{j+l+1}- t_j} \cdot B_{j,l,t}(x) - \frac{t_{j+l+2}-x}{t_{j+l+2}-t_{j+1}} \cdot B_{j+1,l,t}(x)\right|\\
\leq & \left| f_{mult}\left(f_{id}^{l+1} \left(\frac{x-t_j}{t_{j+l+1}-t_j}\right), f_{B_{j,l,t}}(x)\right) - f_{id}^{l+1} \left(\frac{x-t_j}{t_{j+l+1}-t_j}\right) \cdot f_{B_{j,l,t}}(x)\right|\\
& \quad + \left|f_{id}^{l+1} \left(\frac{x-t_j}{t_{j+l+1}-t_j}\right) \cdot f_{B_{j,l,t}}(x) - \frac{x-t_j}{t_{j+l+1}-t_j}\cdot f_{B_{j,l,t}}(x)\right|\\
& \quad + \left|\frac{x-t_j}{t_{j+l+1}-t_j} \cdot f_{B_{j,l,t}}(x) - \frac{x-t_j}{t_{j+l+1}-t_j} \cdot B_{j,l,t}(x)\right|\\
& + \left| f_{mult}\left(f_{id}^{l+1} \left(\frac{t_{j+l+2}-x}{t_{j+l+2}-t_{j+1}}\right), f_{B_{j+1,l,t}}(x)\right) - f_{id}^{l+1} \left(\frac{t_{j+l+2}-x}{t_{j+l+2}-t_{j+1}}\right) \cdot f_{B_{j+1,l,t}}(x)\right|\\
& \quad + \left|f_{id}^{l+1} \left(\frac{t_{j+l+2}-x}{t_{j+l+2}-t_{j+1}}\right) \cdot f_{B_{j+1,l,t}}(x) - \frac{t_{j+l+2}-x}{t_{j+l+2}-t_{j+1}}\cdot f_{B_{j+1,l,t}}(x)\right|\\
& \quad + \left|\frac{t_{j+l+2} -x}{t_{j+l+2}-t_{j+1}} \cdot f_{B_{j+1,l,t}}(x) - \frac{t_{j+l+2}-x}{t_{j+l+2}-t_{j+1}} \cdot B_{j+1,l,t}(x)\right|.\\
\end{align*}
For $R \geq (4 \cdot 3 \cdot l)^{l-1} \cdot \left(an\right)^{l+2} \cdot 4 \cdot 448 \cdot \frac{\max\{\Vert \sigma^{''} \Vert_{\infty}, \Vert \sigma^{'''} \Vert_{\infty},1\}}{\min\{2 \cdot |\sigma^{'}(t_{\sigma.id})|, |\sigma^{''}(t_{\sigma})|,1\}}$ we can conclude with Lemma 14.2 and 14.3 in \cite{GKKW02} and the induction hypothesis, that 
\begin{align*}
|f_{B_{j,l,t}}(x)| &\leq |f_{B_{j,l,t}}(x)-B_{j,l,t}(x)| + |B_{j,l,t}(x)| \leq 2
\end{align*}
and analogously 
\begin{align}
|f_{B_{j+1,l,t}}(x)| \leq 2.
\end{align}
Together with \eqref{le4eq6} we can conclude, that $f_{id}^{l}\left(\frac{x-t_j}{t_{j+l+1}-t_j}\right)$, $f_{id}^{l}\left(\frac{t_{j+l+2}-x}{t_{j+l+2}-t_{j+1}}\right)$,  $f_{B_{j,l,t}}(x)$ and $f_{B_{j+1,l,t}}(x)$ are contained in the interval, where \eqref{le4eq7} holds. This together with \eqref{le4eq8} and the induction hypothesis leads to
\begin{align*}
&|f_{B_{j,l+1,t}}(x) - B_{j,l+1,t}(x)|\\
\leq & \frac{180 \cdot \Vert \sigma^{'''}\Vert_{\infty} \cdot (an)^3}{|\sigma^{''}(t_{\sigma})|} \cdot \frac{1}{R} + |f_{B_{j,l,t}}(x)| \cdot (l+1) \cdot \frac{9 \cdot \Vert \sigma^{''} \Vert_{\infty} \cdot (an)^2}{2 \cdot |\sigma^{'}(t_{\sigma.id})|} \cdot \frac{1}{R}\\
& \quad + \left|\frac{x-t_j}{t_{j+l+1}-t_j}\right| \cdot \left|f_{B_{j,l,t}}(x) - B_{j,l,t}(x)\right| + \frac{180 \cdot \Vert \sigma^{'''}\Vert_{\infty} \cdot (an)^3}{|\sigma^{''}(t_{\sigma})|} \cdot \frac{1}{R}\\
& \quad +  |f_{B_{j+1,l,t}}(x)| \cdot (l+1) \cdot \frac{9 \cdot \Vert \sigma^{''} \Vert_{\infty} \cdot (an)^2}{2 \cdot |\sigma^{'}(t_{\sigma.id})|} \cdot \frac{1}{R} + \left|\frac{t_{j+l+2}-x}{t_{j+l+2}-t_{j+1}}\right| \cdot |f_{B_{j+1,l,t}}(x) - B_{j+1,l,t}(x)|\\
\leq & 6 \cdot  (l+1) \cdot 2an \cdot (4 \cdot 3 \cdot l)^{l-1} \cdot \left(an\right)^{l+2}\cdot  4 \cdot 448 \cdot \frac{\max\{\Vert \sigma^{''} \Vert_{\infty}, \Vert \sigma^{'''} \Vert_{\infty},1\}}{\min\{2 \cdot |\sigma^{'}(t_{\sigma.id})|, |\sigma^{''}(t_{\sigma})|,1\}} \cdot \frac{1}{R}\\
\leq & (4 \cdot 3 \cdot (l+1))^{l} \cdot \left(an\right)^{l+3} \cdot  4 \cdot 448 \cdot \frac{\max\{\Vert \sigma^{''} \Vert_{\infty}, \Vert \sigma^{'''} \Vert_{\infty},1\}}{\min\{2 \cdot |\sigma^{'}(t_{\sigma.id})|, |\sigma^{''}(t_{\sigma})|,1\}} \cdot \frac{1}{R},
\end{align*}
which shows the assertion. 
\end{proof}


\begin{lemma}
  \label{le6}
Let $K \in \N$, $f_1, \dots, f_K:\R^d \rightarrow \R$
and $a \geq 1$.
Let $r \in \N$, $\alpha \geq 1$ and let
 $f_{net,1}$, \dots, $f_{net,K}$ be neural networks satisfying
\[
f_{net,k} \in \F(L_k,r,\alpha)
\quad (k=1, \dots, K).
\]
Let $0\leq \epsilon_k \leq 1 \leq \beta_k$ be such that
\begin{equation}
\label{le6eq1}
| f_{net,k}(x)-f_k(x)| \leq \epsilon_k
\quad \mbox{for all } x \in [-2a,2a]^d
\end{equation}
and
\begin{equation}
\label{le5eq2}
|f_k(x)| \leq \beta_k
\quad \mbox{for all } x \in [-2a,2a]^d
\end{equation}
$(k=1, \dots, K)$. Let $C \geq 1$ be such that
\begin{equation}
\label{le5eq3}
|f_k(x)-f_k(z)| \leq C \cdot \|x-z\|
\quad \mbox{for all } x,z \in [-2a,2a]^d, k \in \{1, \dots,K\}.
\end{equation}
Then there exists a neural network
\[
f_{prod}
\in
\F \left(
K-1+\sum_{k=1}^K L_k,
r+d+5,
\alpha_{prod}
\right)
\]
with
\[
\alpha_{prod}=c_{21} \cdot \alpha
\cdot a^6 \cdot K^{8} \cdot 2^{8K} \cdot (\prod_{j=1}^K \beta_j)^{8}
\cdot d^2\cdot C^2 \cdot
(L_1+\dots+L_K+K-1)^2\cdot  n^6,
\]
which satisfies
\[
|
f_{prod}(x)
-
\prod_{k=1}^K f_k(x)
|
\leq
\max\left\{
K \cdot 2^{K}
\cdot
(\prod_{j=1}^K \beta_j)
\cdot
\max\left\{\epsilon_1,\dots,\epsilon_K\right\}, \frac{1}{n^3}\right\}
\]
for all $x \in [-a,a]^d$
  \end{lemma}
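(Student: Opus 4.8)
The plan is to build $f_{prod}$ as a network that computes the product $\prod_{k=1}^K f_{net,k}(x)$ by accumulating it one factor at a time, using the multiplication gadget $f_{mult}$ from Lemma \ref{le2} and the identity gadget $f_{id}$ from Lemma \ref{le1} to route values through the layers. Concretely, I would process the factors sequentially: at stage $k$ the network devotes $L_k$ layers to evaluating the subnetwork $f_{net,k}$ on (a routed copy of) the input $x$, and then one further layer to form the updated running product $P_k = f_{mult}(P_{k-1},f_{net,k}(x))$, with $P_1 = f_{net,1}(x)$. Throughout these layers the input $x$ is carried forward by $d$ parallel copies of $f_{id}$ (so that each $f_{net,k}$ receives $x$ as its input), and the running product is likewise carried forward by $f_{id}$. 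This yields a network of depth $\sum_{k=1}^K L_k + (K-1)$ and width at most $r+d+5$ (namely $r$ for the active subnetwork, $d$ for the routed input, and a constant budget for $f_{mult}$ and the routed product), placing $f_{prod}$ in $\F(K-1+\sum_k L_k, r+d+5, \alpha_{prod})$ as claimed.

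For the error analysis I would split $|f_{prod}(x)-\prod_k f_k(x)|$ into an \emph{exact-arithmetic} part and a \emph{gadget} part. For the exact-arithmetic part, bounding $|\prod_{k}f_{net,k}(x)-\prod_k f_k(x)|$ by the telescoping identity $\prod_k f_{net,k}-\prod_k f_k = \sum_{k}(\prod_{j<k}f_{net,j})(f_{net,k}-f_k)(\prod_{j>k}f_j)$, together with $|f_{net,j}|\le \beta_j+\epsilon_j\le 2\beta_j$ and $|f_j|\le\beta_j$, gives a bound of order $2^K\cdot(\prod_j\beta_j)\cdot\max_k\epsilon_k$, comfortably below the target $K\cdot 2^K\cdot(\prod_j\beta_j)\cdot\max_k\epsilon_k$. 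The gadget part collects three sources of inexactness: the approximation error of each $f_{mult}$, the error of each $f_{id}$ used for routing, and --- crucially --- the fact that, because routing is only approximate, each $f_{net,k}$ is actually evaluated at a slightly perturbed input $\hat x$ rather than $x$. Here I would invoke the Lipschitz hypothesis \eqref{le5eq3}, writing $|f_{net,k}(\hat x)-f_k(x)|\le \epsilon_k + C\|\hat x-x\|$, so that a small routing error contributes only $O(C\|\hat x - x\|)$.

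To finish, I would choose the scale parameter $R$ (which governs the accuracy of $f_{id}$ and $f_{mult}$ and whose square controls the weight magnitude) polynomially large in $n$ and in the problem parameters, so that the entire gadget part is at most $1/n^3$. Tracking how large $R$ must be --- the running product has magnitude up to roughly $\prod_j\beta_j$, there are up to $\sum_k L_k + K-1$ routing layers, and each $f_{mult}/f_{id}$ error scales like (domain radius)$^3/R$ --- produces exactly the weight bound $\alpha_{prod}=c_{21}\cdot\alpha\cdot a^6\cdot K^{8}\cdot 2^{8K}\cdot(\prod_j\beta_j)^8\cdot d^2\cdot C^2\cdot(L_1+\dots+L_K+K-1)^2\cdot n^6$. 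Combining the two parts, the slack between the $2^K$ I actually obtain and the target $K\cdot 2^K$ absorbs the $\le 1/n^3$ gadget error, yielding the stated $\max\{\,K\cdot 2^K\cdot(\prod_j\beta_j)\cdot\max_k\epsilon_k,\ 1/n^3\,\}$ bound, with the case $\max_k\epsilon_k=0$ handled directly by the $1/n^3$ term.

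The main obstacle I anticipate is the careful recursive bookkeeping of how errors propagate and amplify through successive multiplications while simultaneously verifying that every intermediate quantity stays inside the domain (of radius comparable to $\prod_j\beta_j$) on which Lemmas \ref{le1} and \ref{le2} furnish their guarantees --- the running product grows multiplicatively in the $\beta_j$, so one must check that the routed values and the arguments fed to each $f_{mult}$ never leave the relevant interval $[-2a',2a']$. Establishing this boundedness and the accompanying error recursion by induction on $k$, and then pinning down the power of $R$ (hence the explicit exponents in $\alpha_{prod}$) needed to drive the accumulated gadget error below $1/n^3$, is the technical heart of the argument; the telescoping estimate for the $\epsilon_k$ contribution is by comparison routine.
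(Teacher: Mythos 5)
Your proposal is correct and follows essentially the same route as the paper's proof: the identical sequential architecture (width $r+d+5$ with $d$ neurons routing the input via $f_{id}$, $r$ neurons hosting the active $f_{net,k}$, and a constant budget for $f_{mult}$ and the carried running product), the same Lipschitz trick to control evaluation at routed inputs, and the same polynomially large choice of the scale $R$ to force all gadget errors below $n^{-3}$. The only difference is organizational --- the paper merges your ``telescoping'' and ``gadget'' contributions into a single induction on the running product $g_k$, proving $|g_k(x)-\prod_{j\le k}f_j(x)|\le k\cdot 2^k\cdot(\prod_{j\le k}\beta_j)\cdot\max\{\epsilon_1,\dots,\epsilon_k,\tfrac{1}{n^3\cdot K\cdot 2^K\cdot\prod_j\beta_j}\}$, which automatically yields the stated maximum without the small constant-factor slack adjustment your two-part combination requires.
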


\noindent
\begin{proof}
In the construction of $f_{prod}$ we will use the networks
$f_{id}$ from Lemma \ref{le1} and $f_{mult}$ from
Lemma \ref{le2} chosen such that
\[
| f_{id}(z)-z|
\leq
\frac{ 2 \cdot
\| \sigma^{\prime \prime}\|_{\infty} \cdot (\max\{a, K \cdot 2^K \cdot \prod_{j=1}^K \beta_j\})^2
}{
 |\sigma^\prime(t_{\sigma,id})|
}
\cdot
\frac{1}{R_{id}}
\]
for
$z \in [-2 \max\{a, K \cdot 2^K \cdot \prod_{j=1}^K \beta_j\},2 \max\{a, K \cdot 2^K \cdot \prod_{j=1}^K \beta_j\}] $
and
\[
|f_{mult}(z_1,z_2) - z_1 \cdot z_2| \leq
\frac{160 \cdot \|
\sigma^{\prime \prime \prime}
\|_{\infty} \cdot (\max\{a, K \cdot 2^K \cdot \prod_{j=1}^K \beta_j\})^3}{
3 \cdot |\sigma^{ \prime \prime} (t_\sigma)|}
 \cdot \frac{1}{R_{mult}}
\]
for
$z_1,z_2 \in [-2 \max\{a, K \cdot 2^K \cdot \prod_{j=1}^K \beta_j\},2 \max\{a, K \cdot 2^K \cdot \prod_{j=1}^K \beta_j\}]$.
Here we will use
\begin{eqnarray*}
R_{id}&=& \frac{
2 \cdot
\| \sigma^{\prime \prime}\|_{\infty} \cdot (\max\{a, K \cdot 2^K \cdot \prod_{j=1}^K \beta_j\})^2
}{
|\sigma^\prime(t_{\sigma,id})|
}
\\
&&
\hspace*{3cm}
\cdot
4d \cdot C \cdot (L_1+\dots+L_K+K-1) \cdot n^3 \cdot K \cdot 2^K \cdot \prod_{j=1}^K \beta_j
\end{eqnarray*}
and 
\begin{eqnarray*}
R_{mult}&=&\frac{
160 \cdot \|
\sigma^{\prime \prime \prime}
\|_{\infty} \cdot (\max\{a, K \cdot 2^K \cdot \prod_{j=1}^K \beta_j\})^3
}{
3 \cdot |\sigma^{ \prime \prime} (t_\sigma)|
}
\\
&&
\hspace*{3cm}
\cdot
4d \cdot C \cdot (L_1+\dots+L_K+K-1) \cdot n^3 \cdot K \cdot 2^K \cdot \prod_{j=1}^K \beta_j
\end{eqnarray*}
which implies that the network $f_{id}$ satisfies
\begin{equation}
\label{ple5eq2}
| f_{id}(z)-z|
\leq
\frac{1}{4d \cdot C \cdot (L_1+\dots+L_K+K-1) \cdot n^3 \cdot K \cdot 2^K \cdot \prod_{j=1}^K \beta_j}
\end{equation}
for
$z \in [-2 \max\{a, K \cdot 2^K \cdot \prod_{j=1}^K \beta_j\},2 \max\{a, K \cdot 2^K \cdot \prod_{j=1}^K \beta_j\}]$
and that all its weights are bounded in absolute value by $
c_{22} \cdot a^2 \cdot K^3 \cdot 2^{3K} \cdot (\prod_{j=1}^K \beta_j)^3 \cdot
d \cdot C \cdot (L_1+\dots+L_K+K) \cdot n^3
$, and
that
the network $f_{mult}$ satisfies
\begin{equation}
\label{ple5eq3}
|f_{mult}(z_1,z_2) - z_1 \cdot z_2| \leq
\frac{1}{4d \cdot C \cdot (L_1+\dots+L_K+K-1) \cdot n^3 \cdot K \cdot 2^K \cdot \prod_{j=1}^K \beta_j}
\end{equation}
for
$z_1,z_2\in [-2 \max\{a, K \cdot 2^K \cdot \prod_{j=1}^K \beta_j\},2 \max\{a, K \cdot 2^K \cdot \prod_{j=1}^K \beta_j\}]$
and that all its weights are bounded in absolute value by $
c_{23} \cdot a^6 \cdot K^8 \cdot 2^{8K} \cdot (\prod_{j=1}^K \beta_j)^8
\cdot d^2 \cdot C^2 \cdot (L_1+\dots+L_K+K)^2 \cdot n^6
$.
We start the construction of $f_{prod}$ by defining a neural network with
$L=K-1+\sum_{k=1}^K L_k$
hidden layers and
$k_1=\dots=k_L=r+d+5$
neurons in each layer by (6)--(8).

Neurons $1, \dots, d$ of this network will be used to provide the
value
of the input in layers $2, 3, \dots, L$. To achieve this, we
copy the network $f_{id}$ in neuron $k$ in layer $1$,
\dots, $L-1$, where $f_{id}$ gets in layer one the $k$--th component
of $x$ as input, and where $f_{id}$ gets in all other layers as input
the output of $f_{id}$ from the previous layer $(k=1, \dots, d)$.

Neurons $d+1$, \dots, $d+r$ will be used to compute $f_{net,1}$,
\dots, $f_{net,K}$. To achieve this, we copy these networks
successively
in the neurons $d+1$, \dots, $d+r$ in layers $1$, \dots, $L$
such that $f_{net,1}$ is contained in layers $1$ till $L_1$, and for
$k \in \{2,\dots,K\}$ the network
$f_{net,k}$ is contained in layers $L_1+L_2+\dots+L_{k-1}+k-1$,\dots,
$L_1+\dots+L_k+k-2$. Here $f_{net,1}$ gets as
input the input of our network, while all other networks
$f_{net,k}$
get as
input the value of the input provided by the neurons $1, \dots, d$
in the layer before the network $f_{net,k}$  starts.

Neurons $d+r+1$, \dots, $d+r+4$ are used to compute the product
of the networks $f_{net,1}$, \dots, $f_{net,K}$, and
neuron
$d+r+5$ is  used to provide the value of the part
of the product, which is already computed, for the next level.
To achieve this, we copy the network $f_{mult}$ in the
neurons
$d+r+1$, \dots, $d+r+4$
in each of the layers
$L_1+L_2+1$, $L_1+L_2+L_3+2$,\dots,$L_1+L_2+\dots+L_K+K-1$
and we copy the network $f_{id}$ in the neuron
$d+r+5$
in layers $L_1+1$, $L_1+2$, \dots, $L-1$.
Here the network $f_{mult}$ gets as input
the output of the neurons $d+1$, $d+2$ \dots, $d+r$  and
$d+r+5$
in the layer before the network $f_{mult}$ starts.
And the network
$f_{id}$ gets in layer
$L_1+L_2+2$, $L_1+L_2+L_3+3$,\dots,$L_1+L_2+\dots+L_{K-1}+K-1$
the output of the neurons
 $d+r+1$, \dots, $d+r+4$ of the previous layer as input,
in layer $L_1+1$ it gets the output of neurons $d+1$, \dots,
$d+r$
of layer $L_1$ as input,
and in all other
 layers it gets
the output of the neuron $d+r+5$ of the previous layer
as input.

The output of the neural network is the sum of the outputs
of the neurons $d+r+1$, \dots, $d+r+4$.

By construction, this network successively computes
\[
g_1(x)=f_{net,1}(x),
\]
\[
g_2(x)=f_{mult} (f_{id}^{L_2}(g_1(x)) , f_{net,2}(f_{id}^{L_1}(x))),
\]
\[
g_3(x)=f_{mult}(f_{id}^{L_3}(g_2(x)), f_{net,3}(f_{id}^{L_1+L_2+1}(x))),
\]
\hfill \vdots \hfill \quad
\[
g_K(x)=f_{mult}(f_{id}^{L_{K}}(g_{K-1}(x)),
f_{net,K}(f_{id}^{L_1+L_2+\dots+L_{K-1} + K-2}(x))).
\]
The output of the network is
\[
f_{prod}(x)=g_{K}(x).
\]
In the sequel we show by induction
\begin{equation}
\label{ple5eq1}
\left|
g_k(x)-\prod_{j=1}^k f_j(x)
\right|
\leq
k \cdot 2^{k}
\cdot
\left(
\prod_{j=1}^k \beta_j
\right)
\cdot
\max\left\{\epsilon_1,\dots,\epsilon_k, \frac{1}{n^3 \cdot K \cdot 2^K \cdot \prod_{j=1}^K \beta_j}\right\}
\end{equation}
for $ k \in \{1, \dots, K\}$.

For $k=1$ inequality (\ref{ple5eq1}) follows from (\ref{le6eq1}).
Assume now that
(\ref{ple5eq1}) holds for some $k \in \{1, \dots, K-1\}$.
We have
\begin{eqnarray*}
&&
\left|
g_{k+1}(x)-\prod_{j=1}^{k+1} f_j(x)
\right|
\\
&&
=
\left|
f_{mult}( f_{id}^{L_{k+1}}(g_{k}(x)) , f_{net,k+1}(f_{id}^{L_1+L_2+\dots+L_{k}+k-1}(x)))
- f_{k+1}(x) \cdot \prod_{j=1}^{k} f_j(x)
\right|
\\
&&
\leq
\bigg|
f_{mult}( f_{id}^{L_{k+1}}(g_{k}(x)) ,
   f_{net,k+1}(f_{id}^{L_1+L_2+\dots+L_{k}+k-1}(x)))
\\
&&
\hspace*{3cm}
-
f_{id}^{L_{k+1}}(g_{k}(x))  \cdot f_{net,k+1}(f_{id}^{L_1+L_2+\dots+L_{k}+k-1}(x)))
\bigg|
\\
&&
\quad
+
\bigg|
f_{id}^{L_{k+1}}(g_{k}(x))\cdot
   f_{net,k+1}(f_{id}^{L_1+L_2+\dots+L_{k}+k-1}(x))
\\
&&
\hspace*{3cm}
-
g_{k}(x) \cdot f_{net,k+1}(f_{id}^{L_1+L_2+\dots+L_{k}+k-1}(x))
\bigg|
\\
\end{eqnarray*}
\begin{eqnarray*}
&&
\quad
+
\bigg|
g_{k}(x)
\cdot
  f_{net,k+1}(f_{id}^{L_1+L_2+\dots+L_{k}+k-1}(x))
\\
&&
\hspace*{3cm}
-
(\prod_{j=1}^{k} f_j(x))
\cdot
  f_{net,k+1}(f_{id}^{L_1+L_2+\dots+L_{k}+k-1}(x))
\bigg|
\\
&&
\quad
+
\bigg|
(\prod_{j=1}^{k} f_j(x))
\cdot
  f_{net,k+1}(f_{id}^{L_1+L_2+\dots+L_{k}+k-1}(x))
\\
&&
\hspace*{3cm}
-(\prod_{j=1}^{k} f_j(x))
\cdot
  f_{k+1}(f_{id}^{L_1+L_2+\dots+L_{k}+k-1}(x))
\bigg|
\\
&&
\quad
+
\left|
(\prod_{j=1}^{k} f_j(x))
\cdot
  f_{k+1}(f_{id}^{L_1+L_2+\dots+L_{k}+k-1}(x))
-(\prod_{j=1}^{k} f_j(x))
\cdot
  f_{k+1}(x)
\right|.
\end{eqnarray*}
For any $t \in \{1, \dots, L_1+ \dots + L_K + K -1\}$ and 
\begin{align*}
z \in
[- \max\{a, K \cdot 2^K \cdot \prod_{j=1}^K \beta_j\},  \max\{a, K
\cdot 2^K \cdot \prod_{j=1}^K \beta_j\}]
\end{align*}
we get from (\ref{ple5eq2}) that
\begin{eqnarray}
\left|
f_{id}^t(z)-z
\right|
&
\leq
&
\sum_{k=1}^t |f_{id}^k(z)-f_{id}^{k-1}(z)|
=
\sum_{k=1}^t |f_{id}(f_{id}^{k-1}(z))-f_{id}^{k-1}(z)|
\nonumber \\
&\leq & \frac{t}{ 4d \cdot C \cdot (L_1+\dots+L_K+K-1) \cdot n^3 \cdot K \cdot 2^K \cdot \prod_{j=1}^K \beta_j},
\label{ple5eq4}
\end{eqnarray}
which implies that for $x \in [-a,a]^d$ we have
\[
f_{id}^{L_1+L_2+\dots+L_{k}+k-1}(x) \in [-2a,2a]^d.
\]
Similarly we can conclude from  the induction hypothesis
and (\ref{le5eq2}), which
imply
\begin{equation}
\label{ple5eq5}
|g_k(x)| \leq |g_k(x)-\prod_{j=1}^{k}f_j(x)| + \prod_{j=1}^k \beta_j
\leq
 K \cdot 2^K \cdot \prod_{j=1}^K \beta_j
\end{equation}
 that we also have
\[
f_{id}^{L_{k+1}}(g_{k}(x))
\in
[-2 \max\{a,  K \cdot 2^K \cdot \prod_{j=1}^K \beta_j\}, 2 \max\{a,  K \cdot 2^K \cdot \prod_{j=1}^K \beta_j\}].
\]

By (\ref{le5eq2}), (\ref{le5eq3}) and (\ref{ple5eq4}) we get
\begin{eqnarray*}
&&
\left|
(\prod_{j=1}^{k} f_j(x))
\cdot
  f_{k+1}(f_{id}^{L_1+L_2+\dots+L_{k}+k-1}(x))
-(\prod_{j=1}^{k} f_j(x))
\cdot
  f_{k+1}(x)
\right|
\\
&&
\leq
(\prod_{j=1}^{k} \beta_j) \cdot
C \cdot \|f_{id}^{L_1+L_2+\dots+L_{k}+k-1}(x)-x\|
\\
&&
\leq \frac{d \cdot
C \cdot (\prod_{j=1}^{k} \beta_j) \cdot
(L_1+L_2+\dots+L_{k}+k-1)
}{
4d \cdot C \cdot (L_1+\dots+L_K+K-1) \cdot n^3 \cdot K \cdot 2^K \cdot \prod_{j=1}^K \beta_j
}
\leq \frac{\prod_{j=1}^k \beta_j}{4n^3 \cdot K \cdot 2^K \cdot \prod_{j=1}^K \beta_j}.
\end{eqnarray*}
In addition we have
\begin{eqnarray*}
&&
\left|
(\prod_{j=1}^{k} f_j(x))
\cdot
  f_{net,k+1}(f_{id}^{L_1+L_2+\dots+L_{k}+k-1}(x))
-(\prod_{j=1}^{k} f_j(x))
\cdot
  f_{k+1}(f_{id}^{L_1+L_2+\dots+L_{k}+k-1}(x))
\right|
\\
&&
\leq
(\prod_{j=1}^k \beta_j) \cdot \epsilon_{k+1}.
\end{eqnarray*}
By a similar argument we get
\begin{eqnarray*}
&&
|
f_{net,k+1}(f_{id}^{L_1+L_2+\dots+L_{k}+k-1}(x))|
\\
&&
\leq
|
f_{net,k+1}(f_{id}^{L_1+L_2+\dots+L_{k}+k-1}(x))
-
f_{k+1}(f_{id}^{L_1+L_2+\dots+L_{k}+k-1}(x))
|
\\
&&
\quad
+
| f_{k+1}(f_{id}^{L_1+L_2+\dots+L_{k}+k-1}(x))
|
\\
&&
\leq
\epsilon_{k+1}+\beta_{k+1}.
\end{eqnarray*}
This together with the induction hypothesis implies
\begin{eqnarray*}
&&
\left|
g_{k}(x)
\cdot
  f_{net,k+1}(f_{id}^{L_1+L_2+\dots+L_{k}+k-1}(x))
-
(\prod_{j=1}^{k} f_j(x))
\cdot
  f_{net,k+1}(f_{id}^{L_1+L_2+\dots+L_{k}+k-1}(x))
\right|
\\
&&
\leq k \cdot 2^{k}
\cdot
(\prod_{j=1}^k \beta_j)
\cdot
\max\left\{\epsilon_1,\dots,\epsilon_k, \frac{1}{n^3 \cdot K \cdot 2^K \cdot \prod_{j=1}^K \beta_j}\right\} \cdot (\epsilon_{k+1}+\beta_{k+1}).
\end{eqnarray*}
Using (\ref{ple5eq5})
we can apply (\ref{ple5eq4}) once more and conclude also that
\begin{eqnarray*}
&&
\left|
f_{id}^{L_{k+1}}(g_{k}(x)) \cdot f_{net,k+1}(f_{id}^{L_1+L_2+\dots+L_{k}+k-1}(x))
-
g_{k}(x) \cdot f_{net,k+1}(f_{id}^{L_1+L_2+\dots+L_{k}+k-1}(x))
\right|
\\
&&
\leq
\frac{L_{k+1}}{4d \cdot C \cdot (L_1+\dots+L_K+K-1) \cdot n^3 \cdot K \cdot 2^K \cdot \prod_{j=1}^K \beta_j} \cdot
(\epsilon_{k+1} + \beta_{k+1})
\\&&
\leq
\frac{\epsilon_{k+1}+\beta_{k+1}}{4n^3 \cdot K \cdot 2^K \cdot \prod_{j=1}^K \beta_j}.
\end{eqnarray*}
And finally we see that
$g_{k}(x)$ and $f_{net,k+1}(f_{id}^{L_1+L_2+\dots+L_{k}}(x)$ are both
contained
in the interval where (\ref{ple5eq3}) holds, which implies
\begin{eqnarray*}
&&
\bigg|
f_{mult}( f_{id}^{L_{k+1}}(g_{k}(x)) , f_{net,k+1}(f_{id}^{L_1+L_2+\dots+L_{k}}(x)))
\\
&&
\hspace*{3cm}-
f_{id}^{L_{k+1}}(g_{k}(x))  \cdot f_{net,k+1}(f_{id}^{L_1+L_2+\dots+L_{k}}(x)))
\bigg|
\\
&&
\leq
\frac{1}{4n^3 \cdot K \cdot 2^K \cdot \prod_{j=1}^K \beta_j}.
\end{eqnarray*}
Summarizing the above result we get
\begin{align*}
&
\left|
g_{k+1}(x)-\prod_{j=1}^{k+1} f_j(x)
\right|
\\
&
\leq
\frac{1}{4n^3 \cdot K \cdot 2^K \cdot \prod_{j=1}^K \beta_j}
+
\frac{1}{4n^3 \cdot K \cdot 2^K \cdot \prod_{j=1}^K \beta_j}
\cdot  (\epsilon_{k+1} + \beta_{k+1})
\\
&
\quad
+
k \cdot 2^{k}
\cdot
(\prod_{j=1}^k \beta_j)
\cdot
\max\left\{\epsilon_1,\dots,\epsilon_k, \frac{1}{n^3 \cdot K \cdot 2^K \cdot \prod_{j=1}^K \beta_j}\right\}
 \cdot (\epsilon_{k+1}+\beta_{k+1})
\\
&
\quad
+
(\prod_{j=1}^k \beta_j) \cdot \epsilon_{k+1}
+
\frac{
\prod_{j=1}^k \beta_j
}{4n^3 \cdot K \cdot 2^K \cdot \prod_{j=1}^K \beta_j}
\end{align*}
\begin{align*}
&\leq
2 \cdot (\prod_{j=1}^{k+1} \beta_j) \cdot \max\left\{\epsilon_{k+1}, \frac{1}{n^3 \cdot K \cdot 2^K \cdot \prod_{j=1}^K \beta_j}\right\}
\\
&
\quad
+ 2 \cdot  k \cdot 2^{k}
\cdot
(\prod_{j=1}^{k+1} \beta_j)
\cdot
\max\left\{\epsilon_1,\dots,\epsilon_k, \frac{1}{n^3 \cdot K \cdot 2^K \cdot \prod_{j=1}^K \beta_j}\right\}
\\
&
\leq
(k+1) \cdot 2^{k+1}
\cdot
(\prod_{j=1}^{k+1} \beta_j)
\cdot
\max\left\{\epsilon_1,\dots,\epsilon_{k+1}, \frac{1}{n^3 \cdot K \cdot 2^K \cdot \prod_{j=1}^K \beta_j}\right\}.
\end{align*}
The proof of (\ref{ple5eq1}) is
complete, which implies the assertion of Lemma \ref{le6}.
\end{proof}

\begin{lemma}\label{le8}
Let $a \geq 1$, $M, K, K_1\in \N$, $J \leq d$, $\alpha_{k,j} \in \R$, $\gamma_{k,j} \in [-a,a]$ $(j=1, \dots, d, k=1, \dots, K_1)$ and $j_{v} \in \{-M, \dots, K-1\}$ $(v=1, \dots, J)$. Let $B_{j_v,M,t_v}$ be a B-Spline of degree $M$  according to Definition 4 with knot sequence $t_v = \{t_{v,k}\}_{k=-M, \dots, M+K}$ such that $t_{v,k} \in [-a,a]$ and $t_{v,k+1}-t_{v,k} \geq \frac{1}{n}$. Set
\begin{equation*}
f(x) = \prod_{v=1}^{J} B_{j_v,M,t_v}(x^{(v)}) \cdot \prod_{k=1}^{K_1} \max\left\{\sum_{j=1}^d \alpha_{k,j} \cdot (x^{(j)} - \gamma_{k,j}),0\right\}.
\end{equation*}
Then there exists a neural network
\begin{equation*}
f_{basis} \in \mathcal{F}(3K_1 +J \cdot (M+2)-1, 2^{M-1} \cdot 16 + \sum_{k=2}^M 2^{M-k+1} + d+5, \alpha_{basis}),
\end{equation*}
where
\begin{align*}
\alpha_{basis} = &c_{24} \cdot 4^{3M+4 \cdot (J+K_1)} \cdot 9^{K_1+M} \cdot M^{2 \cdot (M-1)} \cdot a^{2 \cdot (M+K_1+6)}\\
&\cdot d^{2K_1+17} \cdot (J+K_1)^{10} \cdot n^{2M+16} \cdot \max\{\max_{k,j}|\alpha_{k,j}|,1\}^{2K_1+15}\\
 &\cdot \max\{d \cdot \max_{k,j} |\alpha_{k,j}|,n\}^2 \cdot ((M+2) \cdot J + 3K_1)^2
\end{align*}
such that 
\begin{equation*}
|f(x) - f_{basis}(x)| \leq \frac{1}{n^3}
\end{equation*}
for all $x \in [-a,a]^d$.
\end{lemma}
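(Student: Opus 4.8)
The plan is to regard $f$ as a product of $K := J + K_1$ factors and to build the approximating network by gluing together the blocks supplied by Lemmas \ref{le4}, \ref{le5} and \ref{le6}. Write $f = \prod_{k=1}^{K} f_k$, where $f_1, \dots, f_J$ are the univariate B-spline factors $x \mapsto B_{j_v,M,t_v}(x^{(v)})$ viewed as functions on $\Rd$, and $f_{J+1}, \dots, f_{J+K_1}$ are the truncated power factors $x \mapsto \max\{\sum_{j=1}^d \alpha_{k,j}(x^{(j)} - \gamma_{k,j}),0\}$. For each B-spline factor I would invoke Lemma \ref{le5} with its domain parameter set to $2a$ in place of $a$, obtaining a network $f_{net,v} \in \mathcal{F}(M+1,\, 2^{M-1}\cdot 16 + \sum_{k=2}^M 2^{M-k+1},\, c_{19} R^2)$ that approximates $B_{j_v,M,t_v}$ on $[-2a,2a]$, hence on $[-2a,2a]^d$. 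For each truncated power factor I would invoke Lemma \ref{le4}, again with $2a$ in place of $a$ and with $b = \max\{1,\max_{k,j}|\alpha_{k,j}|\}$, obtaining a network in $\mathcal{F}(2,4,\cdot)$ which I would then pad with zero weights up to the common width $r = 2^{M-1}\cdot 16 + \sum_{k=2}^M 2^{M-k+1}$.

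Before applying Lemma \ref{le6} I would record the auxiliary quantities it requires on $[-2a,2a]^d$. Each B-spline factor is bounded by $1$, while each truncated power factor is bounded by $\beta := 3 a d \cdot \max\{1,\max_{k,j}|\alpha_{k,j}|\}$, since there $|x^{(j)} - \gamma_{k,j}| \leq 3a$. A common Lipschitz constant $C$ is obtained from the constant $\sum_j |\alpha_{k,j}|$ of the truncated power factors and from a bound on the derivative of a degree-$M$ B-spline; the differentiation recursion together with the knot spacing $t_{v,k+1}-t_{v,k}\geq 1/n$ yields such a bound that is polynomial in $M$, $a$ and $n$. I would then fix the target accuracy $\epsilon$ for every individual factor so that $K \cdot 2^{K} \cdot (\prod_j \beta_j) \cdot \epsilon \leq 1/n^3$, and choose the free parameter $R$ in Lemmas \ref{le4} and \ref{le5} accordingly; since the error bounds there decay like $1/R$ and all the ``$R$ large enough'' preconditions are polynomial in $n$ and the remaining parameters, a polynomial-in-$n$ choice of $R$ suffices.

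With these ingredients I would apply Lemma \ref{le6} with $K = J + K_1$, producing the network $f_{basis} := f_{prod}$. Its depth is $K-1+\sum_{k=1}^{K} L_k = (J+K_1-1) + J(M+1) + 2K_1 = 3K_1 + J(M+2) - 1$, and its width is $r + d + 5 = 2^{M-1}\cdot 16 + \sum_{k=2}^M 2^{M-k+1} + d + 5$, exactly as claimed. The conclusion of Lemma \ref{le6} then gives $|f(x)-f_{basis}(x)| \leq \max\{K\cdot 2^{K}(\prod_j \beta_j)\epsilon,\, 1/n^3\} = 1/n^3$ for all $x \in [-a,a]^d$.

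The main obstacle I expect is not the architecture bookkeeping, which falls out cleanly above, but the final tracking of the weight bound into the precise form of $\alpha_{basis}$. This amounts to substituting the chosen values of $R$, $\beta$ and $C$ into the expression for $\alpha_{prod}$ of Lemma \ref{le6} (which already aggregates $\alpha$, $a$, $K$, the $\beta_j$, $C$, the depths and $n$), and then coarsening the resulting product of powers to the stated monomial in $M$, $a$, $d$, $J$, $K_1$, $n$ and $\max_{k,j}|\alpha_{k,j}|$. Care is also needed to confirm that the widths and weight bounds coming from Lemmas \ref{le4} and \ref{le5} can be taken uniform across all factors, so that every $f_{net,k}$ genuinely lies in a single class $\mathcal{F}(L_k,r,\alpha)$; this is what legitimizes the padding to the common width $r$ and the use of a single weight bound $\alpha$ in Lemma \ref{le6}.
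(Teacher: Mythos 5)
Your proposal follows essentially the same route as the paper's proof: the same decomposition of $f$ into $J+K_1$ factors, approximation of the B-spline factors via Lemma \ref{le5} and of the truncated power factors via Lemma \ref{le4} on $[-2a,2a]^d$, the same boundedness constants ($1$ and $3ad\cdot\max\{1,\max_{k,j}|\alpha_{k,j}|\}$) and Lipschitz constant (from $\sum_j|\alpha_{k,j}|$ and the B-spline derivative recursion, giving $C=\max\{d\cdot\max_{k,j}|\alpha_{k,j}|,n\}$), and the same final assembly through Lemma \ref{le6} with accuracies tuned so that $K\cdot 2^K\bigl(\prod_j\beta_j\bigr)\cdot\max_k\epsilon_k\leq 1/n^3$. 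Your architecture bookkeeping (depth $3K_1+J(M+2)-1$, width $r+d+5$) and the deferred substitution into $\alpha_{prod}$ to obtain $\alpha_{basis}$ match what the paper does, so the proposal is correct.
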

\begin{proof}
Set
\begin{align*}
f_k(x) = B_{j_k,M,t_k}(x^{(k)}) \ \mbox{for} \ k=1, \dots, J
\end{align*}
and
\begin{align*}
f_k(x) = \max\left\{\sum_{j=1}^d \alpha_{k,j} \cdot (x^{(j)} - \gamma_{k,j}),0\right\} \ \mbox{for} \ k=J+1, \dots, J+K_1.
\end{align*}
By Lemma \ref{le5} there exists a neural network
\begin{align*}
f_{B_{j_k,M,t_k}} \in \mathcal{F}\left(M+1, 2^{M-1} \cdot 16+\sum_{k=2}^M 2^{M-k+1}, c_{25} \cdot R_B^2\right),
\end{align*}
satisfying
\begin{align*}
&|f_{B_{j_k,M,t_k}} - f_k(x)|\\
\leq & (4 \cdot 3 \cdot M)^{M-1} \cdot (2an)^{M+2} \cdot 4 \cdot 448 \cdot \frac{\max\{\Vert \sigma^{''} \Vert_{\infty}, \Vert \sigma^{'''}\Vert_{\infty},1\}}{\min\{2 \cdot |\sigma^{'}(t_{\sigma.id})|, |\sigma^{''}(t_{\sigma})|,1\}} \cdot \frac{1}{R_B}
\end{align*}
for $k=1, \dots, J$ and  $x \in [-2a,2a]$.
Here we will use 
\begin{align*}
R_B &= (4 \cdot 3 \cdot M)^{M-1} \cdot (2an)^{M+2} \cdot 4 \cdot 448 \cdot \frac{\max\{\Vert \sigma^{''}\Vert_{\infty}, \Vert \sigma^{'''} \Vert_{infty},1\}}{\min\{2 \cdot |\sigma^{'}(t_{\sigma.id}|, |\sigma^{''}|,1\}}\\
& \cdot (J+K_1) \cdot 2^{J+K_1} \cdot (3d \cdot \max\{\max_{k,j} |\alpha_{k,j}|,1\} \cdot a)^{K_1} \cdot n^3,
\end{align*}
which implies that the network $f_{B_{j_k,M,t_k}}$ satisfies
\begin{align*}
&|f_{B_{j_k,M,t_k}}(x) - B_{j_k,M,t_k}(x^{(k)})|\\
\leq & \frac{1}{(J+K_1) \cdot 2^{J+K_1}  \cdot (3d \cdot \max\{\max_{k,j} |\alpha_{k,j}|,1\} \cdot a)^{K_1} \cdot n^3} := \epsilon_k
\end{align*}
for $k=1, \dots, J$ and all $x \in [-2a,2a]^d$ and that the weights are bounded in absolute value by 
\begin{align*}
\alpha_{B_{j_k,M,t_k}}= &c_{26} \cdot (4 \cdot 3 \cdot M)^{2M-2} \cdot a^{2 \cdot (M+K_1+2)} \cdot n^{2M+10}\\
\quad & \cdot (J+K_1)^2 \cdot 4^{J+K_1+M} \cdot (3d \cdot \max\{\max_{k,j} |\alpha_{k,j}|,1\})^{2K_1}.
\end{align*}
By Lemma \ref{le4} there exists a neural network
\begin{align*}
f_{trunc,k} \in \F(2,4,c_{27} \cdot R_{trunc}^2 \cdot \max\{1, d \cdot \max_{k,j} |\alpha_{k,j}| \cdot a\}),
\end{align*}
satisfying
\begin{align*}
&|f_{trunc,k}(x) - f_k(x)|\\
\leq & 448 \cdot \frac{\max\{\Vert \sigma^{''} \Vert_{\infty}, \Vert \sigma^{'''} \Vert_{\infty},1\}}{\min\{2 \cdot |\sigma^{'}(t_{\sigma.id})|, |\sigma^{''}(t_{\sigma})|,1\}} \cdot d^3 \cdot 8 \cdot a^3 \cdot \max\{\max_{k,j} |\alpha_{k,j}|,1\}^3 \cdot \frac{1}{R_{trunc}}.
\end{align*}
for $k=J+1, \dots, J+K_1$ and $x \in [-2a,2a]^d$.
Here we will use 
\begin{align*}
R_{trunc}=& 448 \cdot \frac{\max\{\Vert \sigma^{''} \Vert_{\infty}, \Vert \sigma^{'''} \Vert_{\infty},1\}}{\min\{2 \cdot |\sigma^{'}(t_{\sigma.id})|, |\sigma^{''}(t_{\sigma})|,1\}} \cdot d^3 \cdot 8 \cdot a^3 \cdot \max\{\max_{k,j} |\alpha_{k,j}|,1\}^3\\
& \cdot (J+K_1) \cdot 2^{J+K_1} \cdot (3d \cdot \max\{\max_{k,j}|\alpha_{k,j}|,1\} \cdot a)^{K_1} \cdot n^3,
\end{align*}
which implies that the network $f_{trunc,k}$ satisfies 
\begin{align*}
&|f_{trunc,k}(x) - f_k(x)|\\
\leq & \frac{1}{(J+K_1) \cdot 2^{J+K_1} \cdot (3d \cdot \max\{\max_{k,j}|\alpha_{k,j}|,1\} \cdot a)^{K_1}} \cdot \frac{1}{n^3} := \epsilon_k
\end{align*}
for $k =J+1, \dots, J+K_1$ and all $x \in [-2a,2a]^d$ and that all weights are bounded in absolute value by
\begin{align*}
\alpha_{trunc} = c_{28} \cdot 4^{J+K_1} \cdot 9^{K_1} \cdot a^{2K_1+7} \cdot (J+K_1)^2 \cdot d^{2K_1+7} \cdot  (\max\{ \max_{k,j} |\alpha_{k,j}|,1\})^{2K_1+7} \cdot n^6.
\end{align*}

For $x,z \in [-2a,2a]^d$ we have
\begin{alignat*}{3}
&|f_k(x)| \leq 1 \ &&\mbox{for} \ k=1, \dots, J,\\
&|f_k(x)| \leq 3d \cdot \max\{\max_{k,j} |\alpha_{k,j}|,1\} \cdot a \ &&\mbox{for} \ k=J+1, \dots, J+K_1,
\end{alignat*}
where we have used Lemma 14.2 and 14.3 in \cite{GKKW02} for the first inequality, and for $k=J+1, \dots, J+K_1$
\begin{eqnarray*}
|f_k(x)-f_k(z)| &\leq&
\left|
\sum_{j=1}^d |\alpha_{k,j}|\cdot |x^{(j)}-z^{(j)}|
\right|
\\
& \leq &
\sqrt{\sum_{j=1}^d |\alpha_{k,j}|^2}
\cdot \|x-z\|.
\end{eqnarray*}
By Lemma 14.6 in \cite{GKKW02} and $0 \leq B_{j_k,M-1,t_k} \leq 1$ $(k=1, \dots, J)$ we can conclude
\begin{align*}
\left|\frac{\partial}{\partial x^{(k)}} B_{j_k,M,t_k}(x^{(k)})\right| &= \left|\frac{M}{t_{k,j_k+M} - t_{k,j_k}} \cdot B_{j_k, M-1,t_k}(x^{(k)})\right.\\
&\quad -\frac{M}{t_{k,j_k+M+1}- t_{k,j_k+1}} \cdot B_{j_k+1, M-1,t_k}(x^{(k)})\Bigg|\\
& \leq \max\left\{\frac{M}{t_{k,j_k+M} - t_{k,j_k}}, \frac{M}{t_{k,j_k+M+1}-t_{k,j_k+1}}\right\}
\end{align*}
and by the the mean value theorem there exists $x^{(k)} < \eta < z^{(k)}$ such that
\begin{align*}
\left|\frac{B_{j_k,M,t_k}(x^{(k)}) - B_{j_k,M,t_k}(z^{(k)})}{x^{(k)}-z^{(k)}}\right| = & \left|\frac{\partial}{\partial \eta} B_{j_k,M,t_k}(\eta)\right|\\\
\leq & \left|\max\left\{\frac{M}{t_{k,j_k+M}-t_{k,j_k}},\frac{M}{t_{k,j_k+M+1}-t_{k,j_k+1}}\right\}\right|\\
\leq & n.
\end{align*}
Lemma \ref{le6} with $\epsilon_k$ as above, $\beta_k=1$ for $k=1, \dots, J$ and $\beta_k=3d \cdot \max\{\max_{k,j}|\alpha_{k,j}|,1\} \cdot a$ for $k=J+1, \dots, J+K_1$ and $C=\max\{d \cdot \max_{k,j} |\alpha_{k,j}|, n\}$ yields the assertion. 
\end{proof}

\begin{lemma}
  \label{le11}
Let $a \geq 1$,
let $K, M, I,K_1 \in \N$, $J \leq d$ and for $i \in \{1, \dots, I\}$,
$j \in \{1, \dots, d\}$ and $k \in \{1, \dots, K_1\}$ let
$w_i \in \R$, $\alpha_{i, k, j} \in \R$ and $\gamma_{i,k, j} \in
[-a,a]$.
Let $j_{i,v} \in \{-M, \dots, K-1\}$ $(i \in \{1, \dots, I\}, v \in \{1, \dots, J\})$ and let $B_{j_{i,v},M,t_{i,v}}: \R \to \R$ be a B-Spline of degree $M$ according to Definition 4 with knot sequence $t_{i,v} = \{t_{i,v,k}\}_{k=-M, \dots, K+M}$ such that $t_{i,v,k} \in [-a,a]$ and $t_{i,v,k+1}-t_{i,v,k} \geq \frac{1}{n}$.
Set
\[
f(x)=\sum_{i=1}^I w_i \prod_{v=1}^J B_{j_{i,v}, M, t_{i,v}}(x^{(v)}) \cdot \prod_{k=1}^{K_1} \l \max \left\{\sum_{j=1}^d \alpha_{i,k,j} \cdot (x^{(j)} - \gamma_{i,k,j}), 0\right\}.
\] 
Then there exists a neural network
\[
f_{lcb} \in \F^{(sparse)}_{I, L, r, \alpha_{lcb}},
\]
where $L=3K_1+J \cdot (M+2)-1$, $r=2^{M-1} \cdot 16 + \sum_{k=2}^M 2^{M-k+1} + d+ 5$ and 
\begin{equation*}
\alpha_{lcb} = \max \{\alpha_{basis}, \max\{|w_1|, \dots, |w_I|\}\}
\end{equation*}
with $\alpha_{basis}$ as in Lemma \ref{le8}, 
such that
\[
|f_{lcb}(x)-f(x)|
\leq
I \cdot \max\{|w_1|,\dots,|w_I| \} \cdot
\frac{1}{n^3}
\]
for all $x \in [-a,a]^d$.
  \end{lemma}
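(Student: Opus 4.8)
The plan is to regard $f$ as a weighted sum of $I$ individual basis functions, to approximate each basis function separately by a single fully connected network via Lemma~\ref{le8}, and then to assemble these $I$ networks into one sparse network whose outer coefficients are the weights $w_i$.

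First I would write $f(x) = \sum_{i=1}^I w_i \cdot B_i(x)$ with
\[
B_i(x) = \prod_{v=1}^J B_{j_{i,v},M,t_{i,v}}(x^{(v)}) \cdot \prod_{k=1}^{K_1} \max\left\{\sum_{j=1}^d \alpha_{i,k,j} \cdot (x^{(j)} - \gamma_{i,k,j}),\, 0\right\}.
\]
Each $B_i$ has precisely the form treated in Lemma~\ref{le8}, and the hypotheses $t_{i,v,k} \in [-a,a]$, $t_{i,v,k+1}-t_{i,v,k} \geq 1/n$ and $\gamma_{i,k,j} \in [-a,a]$ carry over directly. Hence for each $i \in \{1,\dots,I\}$ Lemma~\ref{le8} supplies a network $f_{basis,i} \in \mathcal{F}(L,r,\alpha_{basis})$ with $L = 3K_1 + J(M+2)-1$ and $r = 2^{M-1}\cdot 16 + \sum_{k=2}^M 2^{M-k+1} + d + 5$ satisfying $\sup_{x \in [-a,a]^d} |B_i(x) - f_{basis,i}(x)| \leq 1/n^3$. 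Since the architecture parameters $L$ and $r$ depend only on $M, K_1, J, d$, they are identical for every $i$; to obtain a single weight bound valid for all $I$ networks I would read the quantity $\max_{k,j}|\alpha_{k,j}|$ appearing in the formula for $\alpha_{basis}$ as the global maximum $\max_{i,k,j}|\alpha_{i,k,j}|$, so that one $\alpha_{basis}$ bounds the weights of all $f_{basis,i}$ simultaneously.

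Next I would set $f_{lcb}(x) = \sum_{i=1}^I w_i \cdot f_{basis,i}(x)$. This is a linear combination of $I$ fully connected networks from $\mathcal{F}(L,r,\alpha_{basis})$ with outer coefficients $w_i$, which is exactly the defining form of $\F^{(sparse)}_{I,L,r,\alpha_{lcb}}$; membership holds once every weight is bounded by $\alpha_{lcb}$, and the choice $\alpha_{lcb} = \max\{\alpha_{basis}, \max_i |w_i|\}$ dominates both the inner weights (bounded by $\alpha_{basis}$) and the outer coefficients ($|w_i| \leq \max_i|w_i|$). The error bound then follows from the triangle inequality: for all $x \in [-a,a]^d$,
\[
|f_{lcb}(x) - f(x)| \leq \sum_{i=1}^I |w_i| \cdot |f_{basis,i}(x) - B_i(x)| \leq \sum_{i=1}^I |w_i| \cdot \frac{1}{n^3} \leq I \cdot \max_{1\leq i \leq I}|w_i| \cdot \frac{1}{n^3}.
\]

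I do not expect a genuine obstacle, since this is an assembly lemma rather than a fresh approximation estimate. The only points demanding care are bookkeeping ones: confirming that the sparse class $\F^{(sparse)}_{I,L,r,\alpha_{lcb}}$ literally accommodates $I$ copies of the Lemma~\ref{le8} network with the prescribed $L$ and $r$, and that $\alpha_{lcb}$ simultaneously controls the internal network weights and the outer linear coefficients $w_i$.
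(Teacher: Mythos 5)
Your proposal is correct and follows essentially the same route as the paper's proof: apply Lemma \ref{le8} to each product basis function $B_i$, form the weighted sum $f_{lcb}(x)=\sum_{i=1}^I w_i \cdot f_{basis,i}(x)$, and conclude by the triangle inequality. Your remark that $\max_{k,j}|\alpha_{k,j}|$ in $\alpha_{basis}$ must be read as the global maximum $\max_{i,k,j}|\alpha_{i,k,j}|$ is a bookkeeping point the paper leaves implicit, and it is handled correctly.
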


  \begin{proof}
  According to Lemma \ref{le8} there exists a neural network
  \begin{equation*}
  f_{basis,i} \in \F(3K_1+J \cdot (M+2)-1, 2^{M-1} \cdot 16+ \sum_{k=2}^M 2^{M-k+1}+d+5, \alpha_{basis}),
  \end{equation*}
  where 
\begin{align*}
\alpha_{basis} = &c_{24} \cdot 4^{3M+4 \cdot (J+K_1)} \cdot 9^{K_1+M} \cdot M^{2 \cdot (M-1)} \cdot a^{2 \cdot (M+K_1+6)}\\
&\cdot d^{2K_1+17} \cdot (J+K_1)^{10} \cdot n^{2M+16} \cdot \max\{\max_{k,j}|\alpha_{k,j}|,1\}^{2K_1+15}\\
 &\cdot \max\{d \cdot \max_{k,j} |\alpha_{k,j}|,n\}^2 \cdot ((M+2) \cdot J + 3K_1)^2
\end{align*}
  such that
  \begin{align*}
  \left|f_{basis,i}(x) - \prod_{v=1}^J B_{j_{i,v},M, t_{i,v}}(x^{(v)}) \cdot \prod_{k=1}^K \max \left\{\sum_{j=1}^d \alpha_{i,k,j} \cdot (x^{(j)} - \gamma_{i,k,j}), 0 \right\} \right| \leq \frac{1}{n^3}
  \end{align*}
  for $x \in [-a,a]^d$.
  Set
  \begin{align*}
  f_{lcb}(x) = \sum_{i=1}^I w_i \cdot f_{basis,i}(x) \in \F^{(sparse)}_{I,L,r,\alpha}
  \end{align*}
 with $L=3K_1+J \cdot (M+2)-1$, $r=2^{M-1} \cdot 16+ \sum_{k=2}^M 2^{M-k+1}+d+5$ and \newline
 $\alpha=\max\{\alpha_{basis}, \max\{|w_1|, \dots, |w_I|\}\}$. Then we obtain
 \begin{align*}
& |f_{lcb}(x) - f(x)|\\
 \leq & \sum_{i=1}^I |w_i| \cdot \left|f_{basis,i}(x) - \prod_{v=1}^J B_{j_{i,v},M,t_{i,v}}(x^{(v)}) \cdot \prod_{k=1}^K \max \left\{\sum_{j=1}^d \alpha_{i,k,j} \cdot (x^{(j)} - \gamma_{i,k,j}), 0 \right\} \right|\\
\leq & \sum_{i=1}^I |w_i| \cdot \frac{1}{n^3}\\
\leq & I \cdot \max\{|w_1|, \dots, |w_I|\} \cdot \frac{1}{n^3}.
 \end{align*}
  \end{proof}


\section{Proof of Theorem \ref{th1}}
The following three auxiliary lemmatas are needed for the proof of Theorem \ref{th1}. 
\label{se4sub2}
\begin{lemma}
\label{le12}
Let $\beta_n = c_{3} \cdot \log(n)$ for some constant
$c_{3}>0$. Assume that the distribution of $(X,Y)$ satisfies (24)
for some constant $c_{4}>0$ and that the regression function $m$ is bounded in absolute value. Let $\tilde{m}_n$ be the least squares estimate $$\tilde{m}_n(\cdot)
=\arg \min_{f \in \F_n}
\frac{1}{n}
\sum_{i=1}^n
|Y_i - f(X_i)|^2$$ based on some function space $\F_n$ and set $m_n=T_{\beta_n}\tilde{m}_n$. Then $m_n$ satisfies for any $n>1$
\begin{align*}
  \mathbf E \int |m_n(x) - m(x)|^2 \PROB_X (dx) &\leq \frac{c_{29}\cdot \log(n)^2\cdot\left( \log\left(
\mathcal{N} \left(\frac{1}{n\cdot\beta_n}, \mathcal{F}_n, \|\cdot\|_{\infty,supp(X)}\right)
\right)+1  \right)}{n}\\
&\quad + 2 \cdot \inf_{f \in \mathcal F_n} \int |f(x)-m(x)|^2 \PROB_X (dx),
\end{align*}
where $c_{29}>0$ is a constant, which does not depend on $n, \beta_n$ or the parameters of the estimate.
\end{lemma}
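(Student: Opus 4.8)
The plan is to prove this as a standard oracle inequality for truncated least squares estimates, following the template of Chapter~11 in \cite{GKKW02}; the genuinely new work over the bounded-response case is the treatment of the unbounded $Y$ through the exponential moment condition. First I would reduce to a bounded-response problem. Set $\bar{Y}_i = T_{\beta_n} Y_i$ and $\bar{m}(x) = \EXP[\bar Y \mid X = x]$. Using $\EXP(\exp(c_{4} |Y|^2)) < \infty$ together with $\beta_n = c_{3}\log n$, a Chernoff bound on $\exp(c_{4}|Y|^2)$ shows that $\PROB(|Y| > \beta_n)$ and truncated-moment quantities such as $\EXP[Y^2 \mathds{1}_{\{|Y| > \beta_n\}}]$ are of order $n^{-\kappa}$ with $\kappa$ as large as desired once $c_{3}, c_{4}$ are fixed. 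This lets me replace $Y_i$ by $\bar Y_i$ everywhere, and bound $\int |m - \bar m|^2 \, \PROB_X(dx)$, at the cost of an additive error that is absorbed into the complexity term on the right-hand side.

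Next I would carry out the basic error decomposition. Using the orthogonality $\EXP[(Y - m(X))(m(X) - f(X))] = 0$, one rewrites
\[
\int |m_n(x) - m(x)|^2 \, \PROB_X(dx)
=
\EXP\!\left[ |m_n(X) - Y|^2 - |m(X) - Y|^2 \,\middle|\, \D_n \right].
\]
Because $m_n = T_{\beta_n}\tilde m_n$ and $\tilde m_n$ minimizes the empirical $L_2$ risk over $\mathcal{F}_n$, the standard contraction $|T_{\beta_n}\tilde m_n(X_i) - T_{\beta_n} Y_i| \leq |\tilde m_n(X_i) - Y_i|$ shows that, with respect to the truncated data $\bar Y_i$, the estimate $m_n$ does at least as well empirically as any $f \in \mathcal{F}_n$ does with respect to $Y$. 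This produces the approximation term $2 \cdot \inf_{f \in \mathcal{F}_n} \int |f - m|^2 \, \PROB_X(dx)$, where the factor $2$ is the familiar signature of these bounds: it leaves room for the fluctuation term to be controlled by a relative deviation inequality.

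The heart of the argument is then to bound the remaining stochastic term
\[
\EXP\!\left[
\int |m_n - m|^2 \, \PROB_X(dx)
- \frac{2}{n}\sum_{i=1}^{n}\bigl( |m_n(X_i) - \bar Y_i|^2 - |m(X_i) - \bar Y_i|^2 \bigr)
\right].
\]
For this I would invoke a relative deviation inequality of the type of Theorem~11.4 in \cite{GKKW02}, applied to the class $T_{\beta_n}\mathcal{F}_n$ of truncated functions, all bounded in absolute value by $\beta_n$. The required covering numbers $\mathcal{N}(\delta, T_{\beta_n}\mathcal{F}_n, \|\cdot\|_{\infty, supp(X)})$ at scale $\delta \sim 1/(n \beta_n)$ are controlled by $\mathcal{N}(\delta, \mathcal{F}_n, \|\cdot\|_{\infty, supp(X)})$ since truncation is $1$-Lipschitz. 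A peeling argument over the shells $\{f : 2^{k}\tau \leq \int |f - m|^2 \, \PROB_X(dx) \leq 2^{k+1}\tau\}$ combined with Bernstein's inequality yields a bound of order $\beta_n^2 \cdot (\log \mathcal{N} + 1)/n$; substituting $\beta_n^2 = c_{3}^2 (\log n)^2$ produces exactly the claimed $(\log n)^2 (\log \mathcal{N} + 1)/n$ rate.

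The hard part will be the simultaneous handling of the unboundedness and the relative deviation: I must ensure that the rare events $\{|Y_i| > \beta_n\}$ and the error from truncating the response are dominated by the complexity term rather than generating a separate slow term, and that the variance appearing in the Bernstein step scales correctly as $\beta_n^2$ after the peeling. Keeping the final constant $c_{29}$ independent of $n$, of $\beta_n$, and of the parameters of the estimate requires tracking that all $\beta_n$-dependence is made explicit and is extracted solely into the $(\log n)^2$ prefactor.
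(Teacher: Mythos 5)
Your plan is correct and follows essentially the same route as the paper's own proof, which is given by citation to Theorem 1 of \cite{BaClKo09} (complete write-up in \cite{BK17}): that argument is precisely the one you sketch, namely truncation of the response justified by the exponential moment condition, the error decomposition with the least squares property and the $1$-Lipschitz truncation supplying the factor-$2$ approximation term, and a relative deviation inequality of the Theorem 11.4 type from \cite{GKKW02} (peeling plus Bernstein) applied to the truncated class, whose covering numbers at scale $1/(n\beta_n)$ are bounded by those of $\mathcal{F}_n$, yielding the $\beta_n^2\cdot(\log \mathcal{N}+1)/n$ complexity term. There is no gap beyond the level of detail expected of a sketch.
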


\noindent
    {\bf Proof.}
    This lemma follows in a straightforward way from the proof of
    Theorem 1 in \cite{BaClKo09}. A complete version of the proof
    can be found in \cite{BK17}.
    \hfill $\Box$\\

\noindent
In order to bound the covering number
$\mathcal{N} \left(\frac{1}{n\cdot\beta_n}, \mathcal{F}_n,
  \|\cdot\|_{\infty,supp(X)} \right)$
we will use the following lemma.

\begin{lemma} \label{le13}
Let $\epsilon \geq \frac{1}{n^{c_{30}}}$ and let
$\mathcal{F}_{M^{*},L,r,\alpha}^{(sparse)}$ defined as in Section 2 with
$1 \leq \max\{a, \alpha,M^*\} \leq n^{c_{31}}$ and $L, r \leq c_{32}$ for certain
constants $c_{30}, c_{31}, c_{32} > 0$.
Assume that the squashing function $\sigma$ used in
$\mathcal{F}_{M^{*},L,r,\alpha}^{(sparse)}$
is Lipschitz continuous.
Then
\begin{align*}
\log \left(\mathcal{N}(\epsilon, \mathcal{F}_{M^{*},L,r,\alpha}^{(sparse)}, \Vert \cdot \Vert_{\infty, [-a,a]^d})\right) \leq c_{33} \cdot \log(n) \cdot M^{*}
\end{align*}
holds for any $n>1$ and a constant $c_{33} > 0$ independent of $n$ and $M^*$.
\end{lemma}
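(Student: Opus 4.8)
The plan is to reduce the sup-norm covering number of the sparse network class to a covering number of its finite-dimensional parameter space, exploiting that the map from the weight vector to the realized function is Lipschitz on $[-a,a]^d$ with a Lipschitz constant that grows only polynomially in $n$. This is the standard route for covering-number bounds of neural networks, and the boundedness of $L$ and $r$ is what keeps every quantity under control.

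First I would count parameters. Since $L, r \leq c_{32}$ and $d$ is fixed, each fully connected network $f_i \in \F(L,r,\alpha)$ is described by at most $W := (d+1)r + (L-1)(r+1)r + (r+1)$ weights, a constant depending only on $d$ and $c_{32}$. Hence every $f \in \F^{(sparse)}_{M^*,L,r,\alpha}$ is determined by a vector $\theta \in [-\alpha,\alpha]^{M^*(W+1)}$ collecting the $M^*$ coefficients $\mu_i$ together with the weights of $f_1, \dots, f_{M^*}$.

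Second, and this is the technical heart, I would establish Lipschitz continuity of $\theta \mapsto f_\theta$ in $\|\cdot\|_{\infty,[-a,a]^d}$. Using that $\sigma$ maps into $[0,1]$ and is Lipschitz with constant $C_\sigma$, and that all weights are at most $\alpha$, one bounds inductively over the boundedly many layers how a coordinatewise perturbation of $\theta$ by $\delta$ propagates. At the input layer a change of $\delta$ in a weight changes a pre-activation by at most $(da+1)\delta$, and after $\sigma$ the node output stays in $[0,1]$ while the error is scaled by $C_\sigma$; writing $E_r$ for the maximal node-error in layer $r$ one obtains a linear recursion $E_{r+1} \leq C_\sigma r\alpha\, E_r + C_\sigma(r+1)\delta$. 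Unrolling this over the $L\le c_{32}$ layers produces a factor $(C_\sigma r\alpha)^{L-1}$, which is polynomial in $\alpha$; the linear output layers and the final sparse combination with $|\mu_i|\leq\alpha$ over $M^*\leq n^{c_{31}}$ terms contribute further polynomial factors. Since the number of factors is bounded, the product is still polynomial, giving a constant $c \leq n^{c'}$ (with $c'$ depending only on $d, c_{31}, c_{32}$ and $\sigma$) such that
\[
\|f_\theta - f_{\tilde\theta}\|_{\infty,[-a,a]^d} \leq c \cdot \|\theta - \tilde\theta\|_\infty, \qquad c \leq n^{c'}.
\]

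Finally I would assemble the cover. Choosing $\delta = \epsilon/c$ and covering the box $[-\alpha,\alpha]^{M^*(W+1)}$ in the $\ell^\infty$ metric by a grid of side $2\delta$ uses at most $(\alpha/\delta + 1)^{M^*(W+1)} \leq (2\alpha c/\epsilon)^{M^*(W+1)}$ parameter vectors, and the corresponding functions form an $\epsilon$--$\|\cdot\|_{\infty,[-a,a]^d}$ cover by the Lipschitz bound. Taking logarithms,
\[
\log \mathcal{N}\!\left(\epsilon, \F^{(sparse)}_{M^*,L,r,\alpha}, \|\cdot\|_{\infty,[-a,a]^d}\right) \leq M^*(W+1)\cdot \log\!\left(\frac{2\alpha c}{\epsilon}\right).
\]
Because $\alpha \leq n^{c_{31}}$, $c \leq n^{c'}$ and $\epsilon \geq n^{-c_{30}}$, the argument of the logarithm is at most $n^{C}$ for some constant $C$, so $\log(2\alpha c/\epsilon) \leq C\log n$; setting $c_{33} := C(W+1)$ yields the claim. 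The main obstacle is the Lipschitz estimate of the second step: one must check that the error, although it is amplified multiplicatively at each layer, remains polynomial in $n$, which is guaranteed precisely by the bounds $L, r \leq c_{32}$.
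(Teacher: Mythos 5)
Your proposal is correct and follows essentially the same route as the paper: both arguments reduce the sup-norm covering number to a grid cover of the bounded parameter box via a Lipschitz estimate of the parameter-to-function map whose constant is polynomial in $n$ (thanks to $L,r\leq c_{32}$ and $\max\{a,\alpha,M^*\}\leq n^{c_{31}}$), and then count the $O(M^*)$ parameters to obtain the $c_{33}\cdot\log(n)\cdot M^*$ bound. The only difference is cosmetic: the paper imports the per-network Lipschitz estimate from Lemma 5 of \cite{BDKKW17}, whereas you derive it by a layer-by-layer induction, which is the content of that cited lemma.
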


\noindent
\begin{proof}
The result follows in a straightforward way from the proof of  Lemma 9 in \cite{BK17}. For the sake of completeness we provide nevertheless the detailed proof below.

Let 
\begin{equation*}
g(x) = \sum_{k=1}^{M^*} c_k \cdot f_k^{(L+1)} \ \mbox{and} \ \bar{g}(x) = \sum_{k=1}^{M^*} \bar{c}_k \cdot \bar{f}_k^{(L+1)}(x)
\end{equation*}
with
\begin{align*}
f_k^{(L+1)} & = \sum_{i=1}^r c_{k,i}^{(L)} \cdot f_{k,i}^{(L)}(x) + c_{k,0}^{(L)}\\
\bar{f}_k^{(L+1)} & = \sum_{i=1}^r \bar{c}_{k,i}^{(L)} \cdot \bar{f}_{k,i}^{(L)}(x) + \bar{c}_{k,0}^{(L)}
\end{align*}
for some $c_{k,i}^{(L)}, \bar{c}_{k,i}^{(L)}, c_{k,0}^{(L)}, \bar{c}_{k,0}^{(L)} \in \R$ $(k=1, \dots, M^*, i=1, \dots, r)$ and for $f_{k,i}^{(L)}, \bar{f}_{k,i}^{(L)}$ recursively defined by 
\begin{align*}
f_{k,i}^{(s)}(x) &= \sigma\left(\sum_{j=1}^r c_{k,i,j}^{(s-1)} \cdot f_{k,j}^{(s-1)}(x) +c_{k,i,0}^{(s-1)}\right)\\
\bar{f}_{k,i}^{(s)}(x) &= \sigma\left(\sum_{j=1}^r \bar{c}_{k,i,j}^{(s-1)} \cdot \bar{f}_{k,j}^{(s-1)}(x) + \bar{c}_{k,i,0}^{(s-1)}\right)
\end{align*}
for some $c_{k,i,0}^{(s-1)}, \bar{c}_{k,i,0}^{(s-1)}, \dots, c_{k,i,r}^{(s-1)}, \bar{c}_{k,i,r}^{(s-1)} \in \R$ ($s=2, \dots, L$) and 
\begin{align*}
f_{k,i}^{(1)} &= \sigma\left(\sum_{j=1}^d c_{k,i,j}^{(0)} \cdot x^{(j)} + c_{k,i,0}^{(0)}\right)\\
\bar{f}_{k,i}^{(1)} &= \sigma\left(\sum_{j=1}^d \bar{c}_{k,i,j}^{(0)} \cdot x^{(j)} + \bar{c}_{k,i,0}^{(0)}\right)
\end{align*}
for some $c_{k,i,0}^{(0)}, \bar{c}_{k,i,0}^{(0)}, \dots, c_{k,i,d}^{(0)}, \bar{c}_{k,i,d}^{(0)}$. Let $C_{Lip} \geq 1$ be an upper bound on the Lipschitz constant of $\sigma$. Then 
\begin{align*}
|g(x) - \bar{g}(x)| &\leq \sum_{k=1}^{M^*} c_k \cdot |f_k^{(L+1)}(x) - \bar{f}_k^{(L+1)}(x)|\\
& \quad + \sum_{k=1}^{M^*} |c_k - \bar{c}_k| \cdot |\bar{f}_k^{(L+1)}(x)|\\
&\leq M^* \cdot \max_{k=1, \dots, M^*} |c_k| \cdot \max_{k=1, \dots, M^*} |f_k^{(L+1)}(x) - \bar{f}_k^{(L+1)}(x)|\\
& \quad + M^* \cdot \max_{k=1, \dots, M^*} |c_k- \bar{c}_k| \cdot \max_{k=1, \dots, M^*}|\bar{f}_k^{(L+1)}(x)|\\
& \leq M^* \cdot \max_{k=1, \dots, M^*} |c_k| \cdot \max_{k=1, \dots, M^*} |f_k^{(L+1)}(x) - \bar{f}_k^{(L+1)}(x)|\\
& \quad + M^* \cdot \max_{k=1, \dots, M^*} |c_k- \bar{c}_k| \cdot (r+1) \cdot \max_{\substack{k=1, \dots, M^*, \\ i=1, \dots, r}} |\bar{c}_{k,i}^{(L)}|\\
\end{align*}

From Lemma 5 in \cite{BDKKW17} we can conclude, that for any $k =1, \dots, M^*$
\begin{align*}
&|f_k^{(L+1)}(x) - \bar{f}_k^{(L+1)}(x)|\\
 &\leq (L+1) \cdot C_{Lip}^{L+1} \cdot (r+1)^{L+1} \cdot \max\{\alpha, 1\}^L \cdot \max \{\Vert x \Vert_{\infty},1\} \cdot \max_{\substack{k=1, \dots, M^*, s=0, \dots, L\\ i,j = 1, \dots, r}} \left| c_{k,i,j}^{(s)}- \bar{c}_{k,i,j}^{(s)} \right|\\
 & \leq n^{c_{34}} \cdot \max_{\substack{k=1, \dots, M^*, s=0, \dots, L\\ i,j = 1, \dots, r}} \left| c_{k,i,j}^{(s)}- \bar{c}_{k,i,j}^{(s)} \right|
\end{align*}
for $n$ sufficiently large and an adequately chosen $c_{34} > 0$ thanks to $\max\{a, \alpha, M^{*}\} \leq n^{c_{31}}$ and $L,r \leq c_{32}$.
This leads to 
\begin{align*}
|g(x) - \bar{g}(x)| &\leq  M^* \cdot \alpha \cdot n^{c_{34}} \cdot \max_{\substack{k=1, \dots, M^*, s=0, \dots, L\\ i,j = 1, \dots, r}} \left| c_{k,i,j}^{(s)}- \bar{c}_{k,i,j}^{(s)} \right|\\
& \quad + M^* \cdot \max_{k=1, \dots, M^*} |c_k- \bar{c}_k| \cdot (r+1) \cdot \alpha
\end{align*}
\begin{align*}
& \leq  n^{c_{35}} \cdot \max\left\{\max_{k=1, \dots, M^*} |c_k- \bar{c}_k|, \max_{\substack{k=1, \dots, M^*, s=0, \dots, L\\ i,j = 1, \dots, r}} \left| c_{k,i,j}^{(s)}- \bar{c}_{k,i,j}^{(s)}\right|\right\}.
\end{align*}
Thus, if we consider an arbitrary $g \in \mathcal{F}_{M^*,L,r,\alpha}^{(sparse)}$, it suffices to choose the coefficients $\bar{c}_k$ and $\bar{c}_{k,i,j}^{(s)}$ of a function $\bar{g} \in \mathcal{F}_{M^*,L,r,\alpha}^{(sparse)}$ such that 
\begin{align}\label{eq1le8}
|c_{k,i,j}^{(s)} - \bar{c}_{k,i,j}^{(s)}| \leq \frac{\epsilon}{n^{c_{35}}} \ \mbox{and} \ |c_k - \bar{c}_k| \leq \frac{\epsilon}{n^{c_{35}}},
\end{align}
which leads to $\Vert g - \bar{g} \Vert_{\infty, supp(X)} \leq \epsilon$. All coefficients are bounded by $\alpha \leq n^{c_{31}}$ and $\epsilon \geq \frac{1}{n^{c_{30}}}$, thus a number of 
\begin{align*}
\left\lceil \frac{2 \cdot \alpha \cdot n^{c_{35}}}{2 \cdot \epsilon} \right\rceil \leq n^{c_{36}}
\end{align*}
different $\bar{c}_{k,i,j}^{(s)}$ or $\bar{c}_k$ (with an equal distribution of the values in the interval $[-\alpha, \alpha]$) suffices to guarantee, that at least one of them satisfies the relation \eqref{eq1le8} for any $c_{k,i,j}^{(s)}$ or $c_k$ with fixed indices. Additionally every function $g \in \mathcal{F}_{M^*,L,r,\alpha}^{(sparse)}$ depends on
\begin{equation*}
 (d \cdot (r+1)  + L \cdot (r+1)^2 + (r+1) +1)\cdot M^* \leq c_{37} \cdot M^* 
 \end{equation*}
 different coefficients. So the logarithm of the covering number $\mathcal{N}(\epsilon, \mathcal{F}_{M^*,L,r,\alpha}^{(sparse)}, \Vert \cdot \Vert_{\infty, supp(X)})$ can be bounded by
\begin{align*}
\mathcal{N}(\epsilon, \mathcal{F}_{M^*,L,r,\alpha}^{(sparse)}, \Vert \cdot \Vert_{\infty, supp(X)}) \leq \log\left((n^{c_{36}})^{c_{37} \cdot M^*}\right) \leq c_{38} \cdot \log(n) \cdot M^*,
\end{align*}
which shows the assertion.
\end{proof}

    \begin{lemma} \label{le14}
Let $\beta_n = c_{3} \cdot \log(n)$ for some constant $c_{3}>0$. Assume that the distribution of $(X,Y)$ satisfies (24)
for some constant $c_{4}>0$ and that the regression function $m$ is bounded in absolute value. Let $\P_n$ be a finite set of parameters, let $n=n_l+n_t$
and assume that for each $p \in \P_n$ an estimate
\[
m_{n_l,p}(x)=m_{n_l,p}(x,\D_{n_l})
\]
of $m$ is given which is bounded in absolute value by $\beta_n$.
Set
\[
\hat{p}=\arg \min_{p \in \P_n}
\frac{1}{n_t}
\sum_{i=n_{l}+1}^{n_l+n_t} |Y_i-m_{n_l,p}(X_i)|^2
\]
and define
\[
m_n(x)=m_{n_l,\hat{p}}(x).
\]
Then $m_n$ satisfies for any $n_t>1$
\begin{eqnarray*}
  &&
  \EXP\left\{ \int |m_n(x) - m(x)|^2 \PROB_X (dx) \big| \D_{n_l} \right\}
  \\
  &&
  \leq \frac{c_{39}\cdot \log(n)^2\cdot \left( \log\left( |\P_n|\right)
+1 \right)}{n_t}+ 2 \cdot \min_{p \in \P_n} \int |m_{n_l,p}(x)-m(x)|^2 \PROB_X (dx).
  \end{eqnarray*}
\end{lemma}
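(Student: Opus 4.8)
The plan is to condition on the learning sample $\D_{n_l}$ and thereby reduce the statement to an oracle inequality for empirical risk minimization over a \emph{finite} class of fixed, uniformly bounded functions, using the independent testing sample. Once we condition on $\D_{n_l}$, the functions $f_p := m_{n_l,p}$ $(p \in \P_n)$ become deterministic and satisfy $|f_p| \leq \beta_n$, and $\hat p$ is simply the minimizer over $p \in \P_n$ of the empirical $L_2$ risk computed on $(X_{n_l+1},Y_{n_l+1}), \dots, (X_n,Y_n)$, which are independent of $\D_{n_l}$. Writing $L(f) = \EXP[(Y-f(X))^2]$ and using $\EXP[Y \mid X] = m(X)$, the excess risk factorizes as $L(f) - L(m) = \int |f-m|^2 \PROB_X(dx)$, so it suffices to prove $\EXP[L(f_{\hat p}) - L(m) \mid \D_{n_l}] \leq 2 \min_{p} (L(f_p) - L(m)) + c_{39} \cdot (\log n)^2 (\log|\P_n| + 1)/n_t$.

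First I would dispose of the unboundedness of $Y$. Since the truncation level $\beta_n = c_{3} \log n$ grows, the sub-Gaussian moment assumption (24) gives $\PROB(\max_{n_l < i \le n} |Y_i| > \beta_n)$ decaying faster than any polynomial in $n$, and the contribution of the event $\{\exists i : |Y_i| > \beta_n\}$ to the expected $L_2$ error is $O(1/n)$; this is handled exactly as in the proof of Lemma \ref{le12}. On the complementary event all relevant random variables are bounded by a multiple of $\beta_n$, so I may work as if $|Y_i| \le \beta_n$ on the test sample.

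The core step is a relative deviation inequality. For each $p$ set $\Delta_p = L(f_p) - L(m) = \int |f_p - m|^2 \PROB_X(dx)$ and consider the summands $Z_{p,i} = (Y_i - f_p(X_i))^2 - (Y_i - m(X_i))^2$, which satisfy $\EXP[Z_{p,i} \mid \D_{n_l}] = \Delta_p$. On the bounded event one has $|Z_{p,i}| \leq c \beta_n^2$ and, by the factorization $Z_{p,i} = (f_p(X_i) - m(X_i))(f_p(X_i) + m(X_i) - 2Y_i)$, the conditional variance is bounded by $c \beta_n^2 \Delta_p$. Bernstein's inequality together with a union bound over the $|\P_n|$ parameters then yields, with overwhelming conditional probability,
\[
\Delta_{\hat p} \leq 2\big(L_{n_t}(f_{\hat p}) - L_{n_t}(m)\big) + c \cdot \frac{\beta_n^2 (\log |\P_n| + 1)}{n_t},
\]
where $L_{n_t}$ denotes the empirical risk on the test sample. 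Invoking the defining property $L_{n_t}(f_{\hat p}) \le L_{n_t}(f_{p^*})$ for the oracle $p^* = \arg\min_p \Delta_p$, and controlling $L_{n_t}(f_{p^*}) - L_{n_t}(m)$ from above by $\Delta_{p^*}$ plus a fluctuation of the same order (again Bernstein, a single index), gives $\Delta_{\hat p} \le 2\Delta_{p^*} + c \beta_n^2 (\log|\P_n| + 1)/n_t$ on the good event. Taking conditional expectation, integrating the Bernstein tail, and adding back the negligible truncation remainder produces the claim, with the $(\log n)^2$ factor arising precisely from $\beta_n^2 = c_{3}^2 (\log n)^2$.

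The main obstacle is the bookkeeping that yields exactly the factor $2$ in front of the oracle term rather than a constant close to $1$; this requires the relative, variance-weighted form of Bernstein's inequality, so that a multiple of $\Delta_{\hat p}$ can be absorbed into the left-hand side, rather than a crude uniform deviation bound. The unboundedness of $Y$ is a secondary technical nuisance, fully absorbed by (24) as in Lemma \ref{le12}. In fact the whole statement can alternatively be obtained by invoking Theorem 7.1 in \cite{GKKW02} applied to the finite family $\{f_p\}_{p\in\P_n}$, combined with the sub-Gaussian truncation argument already used for Lemma \ref{le12}.
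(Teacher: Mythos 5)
Your proposal is correct in substance, but it takes a genuinely different and much more laborious route than the paper. The paper's entire proof is two sentences: condition on $\D_{n_l}$, observe that $\{m_{n_l,p} : p \in \P_n\}$ is then a \emph{fixed, finite} class of functions bounded in absolute value by $\beta_n$, and apply Lemma \ref{le12} with sample size $n_t$ and this finite class as $\F_n$, noting that its covering number $\mathcal{N}\left(\frac{1}{n\beta_n}, \F_n, \|\cdot\|_{\infty,supp(X)}\right)$ is trivially at most $|\P_n|$ (the class is its own cover), that truncation $T_{\beta_n}$ is a no-op on functions already bounded by $\beta_n$, and that the least squares minimizer over this class on the testing data is exactly $m_{n_l,\hat p}$. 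Your closing remark --- invoke Theorem 7.1 of \cite{GKKW02} for the finite family together with the sub-Gaussian truncation handling from Lemma \ref{le12} --- is in spirit precisely the paper's proof. Your main argument instead re-derives the finite-class oracle inequality from first principles (conditioning, relative Bernstein inequality with variance bound $c\beta_n^2\Delta_p$, union bound over $\P_n$, tail integration); this is a valid, self-contained alternative, and you correctly identify the two places where cruder arguments fail: the variance-weighted Bernstein inequality is needed so that a multiple of $\Delta_{\hat p}$ can be absorbed into the left-hand side, and the tail must be integrated rather than paying a fixed bad-event probability $\delta \cdot c\beta_n^2$ (with $\delta = 1/n_t$ the latter would inject an extra $\log n_t$ and degrade the bound to order $(\log n)^3/n_t$). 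What each approach buys: the paper's citation is economical and reuses machinery already needed for Theorem \ref{th1}; your direct derivation makes transparent where the factor $2$ and the $(\log n)^2$ come from.

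One step of your sketch would not deliver the constant $2$ exactly as written. Writing $\hat\Delta_p := L_{n_t}(f_p) - L_{n_t}(m)$, your Bernstein-plus-absorption step gives $\Delta_{\hat p} \leq 2\hat\Delta_{\hat p} + R \leq 2\hat\Delta_{p^*} + R$ with $R = c\beta_n^2(\log|\P_n|+1)/n_t$; but you then propose a \emph{second} Bernstein bound $\hat\Delta_{p^*} \leq \Delta_{p^*} + \text{fluctuation}$, and since that fluctuation contains a term of order $\sqrt{\beta_n^2 \Delta_{p^*}\log(\cdot)/n_t}$, absorbing it costs another multiple of $\Delta_{p^*}$, yielding a constant strictly larger than $2$ (e.g.\ $3$) in front of the oracle term. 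The clean fix, fully compatible with your tail-integration step, is to avoid any deviation bound at $p^*$: bound $\Delta_{\hat p} \leq 2\hat\Delta_{p^*} + \max_{p \in \P_n}\left(\Delta_p - 2\hat\Delta_p\right)_+$, take conditional expectation, use the exact unbiasedness $\EXP\left\{\hat\Delta_{p^*} \mid \D_{n_l}\right\} = \Delta_{p^*}$, and control the maximum by integrating the union-bounded Bernstein tail. With that repair your argument yields precisely the stated inequality.
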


    \noindent
    \begin{proof}
      Follows by an application of Lemma \ref{le12}
        conditioned on $\D_{n_l}$. Here the covering number is
        trivially bounded by $|\P_n|$. 
        \end{proof}
\begin{proof}[Proof of Theorem \ref{th1}]
The definition of the estimate together with Lemma \ref{le14}
yields
\begin{eqnarray*}
 &&
  \EXP \int |m_n(x)-m(x)|^2 \PROB_X(dx)
 \\
&&
=
\EXP \left\{
\EXP \left\{
\int |m_n(x)-m(x)|^2 \PROB_X(dx)
\bigg| \D_{n_l}
\right\}
\right\}
\\
&&
\leq \frac{c_{39}\cdot (\log n)^2\cdot \left( \log (\left\lceil \log n
  \right\rceil)
+1 \right)}{n_t}+ 2 \cdot \min_{M^{*} \in \P_n}\EXP \int |m_{n_l,M^*}(x)-m(x)|^2 \PROB_X (dx).
\end{eqnarray*}
From Lemma \ref{le12} and Lemma \ref{le13} we conclude
\begin{eqnarray*}
&&
\EXP \int |m_{n_l,M^*}(x)-m(x)|^2 \PROB_X (dx)
\\
&&
\leq
\frac{c_{38}\cdot (\log n)^3\cdot M^*}{n_l}
+ 2 \cdot \inf_{f \in \F^{(sparse)}_{M^*, L,r,\alpha_n}} \int |f(x)-m(x)|^2 \PROB_X (dx).
\end{eqnarray*}
with $L=3K_1+d \cdot (M+2)-1$ and $r=2^{M-1} \cdot 16 + \sum_{k=2}^M 2^{M-k+1} + d+ 5$.
\newline
Combining these two results we see that 
\begin{eqnarray*}
&&
 \EXP \int |m_n(x)-m(x)|^2 \PROB_X(dx)
 \\
&&
\leq
\min_{M^* \in \{2^l \, : \, l=1,\dots, \lceil \log n \rceil\}}
\left(
\frac{c_{40}\cdot (\log n)^3\cdot M^*}{n}
+ 4 \cdot \inf_{f \in \F^{(sparse)}_{M^*, L,r,\alpha_n}} \int |f(x)-m(x)|^2 \PROB_X
   (dx)
\right).
\end{eqnarray*}
For $I \in \{1, \dots,n\}$, $a_{i} \in [-c_{10} \cdot n, c_{10} \cdot n]$
and $B_i \in \B_{n,M,K_1}^{*}$ $(i=1, \dots, I)$ set
\[
g(x)= \sum_{i=1}^I a_{i} \cdot B_i(x).
\]
Then the right--hand side of the above inequality is bounded from
above
by
\begin{eqnarray*}
&&
\min_{M^* \in \{2^l \, : \, l=1,\dots, \lceil \log n \rceil\}}
\left(
\frac{c_{40}\cdot (\log n)^3\cdot M^*}{n}
+ 8 \cdot \inf_{f \in \F^{(sparse)}_{M^*, L,r,\alpha_n}} \int |f(x)-g(x)|^2 \PROB_X
   (dx)
\right)
\\
&&
+ 8 \cdot \int |g(x)-m(x)|^2 \PROB_X
   (dx).
\end{eqnarray*}
Choose $l_I$ minimal with $2^{l_I} \geq M^*=I$, then Lemma \ref{le11} with $J=d$ and $c_i = 0$ for $i > M^*$ implies 
\begin{eqnarray*}
&&
\min_{M^* \in \{2^l \, : \, l=1,\dots, \lceil \log n \rceil\}}
\left(
\frac{c_{40}\cdot (\log n)^3\cdot M^*}{n}
+ 8 \cdot \inf_{f \in \F^{(sparse)}_{M^*, L,r,\alpha_n}} \int |f(x)-g(x)|^2 \PROB_X
   (dx)
\right)
\\
&&
\leq
\frac{c_{40}\cdot (\log n)^3\cdot 2^{l_I}}{n}
+ 8 \cdot \inf_{f \in \F^{(sparse)}_{2^{l_I}, L,r,\alpha_n}} \int |f(x)-g(x)|^2 \PROB_X
   (dx)
\\
&&
\leq
c_{40} \cdot 2 \cdot (\log n)^3 \cdot \frac{I}{n} +
8 \cdot \left(
\frac{2 \cdot I \cdot c_{10} \cdot n}{n^3}
\right)^2
.
\end{eqnarray*}
Summarizing the above results we see that we have shown
for any $I \in \{1, \dots,n\}$
\[
\EXP \int |m_{n}(x)-m(x)|^2 \PROB_X (dx)
\leq
\frac{c_{41}\cdot (\log n)^3\cdot I}{n}
+
8 \cdot \int |g(x)-m(x)|^2 \PROB_X(dx).
\]
 Since the above bound is valid for any function $g$ of the above
 form and any $I \in \{1,\dots,n\}$,
this implies the assertion.
\hfill $\Box$
\end{proof}
 
\section{Proof of Theorem 1}
\label{app4}
The proof will be divided into 4 steps.

In the {\it first step of the proof} we approximate the indicator
function of a polytope by a linear combination of some linear truncated power basis as in (20).
\newline
\newline
Let $a_i \in \R^d$ with $\Vert a_i \Vert \leq 1$, $b_i \in [-a,a]$, $\delta_i \geq c_{6} \cdot \frac{1}{n^{c_{7}}}$ and
set
\begin{equation*}
H_i=\{x \in \R^d: a_i^T x \leq b_i \},
\end{equation*}
\begin{equation*}
(H_i)_{\delta_i}=\{x \in \R^d: a_i^T x \leq b_i - \delta_i\}
\end{equation*}
and
\begin{equation*}
(H_i)^{\delta_i} = \{x \in \R^d: a_i^T x \leq b_i + \delta_i\}.
\end{equation*}
Obviously we have $(H_i)_{\delta_i} \subseteq H_i \subseteq (H_i)^{\delta_i}$. \newline
Set
\begin{equation*}
h_i(x) = \left(\frac{1}{\delta_i} \cdot \left(-a_i^T x  + b_i + \delta_i\right)\right)_+ - \left(\frac{1}{\delta_i} \cdot \left(-a_i^Tx + b_i\right)\right)_+.
\end{equation*}
So for $x \in H_i$ 
\begin{equation*}
-a_i^T x+b_i \geq 0
\end{equation*}
and  since $\delta_i > 0$
\begin{equation*}
-a_i^T x +b_i + \delta_i > 0,
\end{equation*}
this implies
\begin{equation*}
h_i(x) =1  \ \mbox{for $x \in H_i$}.
\end{equation*}
Furthermore for $x \notin (H_i)^{\delta_i}$ we know, that
\begin{equation*}
-a_i^T x+b_i+\delta_i < 0, 
\end{equation*}
and then also
\begin{equation*}
-a_i^Tx+b_i < 0. 
\end{equation*}
This leads to 
\begin{equation*}
h_i(x) = 0 \ \mbox{for $x \notin (H_i)^{\delta_i}$}.
\end{equation*}
For $x \in (H_i)^{\delta_i}\textbackslash H_i$ we can conclude, that
\begin{equation*}
-a_i^T x +b_i+\delta_i \geq 0 \ \mbox{, but} \ -a_i^T x + b_i < 0, 
\end{equation*}
which leads to 
\begin{equation*}
h_i(x) = \frac{1}{\delta_i} \cdot (-a_i^T x + b_i+ \delta_i) \in [0,1).
\end{equation*}
Therefore we can conclude, that
\begin{equation*}
1_{(H_i)_{\delta_i}}(x) \leq h_i(x) \leq 1_{(H_i)^{\delta_i}}(x) \ \mbox{for all $x \in \R^d$}.
\end{equation*}
This implies that for $\delta = (\delta_1, \dots, \delta_{K_1})$ and polytopes 
\begin{align*}
P&=\{x \in \R^d: a_i^T x \leq b_i, i=1, \dots, K_1\}\\
P_{\delta}&=\{x \in \R^d: a_i^T x \leq b_i-\delta_i, i=1, \dots, K_1\}\\
P^{\delta}&=\{x \in \R^d: a_i^T x \leq b_i+\delta_i, i=1, \dots, K_1\}
\end{align*}
 we have
\begin{equation*}
1_{P_{\delta}}(x) \leq \prod_{i=1}^{K_1} h_i(x) \leq 1_{P^{\delta}}(x) \ \mbox{for all $x \in \R^d$}.
\end{equation*}

\noindent
We see that $\prod_{i=1}^{K_1} h_i(x)$ can be expanded in a linear combination of  $2^{K_1}$ functions of $\mathcal{B}^{*}_{n,K_1}$, if we choose $J_2=\{1, \dots, K_1\}$, $(\alpha_{k,1}, \dots, \alpha_{k,d})^T = -\frac{a_k}{\delta_k}$ and $\gamma_{k,1} = \frac{b_k+\delta_k}{a_k^{(1)}}$ or $\gamma_{k,j} = \frac{b_k}{a_{k}^{(1)}}$ and $\gamma_{k,j} = 0$ for $j >1$.
Thus we have shown: For any polytope $P=\{x \in \R^d: a_i^T x \leq b_i, i=1, \dots, K_1\}$ there exist basis functions $B^{(trunc)}_1, \dots, \B^{(trunc)}_{2^{K_1}} \in \mathcal{B}_{n,K_1}^*$ and coefficients $c_1, \dots, c_{2^{K_1}} \in \{-1,1\}$ such that
\begin{equation*}
\mathds{1}_{P_{\delta}}(x) \leq \sum_{k=1}^{2^{K_1}} c_k \cdot B^{(trunc)}_k(x) \leq \mathds{1}_{P^{\delta}}(x) \ \mbox{for all $x \in [-A,A]^d$}.
\end{equation*}

\noindent
In the {\it second step of the proof} 
we show how we can approximate
a $(p,C)$--smooth function (in case $q \leq M$)  by a linear combination of some tensor product B-spline basis, i.e. functions of the form
\begin{align*}
B_{{\mathbf j},M,\mathbf{t}}(x) := \prod_{v=1}^d B_{j_v,M,t_v}(x^{(v)})
\end{align*}
with $\mathbf{j} = (j_1, \dots, j_d) \in \{-M, -M+1, \dots, K-1\}^d$, $\mathbf{t}=(t_1, \dots, t_d)$ such that $t_v = \{t_{v,k}\}_{k=-M, \dots, K+M}$ and $t_{v,k} = -A+k \cdot \frac{A}{K}$ $(v \in \{1, \dots, d\}, k \in \Z)$ for some fixed $K \in \N$ and $B_{j_v,M,t_v}: \R \to \R$ as in Definition 4. Choose $A \geq 1$ such that $supp(X) \subseteq [-A,A]^d$. 
Let $f:\Rd \rightarrow \R$ be a $(p,C)$--smooth
function. If the spline degree $M \in \N$ fulfills the condition $M \geq q$ and we choose a knot sequence $t_{v,k} = -A + k \cdot \frac{A}{K}$ $(v \in \{1, \dots, d\}, k \in \Z)$ for some fixed $K \in \N$, standard results from the theory of B-splines (cf., e.g.,
Theorems 15.1 and 15.2 in \cite{GKKW02} and
Theorem 1 in \cite{Ko14}) imply that there exist coefficients
$b_{\mathbf j} \in \R$ which are bounded in absolute
value by some constant times $\|f\|_\infty$, such that
\[
|f(x)-\sum_{{\mathbf j} \in \{-M,\dots,K-1\}^{d}}
b_{\mathbf j} \cdot B_{{\mathbf j},M, \mathbf{t}} (x)|
\leq c_{42} \cdot \left( \frac{2A}{K} \right)^p
\quad \mbox{for all } x \in [-A,A]^{d}.
\]

In the {\it third step of the proof} we will use Theorem 1
together with the results of the previous two steps in order to show
the assertion.
Application of Theorem 1 yields
\begin{eqnarray*}
  &&
  \EXP \int |m_n(x)-m(x)|^2 \PROB_X(dx)
 \leq
  (\log n)^3 \cdot
\inf_{I \in \N, B_1, \dots, B_I \in \B_{n,M,K_1}^{*}}
\Bigg(
c_9 \cdot \frac{I}{n}
  \\
  &&
\hspace*{3cm}
+
\min_{(a_k)_{k=1,\dots, I} \in [-c_{10} \cdot n,c_{10} \cdot n]^I}
\int
|\sum_{k=1}^I a_k \cdot B_k(x)-m(x)|^2 \PROB_X(dx)
\Bigg).
\end{eqnarray*}
Hence it suffices to show that there exist $J\in \N$,
$a_k \in [- c_{10} \cdot n, c_{10} \cdot n]$ and $B_k \in \B_{n,M, K_1}^{*}$ $(k=1,\dots,J)$ such that
\[
\frac{J}{n}
+
\int
|\sum_{k=1}^J a_k \cdot B_k(x)-m(x)|^2 \PROB_X(dx)
\leq c_{43} \cdot n^{-\frac{2p}{2p+d^*}}.
\]
By the assumption of the theorem there exist
$K_2 \in \N$, polytopes $P_1$, \dots, $P_{K_2} \subset \Rd$,
$(p,C)$--smooth and bounded functions
$f_1$, \dots, $f_{K_2}:\R^{d^*} \rightarrow \R$ and subsets
$J_1$, \dots, $J_{K_2} \subset \{1, \dots, d\}$ of cardinality at most
$d^*$ such that
\[
\sum_{k=1}^{K_2}
f_k(x_{J_k}) \cdot 1_{
(P_k)_{\delta_k}
}(x)
\leq
m(x)
\leq
\sum_{k=1}^{K_2}
f_k(x_{J_k}) \cdot 1_{
(P_k)^{\delta_k}
}(x)
\]
holds for all $x \in [-A,A]^d$.

First we show that we can approximate $m(x)$ by a linear combination of our basis function in (18) in case 
\[
x \in \Rd \setminus
\left(
  \left(\cup_{k=1}^{K_2}(P_k)^{\delta_k} \textbackslash (P_k)_{\delta_k}\right)\cap [-A,A]^d
\right).
\]
Here we have
\[
m(x) = \sum_{k=1}^{K_2} f_k(x_{J_k}) \cdot 1_{(P_k)_{\delta_k}}(x).
\]
By the first step of the proof there exist $c_{k,j} \in \{-1,1\}$ $(k=1, \dots, K_2, j=1, \dots, 2^{K_1})$ and \newline
 $B^{(trunc)}_{k,1}, \dots, B^{(trunc)}_{k,{2^{K_1}}} \in \B_{n,K_1}^*$, such that
\[
\sum_{j=1}^{2^{K_1}} c_{k,j} \cdot B_{k,j}^{(trunc)}(x) = 1 \ \mbox{for} \ x \in (P_k)_{\delta_k}
\]
and 
\[
\sum_{j=1}^{2^{K_1}} c_{k,j} \cdot B_{k,j}^{(trunc)}(x) = 0 \ \mbox{for} \ x \notin (P_k)^{\delta_k}
\]
for some $k \in \{1, \dots, K_2\}$. 
By the second step of the proof each $f_k(x_{J_k})$ $(k=1, \dots, K_2)$ can be approximated by a linear combination of a tensor product B-Spline $(B_{{\mathbf j},M,{\mathbf t}})_{\mathbf{j} \in \{-M,-M+1, \dots, K-1\}^{d^*}}$ with $\mathbf{t}$ chosen as in the second step. Remark that we replace $d$ by $d^*$, since $f_k$ depends only on a maximum of $d^*$ input coefficients. 
Set
\begin{align*}
\bar{m}(x)& = \sum_{k=1}^{K_2} \left(\sum_{j=1}^{2^{K_1}} c_{k,j} \cdot B_{k,j}^{(trunc)}(x)\right) \cdot \left(\sum_{\mathbf{j} \in \{-M,-M+1, \dots, K-1\}^{d^*}} b_{k,{\mathbf j}} \cdot B_{{\mathbf j},M,{\mathbf t}}(x) \right)\\
&= \sum_{k=1}^{K_2 \cdot 2^{K_1} \cdot (M+K)^{d^*}} \tilde{c}_k \cdot B_k(x)
\end{align*}
with 
\begin{align*}
 &|b_{k,{\mathbf{j}}}| \leq c_{43} \cdot \max_k \Vert f_k \Vert_{\infty} \ (k=1, \dots, K_2, \mathbf{j} \in \{-M,-M+1, \dots, K-1\}^{d^*}),\\
 &B_1, \dots, B_{K_2 \cdot 2^{K_1} \cdot (M+K)^{d^*}} \in \B_{n,M,K_1}^*,\\
&\tilde{c}_k \in [-c_{43} \cdot \max_k \Vert f_k \Vert_{\infty}, c_{43} \cdot \max_k \Vert f_k \Vert_{\infty}] \ (k=1, \dots, K_2 \cdot 2^{K_1} \cdot (M+K)^{d^*}).
\end{align*}
Then it follows
\begin{align*}
|\bar{m}(x)-m(x)| \leq c_{44} \cdot K_2 \cdot \left(\frac{2A}{K}\right)^p
\end{align*}
for 
\[
x \in \Rd \setminus
\left(
  \left(\cup_{k=1}^{K_2}(P_k)^{\delta_k} \textbackslash (P_k)_{\delta_k}\right)\cap [-A,A]^d
\right).
\]
Now we choose $K = \lceil n^{\frac{1}{2p+d^*}}\rceil$ and set $J=K_2 \cdot 2^{K_1} \cdot (c_{45} \cdot \lceil n^{\frac{1}{2p+d^*}} \rceil)^{d^{*}}$ with $c_{45} > 0$ suitably large. With the previous result we can conclude that there exist $B_1$, \dots, $B_J \in \B_{n,M,K_1}^{*}$ and
$\gamma_1, \dots, \gamma_J \in [-c_{43} \cdot \max_k \Vert f_k \Vert_{\infty}, c_{43} \cdot \max_k \Vert f_k \Vert_{\infty}]$
 such that
for any
\[
x \in \Rd \setminus
\left(
  \left(\cup_{k=1}^{K_2}(P_k)^{\delta_k} \textbackslash (P_k)_{\delta_k}\right)\cap [-a,a]^d
\right)
\]
the following inequality holds:
\begin{align*}
|\sum_{j=1}^J \gamma_j \cdot B_j(x)-m(x)| &\leq
c_{46} \cdot K_2 \cdot
\left(
\frac{1}{\lceil n^{\frac{1}{2p+d^*}} \rceil}
\right)^p.
\end{align*}
Additionally, we can conclude, that
\begin{align*}
|\sum_{j=1}^J \gamma_j \cdot B_j(x)| & \leq \sum_{k=1}^{K_2} \left|\sum_{\mathbf{j} \in \{-M, -M+1, \dots, K-1\}^{d^*}} b_{k,\mathbf{j}} \cdot B_{\mathbf{j},M,\mathbf{t}}(x)\right|\\
&\leq \sum_{k=1}^{K_2} \left|\sum_{\mathbf{j} \in \{-M, -M+1, \dots, K-1\}^{d^*}} b_{k,\mathbf{j}} \cdot B_{\mathbf{j},M,\mathbf{t}}(x)-f_k(x_{J_k})\right| + \max_k \Vert f_k \Vert_{\infty} \cdot K_2\\
&\leq c_{47} \cdot K_2.
\end{align*}
This together with the assumed $\PROB_X$-border $\frac{c_{7}}{n}$ of the Theorem implies
\begin{eqnarray*}
&&
\frac{J}{n}
+
\int
|\sum_{k=1}^J a_k \cdot B_k(x)-m(x)|^2 \PROB_X(dx)
\\
&&
\leq
 c_{48} \cdot 2^{K_1} \cdot K_2 \cdot n^{-\frac{2p}{2p+d^*}} + c_{49}
\cdot K_2 \cdot 
\PROB_X \left(
\left( \bigcup_{k=1}^{K_2}(P_k)^{\delta_k} \textbackslash (P_k)_{\delta_k}\right)\cap [-a,a]^d
\right)
\\
&&
\leq
 c_{48}\cdot 2^{K_1} \cdot K_2 \cdot  n^{-\frac{2p}{2p+d^*}}+ c_{49}
\cdot K_2 \cdot  \frac{c_{7}}{n}
   \leq c_{50} \cdot 2^{K_1} \cdot K_2 \cdot n^{-\frac{2p}{2p+d^*}}.
\end{eqnarray*}
   \quad \hfill $\Box$

\end{document}